\newtheorem{theorem}{Theorem}[section]
\newtheorem{lemma}[theorem]{Lemma}
\newtheorem{definition}[theorem]{Definition}
\newtheorem{proposition}[theorem]{Proposition}
\newtheorem{corollary}[theorem]{Corollary}
\newtheorem{remark}[theorem]{Remark}
\newtheorem{process}{Process}
\algnewcommand\algorithmicforeach{\textbf{for each}}
\newcommand{\rank}{\mathrm{rank}}
\newcommand{\wh}{\widehat}
\newcommand{\wt}{\widetilde}
\newcommand{\eps}{\epsilon}
\newcommand{\N}{\mathcal{N}}
\newcommand{\R}{\mathbb{R}}
\renewcommand{\d}{\mathrm{d}}
\renewcommand{\varepsilon}{\epsilon}
\renewcommand{\tilde}{\wt}
\renewcommand{\hat}{\wh}
\newcommand{\diag}{\mathrm{diag}}
\newcommand{\poly}{\mathrm{poly}}
\newcommand{\norm}[1]{\left\|{#1}\right\|} %
\newcommand{\abs}[1]{\left\lvert#1\right\rvert}
\newcommand{\defeq}{:=}
\newcommand{\vB}{\mathbf{B}}
\newcommand{\vD}{\mathbf{D}}
\newcommand{\B}{\mathcal{B}}
\DeclareMathOperator*{\E}{{\mathbb{E}}}
\newcommand*\samethanks[1][\value{footnote}]{\footnotemark[#1]}
  \newlength{\defbaselineskip}
\titlespacing{\section}{0pt}{*1}{*0}
\titlespacing{\subsection}{0pt}{*0}{*0}
\def\setstretch#1{\renewcommand{\baselinestretch}{#1}}
\title{Pixelated Butterfly: Simple and Efficient Sparse Training for Neural Network Models}
  \author[$\dagger$]{Tri Dao\thanks{Equal contribution. Order determined by coin flip.}}
  \author[$\dagger$]{Beidi Chen\samethanks}
  \author[$\oplus$]{Kaizhao Liang}
  \author[$\diamond$]{Jiaming Yang}
  \author[$\S$]{Zhao Song}
  \author[$\ddagger$]{Atri Rudra}
  \author[$\dagger$]{Christopher R{\'e}}
  \affil[$\dagger$]{Department of Computer Science, Stanford University}
  \affil[$\oplus$]{SambaNova Systems, Inc}
  \affil[$\diamond$]{Department of Probability and Statistics, Peking University}
  \affil[$\S$]{Adobe Research}
  \affil[$\ddagger$]{Department of Computer Science and Engineering, University at Buffalo, SUNY\vspace{4pt}}
  \affil[ ]{\small{\texttt{\{trid,beidic\}@stanford.edu}, \texttt{kaizhao.liang@sambanovasystems.com}, \texttt{edwinyjmpku@gmail.com}, \texttt{zsong@adobe.com}, \texttt{atri@buffalo.edu}, \texttt{chrismre@cs.stanford.edu}}}
\author{%
  Tri Dao\thanks{Equal contribution. Order determined by coin flip.}\, $^1$, Beidi
  Chen\samethanks\, $^1$, Kaizhao Liang $^2$, Jiaming Yang $^3$, Zhao Song $^4$,
  Atri Rudra $^5$, Christopher R\'{e} $^1$ \\
  $^1$ Department of Computer Science, Stanford University \\
  $^2$ SambaNova Systems, Inc \\
  $^3$ Department of Probability and Statistics, Peking University \\
  $^4$ Adobe Research \\
  $^5$ Department of Computer Science and Engineering, University at Buffalo, The State University of New York\\
  \texttt{\{trid,beidic\}@stanford.edu},
  \texttt{kaizhao.liang@sambanovasystems.com}, \\
  \texttt{edwinyjmpku@gmail.com}, \texttt{zsong@adobe.com}, \\ \texttt{atri@buffalo.edu}, \texttt{chrismre@cs.stanford.edu}
}
\begin{document}

\maketitle
\begin{abstract}

Overparameterized neural networks generalize well but are expensive to train. Ideally, one would like to reduce their computational cost while retaining their generalization benefits. Sparse model training is a simple and promising approach to achieve this, but there remain challenges as existing methods struggle with accuracy loss, slow training runtime, or difficulty in sparsifying all model components.
The core problem is that searching for a sparsity mask over a discrete set of sparse matrices is difficult and expensive.
To address this, our main insight is to optimize over a continuous superset of sparse matrices with a fixed structure known as products of butterfly matrices.
As butterfly matrices are not hardware efficient, we propose simple variants of butterfly (block and flat) to take advantage of modern hardware.
Our method (Pixelated Butterfly) uses a simple fixed sparsity pattern based on flat block butterfly and low-rank matrices to sparsify most network layers (e.g., attention, MLP).
We empirically validate that Pixelated Butterfly is $3\times$ faster than butterfly and speeds up training to achieve favorable accuracy--efficiency tradeoffs.
On the ImageNet classification and WikiText-103 language modeling tasks, our sparse models train up to 2.5$\times$ faster than the dense MLP-Mixer, Vision Transformer, and GPT-2 medium with no drop in accuracy.

\end{abstract}

\section{Introduction}
\label{sec:intro}

Recent results suggest that overparameterized neural networks generalize well ~\citep{belkin2019reconciling}, but they are expensive to train~\citep{kaplan2020scaling}. An ideal model should use less compute and memory while retaining the generalization benefits of large models. The simplest and most popular direction is to sparsify these models. 
This idea has a long history in machine learning~\citep{lecun1990optimal} and has driven fundamental progress in other fields such as statistics~\citep{tibshirani1996regression}, neuroscience~\citep{foldiak2003sparse}, and signal processing~\citep{candes2006stable}. 
However, despite significant efforts, {\em speeding up sparse training in wall-clock time without degrading accuracy} remains an unresolved problem.

While sparse training is an active research area, it has not seen wide adoption.
First, it is difficult and expensive to find the sparsity pattern (the possible locations of the nonzeros) that could maintain the same level of accuracy of dense models.
Many methods (pruning~\citep{lee2018snip}, lottery tickets~\citep{frankle2018lottery}, hashing~\citep{kitaev2020reformer}) maintain dynamic sparsity masks.
However, the large overhead of evolving the sparsity mask can significantly slow down training and complicate the implementation. Indeed, these methods either require long cycles of pruning and retraining~\citep{frankle2018lottery}\footnote{State-of-the-art sparse training methods require up to 5$\times$ more training epochs compared to dense models~\citep{evci2020rigging}} or maintain expensive hash tables~\citep{chen2019slide}.
Second, most existing methods adopt unstructured sparsity, which may be efficient in theory, but do not take into account the efficiency of training hardware such as GPUs (optimized for dense computation)\footnote{An unstructured sparse model with 1\% nonzero weights can be as slow as a dense model~\citep{hooker2020hardware}}.
Finally, most methods target a single type of operation such as attention~\citep{child2019generating,zaheer2020bigbird}, whereas neural network (NN) models often compose different modules (attention, MLP), and in many applications the MLP layers are the main training bottleneck~\citep{wu2020lite}.

A better sparse training method should (i) be simple yet accurate, ideally with a static sparsity pattern, (ii) be fast by aligning sparsity pattern with available hardware, and (iii) have wide coverage of operators that applies to most NN layers.
There are three technical challenges.
First, we show that given a budget (e.g., total non-zeros in a matrix), it is NP-hard to find the optimal static sparsity pattern for a NN module to minimize the approximation error to the dense model. 
Second, for each sparsity pattern, we need to take into account hardware block-oriented efficiency (accessing each element in memory takes the same time as accessing the block of adjacent elements~\citep{cook2012cuda}, illustrated in~\cref{fig:hardware}). Common theoretical measures of efficiency (e.g., number of non-zeros, FLOPs) do not map well to modern hardware designed for block computation.
Last, every different NN module might require different sparsity patterns, which makes the problem even more complicated.

In our early exploration, we empirically study many sparsity patterns proposed in the literature to find those patterns that can closely approximate the dense model (Details in~\cref{sec:appx_ntk_algorithm}). We found that one sparsity pattern, namely butterfly + low-rank, consistently outperforms the others. This sparsity pattern closely connects to two lines of work in matrix structures: (i) sparse + low-rank matrices, which can capture global and local information~\citep{candes2011robust,udell2019big, scatterbrain}, and (ii) butterfly matrices~\citep{parker1995random,dao2019learning} whose products can tightly represent any sparse matrix~\citep{desa2018two, dao2020kaleidoscope}.
Using the fixed sparsity pattern from butterfly matrices, with the addition of a low-rank term, would address two of the three challenges above and yield a simple way to sparsify most NN layers (that are based on matrix multiply).

However, butterfly matrices are inefficient on modern hardware: (i) they are difficult to parallelize as they contain sequential products of many factors, and (ii) they are not hardware-friendly because the sparsity patterns are not block-aligned.
We propose two simple changes to make Butterfly efficient while retaining their favorable properties. Our proposal, Pixelated Butterfly (Pixelfly), combines flat block butterfly and low-rank matrices to yield a simple and efficient sparse training method.
\begin{itemize}[leftmargin=*,nosep,nolistsep]
\item We design an extremely simple sparsity pattern inspired by butterfly + low-rank matrices, which takes into account the hardware's block-oriented efficiency.
We propose block butterfly matrices that are efficient as their sparsity patterns align with hardware blocks.
We then introduce flat butterfly, a first-order approximation of butterfly with residual connection, that turns the original product of factors into a sum.
Flat butterfly matrix multiplications are easy to parallelize.
Pixelfly, uses the fixed sparsity pattern from flat \& block butterfly, along with a low-rank term, to produce a sparse network.
\item We prove that block butterfly retains the expressiveness of butterfly matrices and can thus tightly capture sparse matrices.
We show that flat butterfly matrices can closely approximate large classes of matrices that butterfly matrices capture.
Moreover, we demonstrate that flat block butterfly + low-rank matrices are strictly more expressive than sparse or low-rank matrices alone.
Finally, leveraging the recent advance in the neural tangent kernel (NTK), we adapt existing techniques to prove the global convergence of gradient descent on training sparse and wide ReLU networks.
\item Our proposed Pixelfly can be applied to all network modules that rely on matrix multiplication (e.g., linear layer, attention, MLP). To sparsify a full network, we simply need to allocate compute budget for each layer based on matrix and hardware block size.
\end{itemize}

\begin{figure}
\vspace{-1.2cm}
\captionsetup{font=small}
	\begin{center}
	\scriptsize
		\begin{tabular}{c}
			\includegraphics[width=0.89\linewidth]{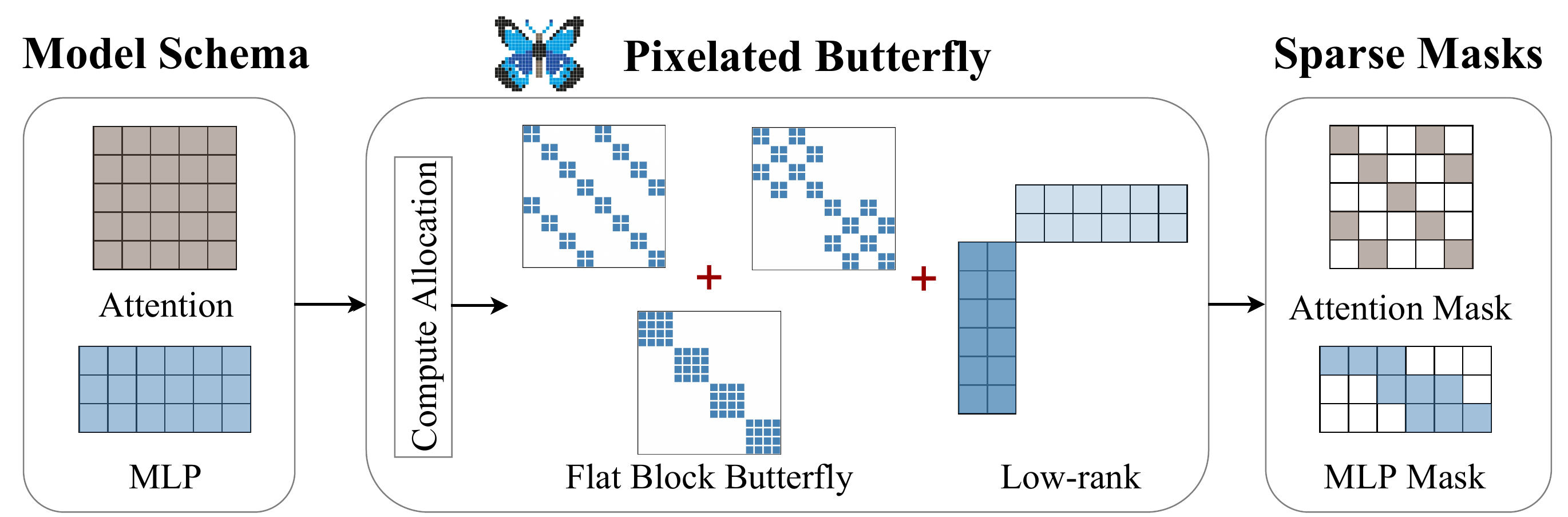}
		\end{tabular}
	\end{center}
		\vspace{-0.4cm}
	\caption{Pixelfly targets GEMM-based networks (networks whose computation is dominated by matrix multiply), which it views as a series of matrix multiplication. For each matrix multiply from Model Schema, it (1) allocates compute budget based on dimension and layer type, (2) the budget decides a mapping (hyper-parameter) to our proposed flat block butterfly sparsity patterns, (3) outputs a hardware-aware sparse mask. Note since the hardware is a block device, one memory access to an element in a block leads to the access to the full block.}
	\label{fig:tradeoff}
	\vspace{-0.5cm}
\end{figure}

We empirically validate that Pixelfly can speed up the training of models (Transformers, ViT, MLP-Mixer) without quality drop compared to baselines on a wide range of domains and tasks.
On CIFAR10/100 \& ImageNet classification, Pixelfly achieve 2.3$\times$ training time speedup compared to dense ViT, MLP-Mixer models, and other sparse training baselines, while preserving the same accuracy. On the WikiText-103 language modeling task, we speed up GPT-2 Medium training by 2.5$\times$ and achieve the same perplexity. On the Long Range Arena benchmark, we maintain the same accuracy as Transformer with $5.2\times$ faster training than a dense model, $2\times$ faster than Sparse transformer, and $6\times$ faster than non-block-aligned sparse methods (Reformer).
Our ablation studies highlight the importance of each of our components:
our butterfly sparsity improves on existing hand-crafted patterns by up to 2\% of accuracy on ImageNet,
our hardware-aware block-sparsity yields up to 5$\times$ speedup, and the balanced compute budget allocation brings 2$\times$ speedup compared to baselines that only sparsify attention.\footnote{Pixelfly code is available at \url{https://github.com/HazyResearch/pixelfly}}

\section{Problem Setting}
\label{sec:problem_formulation}
We first define the problem as sparse matrix approximation with a simple hardware cost model. Then we briefly introduce butterfly and sparse + low-rank matrices. 

\begin{wrapfigure}{}{0.25\textwidth}
  \captionsetup{font=small}
  \vspace{-2em}
  \centering
  \includegraphics[width=0.7\linewidth]{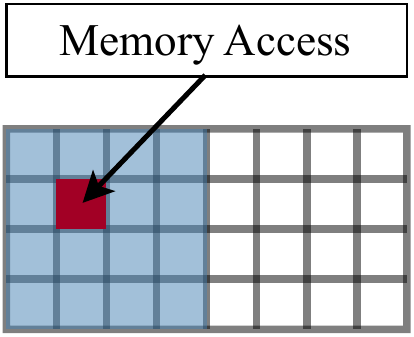}
  \vspace{-1em}
  \caption{Visualization of memory access for a hardware with block size 4: accessing the one (red) location means accessing the full $4\times4$ block (blue).}
  \label{fig:hardware}
  \vspace{-1em}
\end{wrapfigure}
\textbf{Problem Formulation:} We focus on the training of GEMM-based models, which can be viewed as a series of matrix multiplies (Given $A, B \in R^{n \times d}$, compute $C = AB^T$). Speeding up training while maintaining model quality can be mapped to finding an approximation procedure $f$ which reduces the time $T$ of computing $C$ while minimizing error $\mathbf{E}[\Vert f(A, B)-AB^T\Vert^2_F]$. Since the hardware is a block device, accessing any individual element within a block of memory is the same as accessing the full block~\citep{cook2012cuda} (\cref{fig:hardware}). A simple cost model of $T$ on hardware with block size $b$ would depend on the number of $b$-blocks being accessed and compute time
(formal definition in~\cref{app:problem_formulation}).
Our experiment (\cref{app:study}) reveals that when the non-zeros are grouped into blocks, picking the smallest block size supported by hardware can speed up operations by 10$\times$ compared to sparsity patterns that are not block-aligned.

\textbf{Butterfly, Sparse + Low-rank Matrices:} Butterfly matrices have been used in numerical linear algebra~\citep{parker1995random, li2015butterfly} and machine learning~\citep{mathieu2014fast, jing2017tunable, munkhoeva2018quadrature, dao2019learning, choromanski2019unifying}. They encode the recursive divide-and-conquer structure of the fast Fourier transform (FFT) algorithm~\citep{cooley1965algorithm} and provably capture any sparse matrix with near-optimal space and time complexity. Sparse and Low-rank structures have been studied in Robust PCA~\citep{candes2011robust}, graph clustering~\citep{jalali2011clustering}, and co-variance estimation~\citep{luo2011high}. Recently it has been adopted in attention approximation for Transformers~\citep{scatterbrain}. 
\vspace{-0.1cm}
\section{Butterfly matrices and Pixelated Butterfly}
\label{sec:butterfly}

Butterfly matrices~\citep{parker1995random, dao2019learning} are expressive
and theoretically efficient.
As they contain the set of sparse matrices, we choose to search for the sparsity
pattern in this larger class due to their fixed sparsity structure.
However, there are three technical challenges.
We highlight them here along with our approaches to address them:
\begin{enumerate}[leftmargin=*,nosep,nolistsep]
  \item Slow speed: butterfly matrices are not friendly to modern hardware as their
  sparsity patterns are not block-aligned, thus are slow.
  We introduce a variant of butterfly matrices, \emph{block butterfly}, which operate at the block level, yielding
  a block-aligned sparsity pattern.
  \item Difficulty of parallelization: the sequential nature of butterfly matrices as products
  of many factors makes it hard to parallelize the multiplication.
  We propose another class of matrices, \emph{flat butterfly} matrices, that are
  the first-order approximation of butterfly with residual connections.
  Flat butterfly turns the product of factors into a sum, facilitating parallelization.
  \item Reduced expressiveness of flat butterfly: even though flat butterfly
  matrices can approximate butterfly matrices with residual connections, they are
  necessarily high-rank and cannot represent low-rank matrices~\citep{udell2019big}.
  We propose to add a low-rank matrix (that is also block-aligned) to flat
  butterfly to increase their expressiveness.
\end{enumerate}
Combining these three approaches (flat \& block butterfly + low-rank), our
proposal (Pixelated Butterfly) is a very simple method to train sparse networks.

\subsection{Block Butterfly Matrices}
\label{sec:block_butterfly}

We propose a block version of butterfly matrices, which is more
hardware-friendly than the regular butterfly.
The regular butterfly matrices~\citet{dao2019learning, dao2020kaleidoscope} will be a special case of block butterfly with
block size $b = 1$.
We omit $b$ in the notation if $b = 1$.
\begin{definition} \label{def:bfactor}
  A \textbf{block butterfly factor} (denoted as $\vB_{k, b}$) of size $kb$ (where $k \ge 2$) and block size $b$ is a matrix of the form
    \(
        \vB_{k, b} = \begin{bmatrix}
            \vD_1 & \vD_2 \\ \vD_3 & \vD_4
        \end{bmatrix}
    \)
    where each $\vD_i$ is a $\frac{k}{2} \times \frac{k}{2}$ block diagonal
    matrix of block size $b$ of the form
    $\mathrm{diag} \left( D_{i, 1}, \dots, D_{i, k/2} \right)$ where
    $D_{i, j} \in \mathbb{R}^{b \times b}$.
    We restrict $k$ to be a power of 2.
\end{definition}
\begin{definition}
  A \textbf{block butterfly factor matrix} (denoted as $\vB_{k}^{(n, b)}$) of size $nb$ with stride $k$ and
     block size $b$ is a block diagonal matrix
     of $\frac{n}{k}$ (possibly different) butterfly factors of size $kb$ and
     block size $b$:
     \[
        \vB_{k}^{(n, b)} = \mathrm{diag} \left( \left[ \vB_{k, b} \right]_1, \left[ \vB_{k, b} \right]_2, \hdots, \left[ \vB_{k, b} \right]_\frac{n}{k} \right)
     \]
\end{definition}

\begin{definition} \label{def:bmatrix}
    A \textbf{block butterfly matrix} of size $nb$ with block size $b$ (denoted as $\vB^{(n, b)}$) is a matrix that can be expressed as a product of butterfly factor matrices:
    \(
        \vB^{(n, b)} = \vB_n^{(n, b)} \vB_{\frac{n}{2}}^{(n, b)} \hdots \vB_2^{(n, b)}.
    \)
    Define $\B_b$ as the set of all matrices that can be expressed in the form $\vB^{(n, b)}$ (for some $n$).
\end{definition}

\begin{figure}
\vspace{-1.2cm}
	\begin{center}
		\begin{tabular}{c}
			\includegraphics[width=0.87\linewidth]{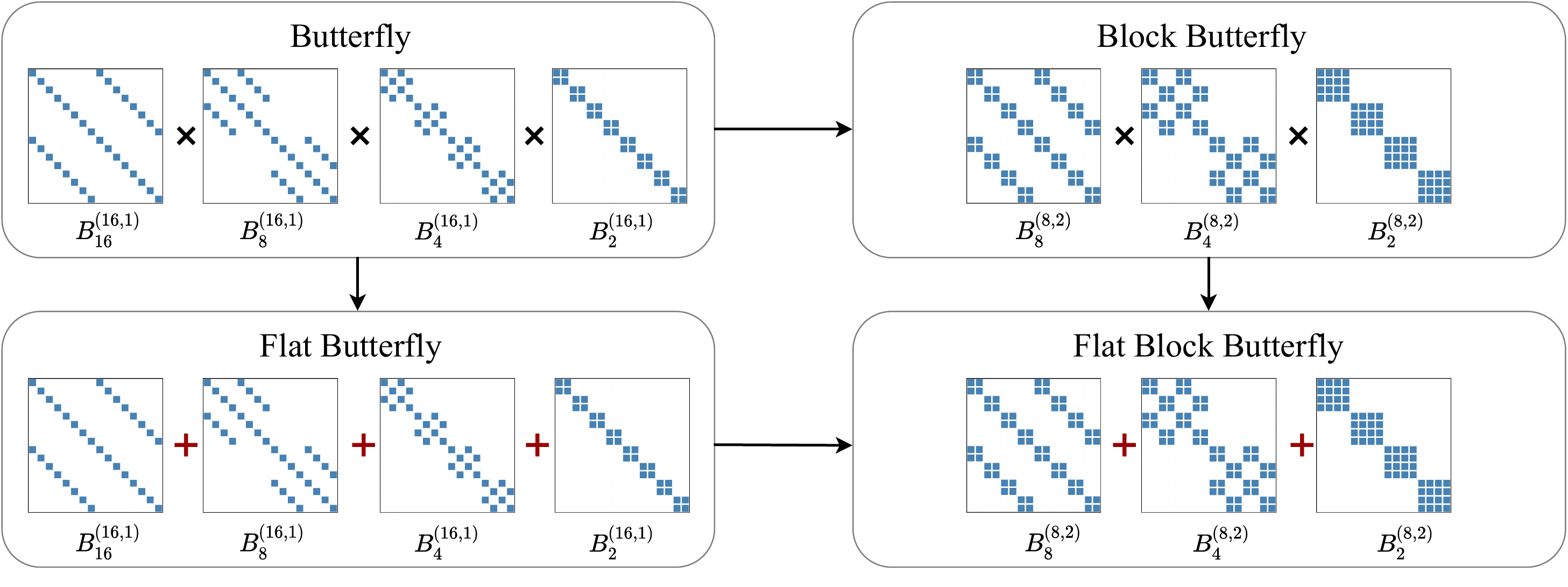}
		\end{tabular}
	\end{center}
	\captionsetup{font=small}
		\vspace{-0.8cm}
	\caption{Visualization of Flat, Block, and Flat Block butterfly.}
	\label{fig:tradeoff}
	\vspace{-0.3cm}
\end{figure}

\subsection{Flat butterfly matrices}
\label{sec:flat_butterfly}
In most applications of butterfly matrices to neural networks, one multiplies
the $O(\log n)$ butterfly factors.
However, this operation is hard to be efficiently implemented on parallel hardware (e.g., GPUs) due to
the sequential nature of the operation\footnote{Even with a very specialized
  CUDA kernel, butterfly matrix multiply ($O(n \log n)$ complexity) is only
faster than dense matrix multiply ($O(n^2)$ complexity) for large values of $n$
(around 1024)~\citep{dao2019learning}.}.
We instead propose to use a sum of butterfly factors that can approximate the
products of the factors.
This sum of factors results in one sparse matrix with a fixed sparsity pattern,
which yields up to 3$\times$ faster multiplication on GPUs (\cref{sec:appx_benchmark}).

Residual connections have been proposed to connect the butterfly
factors~\citep{vahid2020butterfly}.
We show that residual products of butterfly matrices have a first-order
approximation as a sparse matrix with a fixed sparsity.
Let $M$ be a matrix in the set of butterfly matrices $\B$.
In residual form, for some $\lambda \in \mathbb{R}$:
\begin{equation}
  \label{eq:residual_butterfly}
  M = (I + \lambda \vB_n^{(n)}) (I + \lambda \vB_{n/2}^{(n)}) \dots (I + \lambda \vB_2^{(n)}).
\end{equation}
Note that this form can represent the same matrices in the class of butterfly
matrices $\vB$, since any $\vB_k^{(n)}$ contains the identity matrix $I$.

Assuming that $\lambda$ is small, we can expand the residual and collect the
terms\footnote{We make the approximation rigorous in \cref{sec:analysis}.}:
\begin{equation*}
  M = I + \lambda (\vB_2^{(n)} + \vB_{4}^{(n)} + \dots + \vB_n^{(n)}) + \tilde{O}(\lambda^2).
\end{equation*}
\begin{definition}
  \label{def:flat_butterfly}
  \emph{Flat butterfly} matrices of maximum stride $k$ (for $k$ a power of 2)
  are those of the form $I + \lambda (\vB_2^{(n)} + \vB_{4}^{(n)} + \dots + \vB_k^{(n)})$.
\end{definition}
Flat butterfly matrices of maximum stride $n$ are the first-order approximation
of butterfly matrices in residual form (\cref{eq:residual_butterfly}).
Notice that flat butterfly of maximum stride $k$ are sparse matrices with $O(n \log k)$ nonzeros with a fixed
sparsity pattern, as illustrated in~\cref{fig:tradeoff}.
We call this sparsity pattern the \emph{flat butterfly} pattern.

\emph{Flat block butterfly} matrices are block versions of flat butterfly in~\cref{sec:flat_butterfly} (shown in~\cref{fig:tradeoff}).
We empirically validate that flat block butterfly matrices are up to 3$\times$
faster than block butterfly or regular butterfly (\cref{sec:appx_benchmark}).

Since flat butterfly matrices approximate the residual form of butterfly
matrices, they have high rank if $\lambda$ is small (\cref{sec:analysis}).
This is one of the motivations for the addition of the low-rank term in our
method.

\subsection{Pixelated Butterfly: Flat Block Butterfly + Low-rank for Efficient Sparse Training}
\label{sec:method}

We present Pixelated Butterfly, an efficient sparse model with a simple and fixed sparsity
pattern based on butterfly and low-rank matrices.
Our method targets GEMM-based neural networks, which are networks whose computation is dominated by general matrix multiplies (GEMM), such as Transformer and MLP-Mixer.
As a result, we can view the network as a series of matrix multiplies.

Given a model schema (layer type, number of layers, matrix dimension) and a
compute budget, Pixelated Butterfly has three steps: compute budget allocation per layer,
sparsity mask selection from the flat butterfly pattern, and model
sparsification.
We describe these steps in more details:
\begin{enumerate}[leftmargin=*,nosep,nolistsep]
  \item \textbf{Compute budget allocation}: based on our cost model
  (\cref{app:problem_formulation}), given the layer type, number of layers, and
  matrix dimension, we can find the density (fraction of nonzero weights) of
  each layer type to minimize the projected compute cost.
  Continuing our goal for a simple method, we propose to use a simple rule of
  thumb: allocate sparsity compute budget proportional to the compute fraction
  of the layer.
  For example, if the MLP layer and attention layers are projected to takes 60\%
  and 40\% the compute time respectively, then allocate 60\% of the sparsity compute budget
  to MLP and 40\% to attention.
  We verify in \cref{sec:appx_method_details} that this simple rule of thumb
  produces similar results to solving for the density from the cost model.

  \item \textbf{Sparsity mask selection}: given a layer and a sparsity compute budget for
  that layer, we use one-quarter to one-third of the budget for the low-rank
  part as a simple rule of thumb.
  We pick the rank as a multiple of the smallest supported block
  size of the device (e.g., 32) so that the low-rank matrices are also block-aligned.
  The remaining compute budget is used to select the sparsity mask from the flat
  block butterfly sparsity pattern: we choose the butterfly block size as the
  smallest supported block size of the device (e.g., 32), and pick the maximum
  stride of the flat block butterfly (\cref{def:flat_butterfly}) to fill up the
  budget.

  \item \textbf{Model sparsification}: The resulting sparse model is simply a
  model whose weights or attention follow the fixed sparsity mask chosen in
  step 2, with the additional low-rank terms (rank also chosen in step 2).
  In particular, we parameterize each weight matrix\footnote{We describe how
  to add sparse and low-rank for attention in \cref{sec:appx_method_details}} as:
  $W = \gamma B + (1 - \gamma) U V^\top$, where $B$ is a flat block butterfly
  matrix (which is sparse), $U V^\top$ is the low-rank component, and $\gamma$
  is a learnable parameter.
  We train the model from scratch as usual.
\end{enumerate}

Our method is very simple, but competitive with more complicated procedures that
search for the sparsity pattern (\cref{sec:appx_ntk_algorithm}).
We expect more sophisticated techniques (dynamic sparsity, a better approximation
of butterfly) to improve the accuracy of the method.

\section{Theoretical analysis}
\label{sec:analysis}

We characterize the expressiveness of the matrices used in our method.
In particular, we prove that block butterfly retains the expressiveness
of butterfly, and that flat butterfly can accurately approximate the
residual form of butterfly.
Moreover, flat block butterfly + low-rank (an instance of sparse +
low-rank) is more expressive than sparse or low-rank matrices alone.
Finally, we analyze the training convergence and generalization of networks
with sparse weights.
All proofs are in the Appendix.

\subsection{Expressiveness of Block Butterfly}
We first prove the expressiveness of block butterfly matrices.
\begin{theorem}
  \label{thm:block_butterfly}
  The set $\vB_{2b}$ of $n \times n$ block butterfly matrices with block size $2b$ contains
  the set $\vB_b$ of $n \times n$ block butterfly matrices of block size $b$.
\end{theorem}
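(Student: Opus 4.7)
The plan is constructive. Given $M = \vB_n^{(n,b)} \vB_{n/2}^{(n,b)} \cdots \vB_2^{(n,b)} \in \B_b$, I will exhibit a factorization of $M$ as a product of block butterfly factor matrices of block size $2b$, thereby placing $M$ in $\B_{2b}$. The count of factors already signals what must happen: the $\B_b$ representation uses $\log_2 n$ factors while a $\B_{2b}$ representation would use $\log_2 n - 1$. Hence exactly one pair of $\B_b$ factors must merge into a single $\B_{2b}$ factor.

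The key structural observation is that for each $k \geq 4$, any matrix expressible as $\vB_k^{(n,b)}$ is also expressible as $\vB_{k/2}^{(n/2, 2b)}$. This reduces to showing that a $\vB_{k,b}$ block (of physical size $kb$) is a special case of a $\vB_{k/2, 2b}$ block (of the same physical size $kb$). Both have the form $\begin{bmatrix} D_1 & D_2 \\ D_3 & D_4 \end{bmatrix}$; the only difference is that on the $b$ side each $D_i$ is block diagonal with $k/2$ blocks of size $b$, whereas on the $2b$ side each is block diagonal with $k/4$ blocks of size $2b$. Pairing consecutive $b$-blocks and forcing the off-diagonal $b$-sub-blocks inside each $2b$-block to zero exhibits the former as a restriction of the latter, giving the inclusion.

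The delicate step, and the main obstacle, is handling the rightmost factor $\vB_2^{(n,b)}$, which has no natural counterpart on the $\B_{2b}$ side. My plan is to absorb it into $\vB_4^{(n,b)}$. Both $\vB_4^{(n,b)}$ and $\vB_2^{(n,b)}$ are block diagonal at the $4b$ level — the first by definition, and the second by grouping consecutive $2b$-blocks into $4b$-blocks of the form $\begin{bmatrix} M_1 & 0 \\ 0 & M_2 \end{bmatrix}$. Their product is therefore block diagonal with $n/4$ blocks of size $4b$. Since $\vB_2^{(n/2, 2b)}$ is precisely the class of block diagonal matrices with $n/4$ arbitrary $4b \times 4b$ blocks (each $\vB_{2, 2b}$ having unrestricted sub-blocks $D_i' \in \mathbb{R}^{2b \times 2b}$), the product $\vB_4^{(n,b)} \vB_2^{(n,b)}$ fits the template $\vB_2^{(n/2, 2b)}$.

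Combining these, I would set $\tilde{\vB}_{k/2}^{(n/2, 2b)} \defeq \vB_k^{(n,b)}$ for $k \in \{8, 16, \ldots, n\}$ and $\tilde{\vB}_2^{(n/2, 2b)} \defeq \vB_4^{(n,b)} \vB_2^{(n,b)}$. Then $M = \tilde{\vB}_{n/2}^{(n/2, 2b)} \cdots \tilde{\vB}_2^{(n/2, 2b)}$ by construction, witnessing $M \in \B_{2b}$. I expect the merging of the two smallest-stride factors to be the only nontrivial part; the general $k \geq 4$ inclusion is a direct unpacking of the sparsity pattern.
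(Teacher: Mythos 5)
Your proposal is correct and follows essentially the same route as the paper's proof: reinterpret each stride-$k$ factor with $k \ge 8$ as a stride-$k/2$ factor at block size $2b$ (zeroing the off-diagonal $b$-sub-blocks inside each $2b$-block), and absorb $\vB_2^{(n,b)}$ into $\vB_4^{(n,b)}$ to form the single $\vB_2^{(n/2,2b)}$ factor, using that a $\vB_{2,2b}$ factor is an unrestricted $4b \times 4b$ block. The paper states these two steps tersely (with a pointer to a figure); your write-up supplies the justifications, including the factor-count observation that forces exactly one merge.
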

By a recursive argument, the set of block butterfly matrices whose block size is
a power of 2 contains the set of regular butterfly matrices.

\citet{dao2020kaleidoscope} show that butterfly matrices can tightly represent
all structured matrices, such as sparse matrices and many fast transforms.
As a result, block butterfly matrices can also represent those structured
matrices.
In particular,
\begin{corollary}
  \label{cor:block_butterfly_contains_sparse}
  For any constant block size $b$ that is a power of 2, any $nb \times nb$ spare
  matrix with $s$ nonzeros can be written as products of block butterfly
  matrices with block size $b$ and their transposes, with $O(s \log n)$
  parameters.
\end{corollary}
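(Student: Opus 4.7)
The plan is to reduce to the corresponding representation theorem for regular butterfly matrices by combining the set inclusion in~\cref{thm:block_butterfly} with the Kaleidoscope theorem of~\citet{dao2020kaleidoscope}.

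First, I would apply the Kaleidoscope theorem directly to the $nb \times nb$ sparse matrix $M$, viewed as a scalar matrix of size $N = nb$. That result says any $N \times N$ sparse matrix with $s$ nonzeros can be written as a product of $O(\log N)$ regular butterfly matrices of size $N$ and their transposes, using $O(s \log N)$ parameters in total. This gives $M = F_1 F_2 \cdots F_t$ where each $F_i$ is a regular butterfly matrix in $\B_1$ of size $nb$, or the transpose of one.

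Next, I would upgrade each factor from regular ($b=1$) butterfly to block butterfly with block size $b$. Applying~\cref{thm:block_butterfly} recursively $\log_2 b$ times (using that $b$ is a power of $2$) gives the chain $\B_1 \subseteq \B_2 \subseteq \cdots \subseteq \B_b$ at the fixed matrix dimension $nb \times nb$. Hence each $F_i \in \B_1$ already lies in $\B_b$, and each $F_i^\top$ is the transpose of an element of $\B_b$. The Kaleidoscope factorization therefore already realizes $M$ as a product of block butterfly matrices with block size $b$ and their transposes, with no change to the number or arrangement of factors.

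The one step requiring care, and the main (mild) obstacle, is the parameter count: a block butterfly factor nominally has $b^2$ scalars per $b \times b$ block, so a naive upgrade from $\B_1$ to $\B_b$ could multiply the parameter count by $b^2$. One must verify that the inclusion $\B_1 \subseteq \B_b$ from~\cref{thm:block_butterfly} can be realized without creating additional nonzero parameters — concretely, that the extra slots inside each $b \times b$ block can be held fixed at zero when embedding a regular butterfly factor into a block butterfly factor. Granted that, the total count remains $O(s \log N) = O(s \log(nb))$, and since $b$ is a constant this equals $O(s \log n)$, as claimed. Everything else is a direct composition of the two prior results.
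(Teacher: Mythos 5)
Your proposal is correct and follows essentially the same route as the paper's proof: invoke the Kaleidoscope representation theorem for regular butterfly matrices, then use the recursive inclusion $\B_1 \subseteq \B_2 \subseteq \cdots \subseteq \B_b$ from \cref{thm:block_butterfly} to reinterpret each factor as a block butterfly matrix of block size $b$. Your extra remark about the parameter count is a reasonable point of care the paper glosses over, though it is harmless either way since $b$ is a constant.
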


\subsection{Expressiveness of Flat Butterfly}

We now characterize how the flat butterfly matrices approximate butterfly
matrices.
In particular, assuming that each butterfly factor has bounded norm,
we show that flat-butterfly matrices can accurately approximate the residual
form of butterfly with error scaling as $\tilde{O}(\lambda^2)$.
\begin{theorem}
  \label{thm:flat_butterfly_approx}
  Let $M$ be a matrix of the form in \cref{def:flat_butterfly} where $k = n$, with
  $B_\mathrm{max} \defeq \max_{i} \norm{\vB_i^{(n)}}_F$ and
  $\abs{\lambda} \leq \frac{c \sqrt{\epsilon}}{\log n B_\mathrm{max}}$ for some
  constant $0 < c \leq \frac{1}{2}$ and some $\epsilon > 0$.
  Then
  \begin{equation*}
    \norm{M - \left( I + \lambda (\vB_2^{(n)} + \vB_{4}^{(n)} + \dots + \vB_n^{(n)}) \right)}_F \leq \epsilon.
  \end{equation*}
\end{theorem}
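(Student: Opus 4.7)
The plan is to expand the product $M$ multilinearly in its factors and show that everything beyond the zeroth- and first-order pieces is $O(\lambda^2)$ small. Concretely, distributing the product
\[
M = (I + \lambda \vB_n^{(n)})(I + \lambda \vB_{n/2}^{(n)}) \cdots (I + \lambda \vB_2^{(n)})
\]
over its $\log_2 n$ factors gives $M = \sum_{S} \lambda^{|S|} \prod_{k \in S} \vB_k^{(n)}$, where $S$ ranges over subsets of $\{2, 4, \ldots, n\}$ and the inner product is taken in decreasing order of $k$. The $|S| = 0$ and $|S| = 1$ contributions are exactly $I + \lambda(\vB_2^{(n)} + \cdots + \vB_n^{(n)})$, so the quantity I must bound is $E \defeq \sum_{|S| \geq 2} \lambda^{|S|} \prod_{k \in S} \vB_k^{(n)}$.

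To bound $\norm{E}_F$, I would apply the triangle inequality term by term and then use submultiplicativity of the Frobenius norm, $\norm{AB}_F \leq \norm{A}_F \norm{B}_F$. Since every butterfly factor matrix $\vB_k^{(n)}$ has Frobenius norm at most $B_\mathrm{max}$ by hypothesis, each $j$-fold product is bounded by $B_\mathrm{max}^j$, and there are $\binom{\log_2 n}{j}$ such subsets at level $j$. This yields
\[
\norm{E}_F \leq \sum_{j=2}^{\log_2 n} \binom{\log_2 n}{j} |\lambda|^j B_\mathrm{max}^j \leq \sum_{j=2}^{\infty} \frac{x^j}{j!} = e^x - 1 - x,
\]
where $x \defeq (\log_2 n)\, |\lambda|\, B_\mathrm{max}$ and I used $\binom{N}{j} \leq N^j / j!$.

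The hypothesis $|\lambda| \leq c\sqrt{\epsilon}/(\log n \cdot B_\mathrm{max})$ gives $x \leq c\sqrt{\epsilon}$, and the elementary inequality $e^x - 1 - x \leq \tfrac{1}{2} x^2 e^x$ combined with $c \leq \tfrac{1}{2}$ (which bounds $e^x$ by a small absolute constant for the relevant range of $\epsilon$) closes the estimate to $\norm{E}_F \leq \epsilon$. The main obstacle, such as it is, is purely bookkeeping: correctly separating the multilinear expansion into identity/first-order/higher-order pieces and tracking the constants so that the $c \leq \tfrac{1}{2}$ choice suffices. The heart of the argument is a Taylor-type bound on $e^x - 1 - x$, so no ideas beyond Frobenius submultiplicativity and the binomial inequality are needed.
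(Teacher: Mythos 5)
Your proposal is correct and follows essentially the same route as the paper's proof: expand the residual product multilinearly, isolate the order-$\ge 2$ terms, and bound them via the triangle inequality and Frobenius submultiplicativity using $\norm{\vB_i^{(n)}}_F \le B_\mathrm{max}$. The only (immaterial) difference is in summing the tail — you keep the $1/j!$ from $\binom{m}{j}\le m^j/j!$ and bound $e^x - 1 - x$, whereas the paper uses $\binom{m}{j}\le m^j$ and a geometric series $\sum_{j\ge 2}(c\sqrt{\epsilon})^j \le c^2\epsilon/(1-c\sqrt{\epsilon}) \le \epsilon$; both closes require the same implicit restriction that $\epsilon$ be at most a modest constant.
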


We show that flat butterfly matrices must have high-rank if $\lambda$ is small.
This is the motivation for the addition of the low-rank term in Pixelfly (\cref{sec:butterfly}).
\begin{theorem}
  \label{thm:flat_butterfly_rank}
  Let $M$ be as in \cref{eq:residual_butterfly}, with
  $B_\mathrm{max} \defeq \max_{i} \norm{\vB_i^{(n)}}_F$ and
  $\abs{\lambda} \leq \frac{c \sqrt{\epsilon}}{\log n B_\mathrm{max}}$ for some
  constant $0 < c \leq \frac{1}{4}$ and some $\epsilon > 0$.
  Let $B^\infty_\mathrm{max} = \max_i \norm{\vB_i}_\infty$.
  Assuming $B^\infty_\mathrm{max} \leq B_\mathrm{max}$.
  Then
  \begin{equation*}
    \mathrm{rank}(I + \lambda (\vB_2^{(n)} + \dots + \vB_n^{(n)})) = \Omega \left( \left( \frac{B_\mathrm{max}}{B^\infty_\mathrm{max}} \right)^2 \cdot \frac{\log n}{\epsilon \log \left( \frac{B_\mathrm{max}}{B^\infty_\mathrm{max}} \right)} \right).
  \end{equation*}
\end{theorem}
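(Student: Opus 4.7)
The plan is to lower-bound the rank of $M_{\mathrm{flat}} := I + \lambda S$ (with $S := \sum_{k} \vB_k^{(n)}$) via the classical inequality
\[
\rank(A) \;\geq\; \frac{\|A\|_F^2}{\|A\|_2^2},
\]
which follows from $\|A\|_F^2 = \sum_i \sigma_i(A)^2 \leq \rank(A) \cdot \|A\|_2^2$. The task thus reduces to lower-bounding $\|M_{\mathrm{flat}}\|_F^2$ and upper-bounding $\|M_{\mathrm{flat}}\|_2$, and then forming the ratio at the extremal value of $|\lambda|$.

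For the Frobenius lower bound, I would expand $\|I + \lambda S\|_F^2 = n + 2\lambda\,\mathrm{tr}(S) + \lambda^2 \|S\|_F^2$, apply $|\mathrm{tr}(S)| \leq \sqrt{n}\,\|S\|_F$ (Cauchy--Schwarz on the diagonal entries), and use the triangle inequality $\|S\|_F \leq (\log n)\,B_{\mathrm{max}}$ over the $\log n$ summands. Substituting the hypothesis $|\lambda| \leq c\sqrt{\epsilon}/((\log n) B_{\mathrm{max}})$ with $c \leq 1/4$ yields $\|M_{\mathrm{flat}}\|_F^2 \geq n - O(\sqrt{n\epsilon}) = \Omega(n)$. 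For the spectral upper bound, the key observation is that each butterfly factor matrix $\vB_k^{(n)}$ has at most $O(1)$ nonzero entries per row and per column (reducing to the $b=1$ case by \cref{thm:block_butterfly}), so that $\|\vB_k^{(n)}\|_2 \leq \sqrt{\|\vB_k^{(n)}\|_1 \|\vB_k^{(n)}\|_\infty} = O(B^\infty_{\mathrm{max}})$, using the row/column symmetry of the block butterfly sparsity pattern. Triangle inequality over the $\log n$ summands then gives $\|M_{\mathrm{flat}}\|_2 \leq 1 + |\lambda|(\log n)\,B^\infty_{\mathrm{max}}$.

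Combining the two bounds,
\[
\rank(M_{\mathrm{flat}}) \;\geq\; \frac{\Omega(n)}{\bigl(1 + |\lambda|(\log n) B^\infty_{\mathrm{max}}\bigr)^2},
\]
and substituting the extremal $|\lambda| = c\sqrt{\epsilon}/((\log n) B_{\mathrm{max}})$ reduces the denominator to $\bigl(1 + \Theta(\sqrt{\epsilon})\,B^\infty_{\mathrm{max}}/B_{\mathrm{max}}\bigr)^2$; simple algebra then yields the $(B_{\mathrm{max}}/B^\infty_{\mathrm{max}})^2/\epsilon$ part of the stated bound. The main obstacle is recovering the precise logarithmic correction $\log n / \log(B_{\mathrm{max}}/B^\infty_{\mathrm{max}})$: the elementary two-step argument above gives the correct polynomial dependence on both $B_{\mathrm{max}}/B^\infty_{\mathrm{max}}$ and $\epsilon$, but sharpening it likely requires either a Matrix-Bernstein-type estimate on $\|S\|_2$ that exploits the essentially disjoint off-diagonal supports of distinct $\vB_k^{(n)}$ (injecting extra $\sqrt{\log n}$ factors), or a Schatten-$p$ / higher-moment argument on $\mathrm{tr}((M_{\mathrm{flat}}^*M_{\mathrm{flat}})^p)$ with $p=\Theta(\log n / \log(B_{\mathrm{max}}/B^\infty_{\mathrm{max}}))$ chosen to balance the contributions of the identity term against those of the butterfly perturbation. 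Some care is also needed in the boundary regime $B_{\mathrm{max}} \approx B^\infty_{\mathrm{max}}$, where the denominator $\log(B_{\mathrm{max}}/B^\infty_{\mathrm{max}})$ degenerates and the statement must be interpreted trivially.
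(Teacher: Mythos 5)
Your plan is sound, but it takes a genuinely different route from the paper, and you have actually underestimated what it proves. The paper's proof is a two-line reduction to a black-box theorem of Alon on perturbed identity matrices \citep[Theorem 1.1]{alon2009perturbed}: every entry of $\lambda\sum_i \vB_i^{(n)}$ has magnitude at most $m|\lambda| B^\infty_\mathrm{max} \le c\sqrt{\epsilon}\, B^\infty_\mathrm{max}/B_\mathrm{max} \le 1/4$, so the matrix has diagonal entries of magnitude at least $3/4$ and off-diagonal entries at most $\epsilon' \defeq c\sqrt{\epsilon}\,B^\infty_\mathrm{max}/B_\mathrm{max}$, and Alon's bound $\rank \ge \Omega\bigl(\log n/(\epsilon'^2\log(1/\epsilon'))\bigr)$ is exactly what produces the $\log n/\log(B_\mathrm{max}/B^\infty_\mathrm{max})$ expression in the statement; the sparsity of the factors is never used. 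Your stable-rank argument $\rank(A)\ge \norm{A}_F^2/\norm{A}_2^2$ instead exploits that each factor has two nonzeros per row and per column to get $\norm{S}_2 \le 2(\log_2 n)B^\infty_\mathrm{max}$ and hence $\norm{\lambda S}_2 \le O\bigl(\sqrt{\epsilon}\,B^\infty_\mathrm{max}/B_\mathrm{max}\bigr) = O(\sqrt{\epsilon})$. Your closing worry about "recovering the logarithmic correction" is therefore misplaced: no Matrix-Bernstein or Schatten-norm refinement is needed, because for $\epsilon$ below an absolute constant you have $\norm{\lambda S}_2 < 1$, so $I + \lambda S$ is invertible and has rank exactly $n$ --- the strongest possible conclusion, which subsumes the stated bound wherever that bound is a meaningful assertion about an $n\times n$ matrix (you correctly observe that it degenerates as $B_\mathrm{max}/B^\infty_\mathrm{max}\to 1$; the paper polices the analogous boundary through Alon's requirement $\epsilon' \ge 1/(2\sqrt{n})$). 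Two minor points: the "simple algebra" extracting $(B_\mathrm{max}/B^\infty_\mathrm{max})^2/\epsilon$ from your denominator only kicks in when $\epsilon \gg (B_\mathrm{max}/B^\infty_\mathrm{max})^2$, otherwise the denominator is $\Theta(1)$ and your bound is simply $\Omega(n)$; and the factors in \cref{eq:residual_butterfly} already have block size $1$, so no appeal to \cref{thm:block_butterfly} is needed for the two-nonzeros-per-row count. In exchange for being elementary and quantitatively stronger, your argument leans on the factor sparsity; the paper's argument uses only entrywise bounds (so it would survive dense perturbations) and is the one that literally yields the displayed expression.
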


\subsection{Expressiveness of Flat Block Butterfly + Low-rank}

\citet{scatterbrain} prove that there is a natural class of input sequences (generated
by a clustering process) whose attention matrix can only be approximated well by
sparse + low-rank matrices, and not sparse or low-rank matrices alone.
We adapt their technique to show a similar result for the class of matrices we
use in Pixelfly.

We require an extra assumption on the clustering process compared
to~\citet{scatterbrain}: the elements in the input sequence form clusters with
the same size.
Then their attention matrix will have a large block diagonal component
well-approximated by flat butterfly, while the rest of the attention matrix is
of medium size and is well-approximated by low-rank.
\begin{theorem}[Informal]
  There exists a class of input sequences whose attention matrices are
  well-approximated by flat block butterfly + low-rank (a special case of sparse +
  low-rank) but not by sparse or low-rank alone.
\end{theorem}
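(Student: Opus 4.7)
The plan is to adapt the construction of \citet{scatterbrain}, who exhibit inputs whose attention matrix decomposes as a large block diagonal part plus a smaller residual, and to show that under the equal-cluster-size assumption the block diagonal part is captured by flat block butterfly (possibly after a fixed permutation of the input sequence) while the residual is captured by a low-rank term. Concretely, I would begin by formally describing the generative model: sample $m$ cluster centers $c_1,\dots,c_m$ separated by at least some margin, and for each center draw exactly $s$ points from a tight distribution around $c_i$, giving a sequence of length $n = ms$. By choosing the temperature of the softmax and the margin appropriately, the attention weights within a cluster are $\Theta(1/s)$ while between-cluster weights are $O(\varepsilon)$ for a prescribed $\varepsilon$.

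Next I would reorder the sequence so that points in the same cluster are contiguous; this is a fixed permutation that can be absorbed into the definition of the sparsity pattern. After the permutation, the attention matrix $A$ decomposes as $A = A_{\mathrm{diag}} + A_{\mathrm{off}}$, where $A_{\mathrm{diag}}$ is block diagonal with $m$ blocks of size $s \times s$. The first key step is to choose $s$ as a power of two equal to the block size $b$ (or a small multiple thereof), so that the block-diagonal pattern is exactly the stride-$2$ flat block butterfly pattern $\vB_2^{(m,s)}$ from \cref{def:bfactor}. Picking the butterfly coefficient $\lambda$ and the identity offset to match the within-cluster entries, $A_{\mathrm{diag}}$ is represented exactly by a flat block butterfly of small maximum stride, contributing $O(ns)$ nonzeros.

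For the off-diagonal part, I would argue (following the Scatterbrain/Nystr\"om-style argument) that because each block of $A_{\mathrm{off}}$ is a rank-one-like outer product driven by the cluster centers up to $O(\varepsilon)$ error, the entire $A_{\mathrm{off}}$ is $\varepsilon$-close in Frobenius norm to a rank-$m$ matrix, which can be written as $UV^\top$ with $U,V \in \mathbb{R}^{n\times m}$. Combining the two parts gives a flat block butterfly + low-rank approximation with total error $O(\varepsilon)$, matching the positive half of the statement.

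Finally, to prove that sparse-only and low-rank-only fail, I would mirror the two lower bounds in \citet{scatterbrain}. For low-rank: the $m$ diagonal blocks of $A_{\mathrm{diag}}$ are each essentially full-rank (their entries are $\Theta(1/s)$ on a common scale), so $A$ has rank $\Omega(n)$ up to error $\Omega(1)$, forcing any $\varepsilon$-approximation to have rank $\Omega(n)$. For sparse-only: the off-diagonal blocks $A_{\mathrm{off}}$ have $\Theta(n^2 - ns)$ entries of magnitude $\Theta(\varepsilon/m)$ whose mass in Frobenius norm is $\Omega(1)$, so any sparse approximation with $o(n^2)$ nonzeros omits enough of this mass to incur error $\Omega(1)$. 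The main obstacle is aligning the cluster block pattern with the flat butterfly pattern; this is why the equal-cluster-size assumption is essential, and is the one genuinely new ingredient relative to \citet{scatterbrain}. Once the alignment is set up, the approximation and lower bounds follow their arguments almost verbatim.
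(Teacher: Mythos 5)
Your proposal is correct and follows essentially the same route as the paper: the paper's own argument is a short sketch that defers to Theorem 1 of \citet{scatterbrain}, with the single new ingredient being exactly the one you identify — equal cluster sizes matched to the hardware block size $b$, so that the block-diagonal part of the attention matrix lands inside the flat block butterfly sparsity pattern, after which the low-rank approximation of the off-diagonal part and the two lower bounds (against low-rank alone and sparse alone) are inherited verbatim from Scatterbrain. Your plan actually spells out more of the decomposition and the lower-bound mechanics than the paper does; the only cosmetic differences are that the paper's generative process already orders elements cluster-contiguously (so no explicit permutation is needed) and states the sparse lower bound with error $\Omega(n)$ under its normalization.
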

The formal theorem statement and proof are in \cref{subsec:flat_butterfly_lr_proofs}.

\subsection{Convergence and Generalization of Sparse Networks}

There are natural questions about the training and generalization of sparse models: do they train similarly to dense models, is their generalization close to that of dense models, and can one successfully train them with gradient descent?
Our analysis theoretically shows that the answers are yes.

Our analysis relies on the neural tangent kernel (NTK)~\citep{jacot2018neural} of the network.
The NTK of two data points $x$ and $y$ measures the similarity between the gradient of the network when evaluated at $x$ compared to the gradient when evaluated at $y$.
This kernel governs the dynamics of the neural network output function $f(\cdot, \theta)$ throughout the training and its generalization.
We build on the great literature of NTK~\citep{ll18,dzps19,als19_dnn}.
The standard result~\citep{sy19} implies the following, if the NTK of the sparse model is close to the NTK of the dense model, then
(i) their training convergence speed is similar, (ii) their generalization
bounds are similar.
For completeness, we state the formal result in \cref{sec:appx_ntk}.

Though this result does not capture the possible regularization effect of sparsity, it shows that sparse models with small NTK difference from dense NTK preserve the generalization ability of dense models, a subject that has been studied more extensively, both from empirical and from theoretical perspectives.
We also show that training wide and sparse networks with gradient descent
converges globally, similar to the result for wide dense networks~\citep{dzps19,
  als19_dnn} in \cref{sec:gradient_flow}.

\section{Experiments}
\label{sec:experiments}

In this section, our goal is to demonstrate that an extremely simple fixed sparsity pattern can actually speed up sparse model training in wall-clock time without degrading model quality. Specifically, we empirically validate three claims that suggest Pixelfly can improve training speed of different model architectures while retaining model quality on a wide range of domains and tasks.
\begin{enumerate}[leftmargin=*,nosep,nolistsep]
  \item Section~\ref{exp:image}: for image classification tasks, we first show the empirical NTK of flat block butterfly + low-rank sparsity pattern is closer to dense NKT than other baselines. Then we demonstrate our superior end-to-end performance. Specifically, we achieve training speed up on both MLP-Mixer and ViT models by up to 2.3$\times$ wall-clock time with no drop in accuracy compared to the dense model and up to 4$\times$ compared to RigL, BigBird and other sparse baselines.
  \item Section~\ref{exp:lm}: for language modeling and text classification tasks, we can speed up GPT-2 small dense model training by 2.1$\times$, achieving a perplexity of 22.5 on wikitext-103. In addition, on Long Range Arena (LRA) benchmark, we maintain the same accuracy but have $5.2\times$ speed-up in training.
  \item Section~\ref{exp:ablation}: we show the necessity of block flat butterfly and low-rank structures, hardware-alignment and wide coverage of most network layers with ablation studies on these three components of Pixelfly.
 \end{enumerate}

\subsection{Image Classification}
\label{exp:image}

\begin{wrapfigure}{}{0.28\textwidth}
\captionsetup{font=small}
  \iftoggle{arxiv}{}{
  \vspace{-3em}
  }
  \centering
  \includegraphics[width=0.9\linewidth]{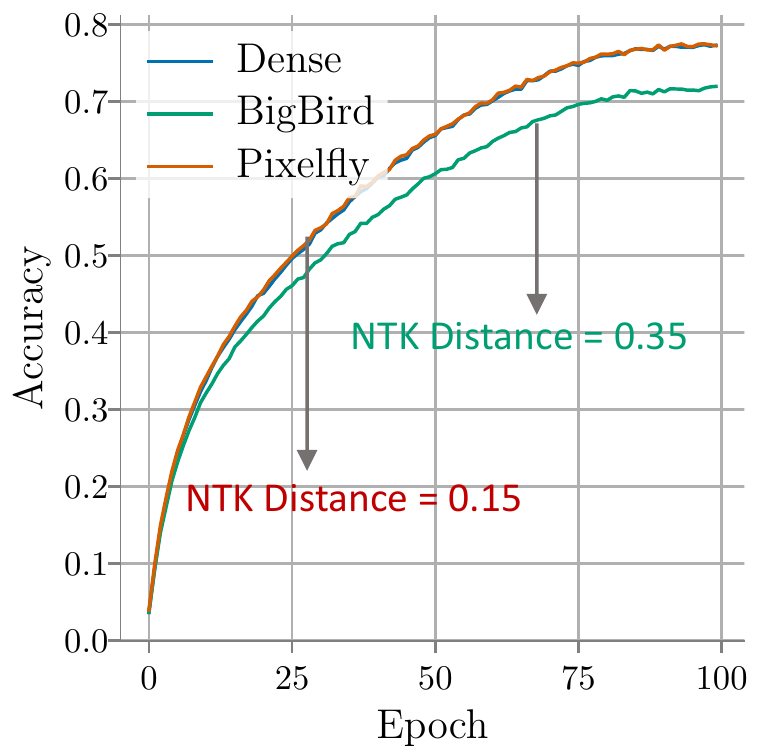}
    \vspace{-1em}
  \caption{NTK Comparison with Dense Model.}
  \label{fig:ntk}
  \vspace{-1em}
\end{wrapfigure}

We evaluate the quality and efficiency of Pixelfly through three metrics: (1) distance to training dynamic of the dense model: compare the distance between empirical NTK kernel\footnote{There is an emerging consensus that the NTK is an informative measure of how training and convergence behaviors of two models are similar.} of the models with candidate patterns, including BigBird~\citep{zaheer2020bigbird}, Butterfly~\citep{dao2020kaleidoscope}, and that of the dense model, (2) upstream accuracy: compare the accuracy and training time of the Pixelfly, the dense counterpart, and other baselines on same image classification tasks, (3) downstream accuracy: compare the accuracy of our pretrained Pixelfly and dense model fine-tuned on downstream tasks (\cref{app:throughput}). The empirical NTK of the model with flat block butterfly + low-rank, picked by Pixelfly, is closer to the NTK of the dense model. Pixelfly MLP-mixer and ViT models also retain the same top-1 accuracy of the original dense models while achieving up to 2.3$\times$ speed up.

\textbf{Setup:} We use three popular vision benchmarks, CIFAR-10/100~\citep{krizhevsky2009learning} and ImageNet~\citep{deng2009imagenet}. We choose recent popular Vision Transformer~\citep{dosovitskiy2020image}, T2T-ViT~\citep{yuan2021tokens} and MLP-Mixer~\citep{tolstikhin2021mlp} as representative base models. Their major computation bottlenecks are in different components, e.g. MLP only, attention, or both so we can evaluate the end-to-end applicability of Pixelfly more clearly.

\begin{wrapfigure}{}{0.67\textwidth}
    \vspace{-0.4cm}
\captionsetup{font=small}
    \caption{The performance of Pixelfly and ViT or MLP-Mixer on CIFAR10, CIFAR100 and ImageNet benchmarks. We measure the accuracy and the training time speedup (on ImageNet) compared to the dense model.\vspace{-0.3cm}}
\centering
\resizebox{\linewidth}{!}{
\noindent\begin{tabular}{@{}lccccccc@{}}
Model&\multicolumn{1}{c}{CIFAR10}&\multicolumn{1}{c}{CIFAR100}&\multicolumn{1}{c}{ImageNet}&\multicolumn{1}{c}{Speedup}\\
\midrule
Mixer-S/16 &86.4& 58.7&72.4& - \\
Pixelfly-Mixer-S/16 &89.8&62.9 &72.6& 1.7$\times$ \\
Mixer-B/16 &87.6& 59.5 &75.6& - \\
Pixelfly-Mixer-B/16 &90.6 &65.4 &76.3& 2.3$\times$ \\
\midrule
ViT-S/16& 89.5 & 65.1 & 77.7 & - \\
Pixelfly-ViT-S/16& 91.3 & 66.8 & 77.5 & 1.9$\times$ \\
ViT-B/16& 89.9 & 61.9 & 78.5 & - \\
Pixelfly-ViT-B/16& 92.2 & 65.1 & 78.6 & 2.0$\times$ \\
\bottomrule
\end{tabular}
}
    \vspace{-0.2cm}
\label{table:imagenet}
\end{wrapfigure}

\textbf{Empirical NTK:}
To characterize the training dynamic of the sparse networks, we compute the empirical NTK kernels for dense Vision Transformer on CIFAR-100. Then, we show the relative differences between kernels of models with different sparsity patterns and that of the dense one in~\cref{fig:ntk}. Specifically, we pick a popular sparsity pattern combination -- Bigbird pattern~\citep{zaheer2020bigbird} for attention layer and random (magnitude-based sparsity at initialization equals to random) for MLP layer, as a representative baseline. The plot indicates that our designed pattern, flat block butterfly + low-rank is the closest one to that of the dense one among all the patterns. Hence, we expect them to enjoy the most benefits of their dense overparameterized counterparts in real tasks. More details on measuring empirical NTK are covered in the~\cref{app:ntk_exp}.

\begin{wrapfigure}{}{0.43\textwidth}
    \vspace{-0.4cm}
\captionsetup{font=small}
    \iftoggle{arxiv}{
      \caption{Comparison with a representative sparse training baseline RigL~\citep{evci2020rigging}.}
    }{
    \caption{Comparison with a representative sparse training baseline RigL~\citep{evci2020rigging}. \vspace{-0.2cm}}
    }
      \centering
          	\resizebox{\linewidth}{!}{
	\begingroup
	\setlength{\tabcolsep}{10pt}
	\renewcommand{\arraystretch}{1.15}
	\begin{tabular}{c||c|c}
    \specialrule{.15em}{.05em}{.05em}
    \multirow{1}{*}{ {\bf Model} } & \multicolumn{1}{c|}{\multirow{1}{*}{ImageNet (Acc)}}
                                   & \multicolumn{1}{c}{\multirow{1}{*}{Speedup}}\\
	\hline
    Mixer-S/32 & 58.56 & - \\
	\cline{1-3}
	RigL~\citep{evci2020rigging} &  56.10 & 0.8$\times$ \\
	\cline{1-3}
	Pixelfly (ours)& 59.61  & 2.1$\times$\\
	\specialrule{.15em}{.05em}{.05em}
	\end{tabular}
		\endgroup
	}\label{fig:rigl}
	    \vspace{-0.2cm}
\end{wrapfigure}

\textbf{Training from scratch:} We validate that Pixelfly trains up to 2.3$\times$ and 2.0$\times$ faster than dense MLP-Mixer and ViT models from scratch, with the same accuracy under the same setting (batch size, epochs). Specifically, we sparsify the models with Pixelfly and train them on three commonly used vision benchmarking datasets, CIFAR-10/100 and ImageNet. We measure their Top-1 accuracy wall-clock training time.
To summarize the general trend,~\cref{table:imagenet} highlights that our sparse vision models consistently retain the accuracy of their dense counterparts in terms of accuracy and achieve training-time speed-up.

Furthermore, we have discussed in~\cref{sec:intro} that current sparse training algorithms aim to dynamic search what could be good sparsity for efficient inference but do not speed up training in wall-clock time. But we still present the comparison results in~\cref{fig:rigl} for completeness. For a fair comparison, we conduct the experiment on Mixer-S/32 model for 100 epochs because RigL aims for sparsity on weights, while we aim for both weights \& attention. As expected, RigL does not speed up training (the pioneering work has unstructured sparsity and does not achieve speed up on GPU) but surprisingly Pixelfly outperforms both dense and RigL in terms of accuracy while achieving $2.1\times$ speedup.

\begin{wrapfigure}{}{0.53\textwidth}
  \iftoggle{arxiv}{}{
    \vspace{-0.3cm}
  }
\captionsetup{font=small}
    \caption{Comparison with representative sparse attention baselines.\vspace{-0.3cm}}
      \centering
          	\resizebox{0.95\linewidth}{!}{
	\begingroup
	\setlength{\tabcolsep}{10pt}
	\renewcommand{\arraystretch}{1.15}
	\begin{tabular}{c||c|c}
    \specialrule{.15em}{.05em}{.05em}
    \multirow{1}{*}{ {\bf Model} } & \multicolumn{1}{c|}{\multirow{1}{*}{ImageNet (Acc)}}
                                   & \multicolumn{1}{c}{\multirow{1}{*}{Speedup}}\\
	\hline
    T2T-ViT & 81.7 & - \\
	\cline{1-3}
	BigBird &  81.5 & 0.9$\times$ \\
	\cline{1-3}
	Sparse Transformer &  81.4 &  1.3$\times$ \\
	\cline{1-3}
	Pixelfly& 81.7  & 1.4$\times$\\
	\specialrule{.15em}{.05em}{.05em}
	\end{tabular}
		\endgroup
	}\label{fig:bigbird}
	\vspace{-0.3cm}
\end{wrapfigure}

Finally, we compare Pixelfly with BigBird and Sparse Transformer pattern. For a fair comparison, we choose T2T-ViT as the base model because its major bottleneck is on the T2T attention module (our baselines are efficient attention variants). We can see from~\cref{fig:bigbird} that Pixelfly is the only one that can maintain the accuracy and have actual speed up. Further more, Pixelfly speeds up T2T module (large attention) by 1.4$\times$ compare to dense.

\subsection{Language Modeling and Text Classification}
\label{exp:lm}

In this section, we aim to evaluate the effectiveness of Pixelfly in the text domain, on a language modeling task and Long Range Arena (LRA~\citep{tay2020long}) benchmarks. On WikiText-103~\citep{merity2016pointer}, Pixelfly achieves 22.5 perplexity, which is around the same perplexity as GPT-2 small~\citep{radford2019language} but trains 2.1$\times$ faster. On LRA, Pixelfly obtains almost the same accuracy as the full model but gains up to $5.2\times$ speed-up.

\textbf{Setup:} We use WikiText-103 for language modeling and LRA for classification tasks. We use GPT-2 small and vanilla Transformer as the base dense models. The computational bottleneck of GPT-2 small for moderate sequence length, e.g. 512, would be on both attention and MLP layers, while the bottleneck of transformer on LRA task is on attention since the benchmark is designed to evaluate models under long-context scenarios. 

\begin{wrapfigure}{}{0.5\textwidth}
    \vspace{-0.3cm}
\captionsetup{font=small}
    \caption{The performance of Pixelfly, BigBird and GPT-2-Small, Medium on WikiText-103. We measure the perplexity and the training speed up.\vspace{-0.3cm}}
      \centering
          	\resizebox{\linewidth}{!}{
	\centering
	\begingroup
	\setlength{\tabcolsep}{10pt}
	\renewcommand{\arraystretch}{1.15}
	\begin{tabular}{c||c|c}
    \specialrule{.15em}{.05em}{.05em}
    \multirow{1}{*}{ {\bf Model} } & \multicolumn{1}{c|}{\multirow{1}{*}{WikiText-103 (ppl)}}
                                   & \multicolumn{1}{c}{\multirow{1}{*}{Speedup}}\\
	\hline
    GPT-2-Small &  22.2 & - \\
	\cline{1-3}
	BigBird & 23.3  & 0.96$\times$\\
	\cline{1-3}
	Pixelfly& 22.5  & 2.1$\times$\\
	\hline
	    GPT-2-Medium &  20.9 & - \\
	\cline{1-3}
	BigBird & 21.5  & 1.1$\times$\\
	\cline{1-3}
	Pixelfly& 21.0  & 2.5$\times$\\
	\specialrule{.15em}{.05em}{.05em}
	\end{tabular}
		\endgroup
	}
	\vspace{-0.3cm}
	\label{table:gpt}
\end{wrapfigure}

\textbf{GPT-2-Small, Medium on WikiText-103:} We show training GPT-2-Small, Medium and its Pixelfly model from scratch on a commonly used NLP benchmarking dataset, wikiText-103. We measure their perplexity on that dataset, and our training speed up.
All setup and finetuning hyperparameters follow the ones in the original paper~\citep{radford2019language}. We present the results in~\cref{table:gpt}. It is not hard to see that Pixelfly models have great advantages in accuracy-efficiency tradeoffs since it maintains the same perplexity as the dense model but achieve up to 2.5$\times$ speed-up in training.

\begin{wrapfigure}{}{0.7\textwidth}
    \vspace{-0.4cm}
\captionsetup{font=small}
    \caption{The performance of Pixelfly, Reformer and vanilla transformer on Long-Range-Arena benchmarks. We measure the accuracy and training speed.\vspace{-0.3cm}}
      \centering
\resizebox{\linewidth}{!}{
	\centering
	\Huge
	\begingroup
	\setlength{\tabcolsep}{10pt}
	\renewcommand{\arraystretch}{1.1}
	\begin{tabular}{c||c|c|c|c|c|c|c}
    \specialrule{.15em}{.05em}{.05em}
    \multirow{1}{*}{ {\bf Model} }  &
    \multicolumn{1}{c|}{\multirow{1}{*}{ListOps}} &
    \multicolumn{1}{c|}{\multirow{1}{*}{Text}} & 
    \multicolumn{1}{c|}{\multirow{1}{*}{Retrieval}} & 
    \multicolumn{1}{c|}{\multirow{1}{*}{Image}} &
    \multicolumn{1}{c|}{\multirow{1}{*}{Pathfinder}}  & 
    \multicolumn{1}{c|}{\multirow{1}{*}{Avg}} &
    \multicolumn{1}{c}{\multirow{1}{*}{Speedup}}\\
	\hline
	\hline
	Transformer& 36.54& 63.12& 80.33 & 41.56 & \textbf{73.49} & 59.01 & -\\
	\cline{1-8}
	\hline
	Reformer& 36.85 & 58.12 & 78.36 & 28.30 & 67.95 & 53.90 & 0.8$\times$ \\
	Pixelfly& \textbf{37.65} & \textbf{66.78} & \textbf{80.55} & \textbf{42.35} & 72.01 & \textbf{59.86} & 5.2$\times$ \\
	\specialrule{.15em}{.05em}{.05em}
	\end{tabular}
		\endgroup
	}
	\label{table:lra}
	\vspace{-0.3cm}
\end{wrapfigure}
\textbf{Vanilla Transformer on LRA:} We compare vanilla transformer and its Pixelfly models trained from scratch on LRA benchmark. We measure the accuracy, throughput, and training time of both models. Each task has a different sequence length varying between 1024 and 4096.
We follow the implementation and experimental setting in~\citep{xiong2021nystr}. We compare the performance of Pixelfly against the dense transformer and report the results in~\cref{table:lra}. We also include the numbers of other baselines from the same repository in the appendix. We can see Pixelfly cause almost no drop in accuracy while achieving 5.2$\times$ speed-up in time.

\subsection{Ablation Study}
\label{exp:ablation}
We conduct ablation studies on each component of Pixelfly (Details in~\cref{app:study}). Specifically, we present (i) how flat block butterfly and low-rank affect the model quality, (ii) how different block size would affect the training speed, (iii) how budget allocation affects the end-to-end speed up.

\textbf{Necessity of Flat Block Butterfly and Low-rank:} (i) We apply different parameter allocation of flat block butterfly and Low-rank component in Pixelfly Mixer-S model on CIFAR-10 under the different density varying in [0.05, 0.1, 0.2]. We found that similar to what was reported in~\citep{scatterbrain}, using around $\frac{1}{4}$ budget on Low-rank and $\frac{3}{4}$ on flat block butterfly achieves the best accuracy. (ii) We also compare Pixelfly with baseline sparsity patterns and show it is $2.7\times$ faster than dense, $3\times$ faster than Butterfly, $3.2\times$ faster than BigBird under $10\%$ density.

\textbf{Block Size:} We study the accuracy-efficiency trade-off for flat block butterfly and random sparsity pattern with different block sizes from 1-32 (~\cref{table:throughput}). We found that first, under the same density, the same sparsity patterns covered with different block sizes could have a big difference in efficiency. 
Under the same block, the pattern with more locality can be more efficient. 
Last, the density can seem very small, but actually memory access could be up to $100\%$ of the matrix. 
Therefore, we always want to make full utilization of the smallest block size that the hardware (or compiler) supported.   

\textbf{Budget Allocation:} We sparsify different components of ViT-small separately, including attention and MLP. We show that their compute ratio is approximately $1:2$ , so if only sparsify one of them, the other one will be the bottleneck preventing end-to-end speed up. Therefore, it is necessary to have an algorithm that can sparsify all layers.

\vspace{-0.2cm}
\section{Related Work}
\textbf{Lottery Ticket Hypothesis.} Models proposed in our work can be roughly seen as a class of manually constructed lottery tickets. Lottery tickets~\citep{frankle2018lottery} are a set of small sub-networks derived from a larger dense network, which outperforms their parent networks. Many insightful studies~\citep{morcos2019one, orseau2020logarithmic, frankle2019stabilizing, frankle2020linear, malach2020proving, pensia2020optimal} are carried out to analyze these tickets, but it remains difficult to generalize to large models due to training cost. In an attempt, follow-up works~\citep{wang2020picking, tanaka2020pruning} show that one can find tickets without training labels. 
We draw inspiration from one of them, \citet{liu2020finding}, which uses the NTK to avoid using labels in sparsifying networks. Other recent works use specialized hardware to accelerate sparse training~\citep{Goli_2020_CVPR, raihan2020sparse}.

\textbf{Neural Pruning.} Our work is loosely related to neural network pruning. By iteratively eliminating neurons and connections, pruning has seen great success in compressing complex models.
Pioneering work~\citep{han2015deep,han2015learning} shows that pruning can produce significantly smaller and faster models for inference.
Subsequent methods~\citep{li2016pruning, NIPS2017_a51fb975, dong2017learning,  sanh2020movement, lagunas2021block, zhu2017prune} improve on the quality of the pruned models. 
While both our and the pruning methods aim to produce sparse models, we target training efficiency, whereas pruning mostly focuses on inference efficiency at the cost of sacrificing training speed.

\textbf{Overparameterized Models and NTK.} Our analysis for sparse model convergence relies heavily on recent advance in neural tangent kernel (NTK)~\citep{jacot2018neural}. NTK is a tool which has been widely used in analyzing overparameterized models' convergence~\citep{ll18,dzps19,als19_dnn,als19_rnn,sy19}, generalization~\citep{all19}, connection to data separability~\citep{os20}, and cost per iteration~\citep{bpsw21}). Deep Double Descent \citep{nakkiran2019deep, d2020triple} conjectures that the generalization error improves as the parameter count grows. It is not surprising that the community is racing to break the record of the largest parameter counts~\citep{radford2019language, brown2020language, dosovitskiy2020image, tolstikhin2021mlp, zhang2021cpm, naumov2019deep, jumper2021highly}. 

We provide extended related work in~\cref{app:related}.

\vspace{-0.2cm}
\section{Conclusion}
\label{sec:conclusion}

In our early exploration of many sparsity patterns with complex training procedures, we found that a simple pattern (butterfly + low-rank) consistently (though not always) performed among the best.
This motivated us to propose Pixelated Butterfly, a simple and efficient sparse training method.
In our quest for simplicity and efficiency, we have chosen to use fixed sparsity
that aligns with modern hardware, which was sufficient to yield wall-clock
training time speedup without sacrificing accuracy.
We are excited about several future directions.
Inspired by the remarkable success of model pruning for inference, it is possible that dynamic block sparse mask could be made efficient yet still accurate.
Our flat butterfly is a simple first order approximation of the rich class of butterfly matrices, and there could be more sophisticated approximations that retain more expressiveness.
Our method is a first step towards the goal of making sparse models train faster than dense models and make them more accessible to the general machine learning community.

\iftoggle{arxiv}{}{
\textbf{Ethics Statement.}
As the amount of data and model size grows, our work seeks to understand how to train those large models more efficiently by exploiting sparsity. This potentially connects to energy savings during large-model training. In addition, this allows the general community that has limited access to the computational resources to train and understand those foundation models. Our method is applicable to popular models such as MLP-based and Transformer-based architectures, which may improve a wide range of applications, each with their own potential benefits and harms. For example, making language modeling more efficient might simplify the process of spreading misinformation. Similarly, better image classification models might make automatic surveillance easier. To alleviate the above risks, we need to address application-specific issues like privacy, bias and discrimination, going beyond the accuracy metric we currently considered. Specifically, for image classification task, while our work partially addresses the issue of environmental cost, it does not address other issues such as fairness and bias in model and datasets.   

\textbf{Reproducibility Statement.} To facilitate the reproducibility of our algorithms and results, (i) we include a link to downloadable source code in supplementary materials, (ii) for our theoretical statements and results,  we include clear explanations of any assumptions and a complete proof of the claims from~\cref{app:problem_formulation} to~\cref{sec:gradient_flow}; for any datasets used in the experiments, a complete description of the data processing steps is in~\cref{sec:experiment_details}.
}

\subsubsection*{Acknowledgments}

We thank Laurel Orr, Xun Huang, Sarah Hooper, Sen Wu, Megan Leszczynski, and Karan Goel for their helpful discussions and feedback on early drafts of the paper.

We gratefully acknowledge the support of NIH under No.\ U54EB020405 (Mobilize), NSF under Nos.\ CCF1763315 (Beyond Sparsity), CCF1563078 (Volume to Velocity), and 1937301 (RTML); ONR under No.\ N000141712266 (Unifying Weak Supervision); ONR N00014-20-1-2480: Understanding and Applying Non-Euclidean Geometry in Machine Learning; N000142012275 (NEPTUNE); the Moore Foundation, NXP, Xilinx, LETI-CEA, Intel, IBM, Microsoft, NEC, Toshiba, TSMC, ARM, Hitachi, BASF, Accenture, Ericsson, Qualcomm, Analog Devices, the Okawa Foundation, American Family Insurance, Google Cloud, Salesforce, Total, the HAI-AWS Cloud Credits for Research program, the Stanford Data Science Initiative (SDSI), and members of the Stanford DAWN project: Facebook, Google, and VMWare. The Mobilize Center is a Biomedical Technology Resource Center, funded by the NIH National Institute of Biomedical Imaging and Bioengineering through Grant P41EB027060. The U.S.\ Government is authorized to reproduce and distribute reprints for Governmental purposes notwithstanding any copyright notation thereon. Any opinions, findings, and conclusions or recommendations expressed in this material are those of the authors and do not necessarily reflect the views, policies, or endorsements, either expressed or implied, of NIH, ONR, or the U.S.\ Government.
Atri Rudra’s research is supported by NSF grant CCF-1763481.

\bibliography{ref}
\bibliographystyle{plainnat}

\appendix
\newpage
\section{Problem Formulation}
\label{app:problem_formulation}

We formulate the problem of sparse model training as sparse matrix approximation with a simple hardware cost model (\cref{sec:problem_formulation}).

We first describe our simple cost model for sparse matrix multiplication to reflect the fact that parallel hardware such as GPUs are block-oriented~\citep{cook2012cuda,gray2017gpu}: accessing one single element from memory costs the same as accessing one whole block of elements.
We then formulate the sparse matrix approximation in the forward pass and the backward pass.
The cost model necessitates narrowing the sparsity pattern candidates to those that are block-aligned.

\paragraph{Cost model}
We model the time cost of an operation based on the number of floating point operations and memory access.
The main feature is that our cost model takes into account \emph{memory coalescing}, where accessing a memory location costs the same as accessing the whole block of $b$ elements around it (typically $b = 16 \text{ or } 32$ depending on the hardware).

Let $\mathrm{Cost}_\mathrm{mem}$ be the memory access cost (either read or write) for a block of $b$ contiguous elements.
Accessing any individual element within that block also costs $\mathrm{Cost}_\mathrm{mem}$ time.
Let $\mathrm{Cost}_\mathrm{flop}$ be the compute cost of a floating point operation.
Let $N_\mathrm{block mem}$ be the number of block memory access, and $N_\mathrm{flop}$ be the number of floating point operations.
Then the total cost of the operation is
\begin{equation*}
  \mathrm{Total cost} = \mathrm{Cost}_\mathrm{mem} \cdot N_\mathrm{block mem} + \mathrm{Cost}_\mathrm{flop} \cdot N_\mathrm{flop}.
\end{equation*}
This cost model is a first order approximation of the runtime on modern hardware (GPUs), ignoring the effect of caching.

\paragraph{Block-aligned sparsity pattern, Block cover, and Memory access cost}
As the memory access cost depends on the number of block of memory being accessed, we describe how the number of nonzero elements in a sparse matrix relates to the number of blocks being accessed.
We first define a \emph{block cover} of a sparse mask.
\begin{definition}
  A sparse mask $M \in \{ 0, 1 \}^{m \times n}$ is $(b_1, b_2)$-\emph{block-aligned} if for any index $i, j$ where $M_{ij} = 1$, we also have $M_{i'j'} = 1$ where:
  \begin{equation*}
     i' = b_1 \lfloor i / b_1 \rfloor + r_1, j' = b_2 \lfloor j / b_2 \rfloor + r_2 \text{ for all } r_1 = 0, 1, \dots, b_1 - 1 \text{ and } r_2 = 0, 1, \dots, b_2 - 1.
  \end{equation*}

  The $(b_1, b_2)$-\emph{block cover} of a sparse mask $M \in \{ 0, 1 \}^{m \times n}$ is the $(b_1, b_2)$-block-aligned mask $M' \in \{ 0, 1 \}^{m \times n}$ with the least number of nonzeros such that $M_{ij} \leq M'_{ij}$ for all $i, j$.
\end{definition}
We omit the block size $(b_1, b_2)$ if it is clear from context.

A sparse mask $M$ being $(b_1, b_2)$ block-aligned means that if we divide $M$ into blocks of size $b_1 \times b_2$, then each block is either all zeros or all ones.
To get the $(b_1, b_2)$-block cover of a sparse mask $M$, we simply divide $M$ into blocks of size $b_1 \times b_2$ and set each block to all ones if any location in that block is one.

For a sparse matrix with sparse mask $M$ on a device with block size $b$, the number of block memory access $N_\mathrm{block mem}$ is the number of nonzero blocks in its $(1, b)$-block cover $M'$ (assuming row-major storage).
This corresponds to the fact that to access a memory location on modern hardware (GPUs), the device needs to load a whole block of $b$ elements around that location.

\paragraph{Fast sparse matrices means block-aligned sparsity pattern}
For sparsity patterns that are not block-aligned, such as the random sparse pattern where each location is independently zero or nonzero, its $(1, b)$-block cover might increase the density by a factor of close to $b$ times (we show this more rigorously in the Appendix).
As memory access often dominates the computation time, this means that non block-aligned sparsity will often result is $b$ times slower execution than block-aligned ones.
In other words, exploiting hardware locality is crucial to obtain speed up.

Therefore, this cost model indicates that instead of searching over sparsity patterns whose total cost is less than some budget $C$, we can instead search over block-aligned patterns whose number of nonzeros is less than some limit $k$.
For our theoretical analysis, we consider sparsity patterns that are $(1, b)$-block-aligned.
In practice, since we need to access both the matrix and its transpose (in the forward and backward pass), we require the sparsity pattern to be both $(1, b)$-block-aligned and $(b, 1)$-block-aligned.
This is equivalent to the condition that the sparsity pattern is $(b, b)$-block-aligned.

\paragraph{Sparse matrix approximation in the forward pass}
We now formulate the sparse matrix approximation in the forward pass.
That is, we have weight matrix $A$ with input $B$ and we would like to sparsify $A$ while minimizing the difference in the output.
For easier exposition, we focus on the case where number of nonzeros in each row is the same.
 \begin{definition}[Forward regression]\label{def:sparse_mask_factorization_before:informal}
Given four positive integers $m \geq n \geq d \geq k \geq 1$, matrices $A \in \R^{m \times d}$ and $B \in \R^{d\times n}$. The goal is to find a $(1, b)$-block-aligned binary mask matrix $M\in \{0, 1\}^{m \times d}$ that satisfies
\begin{align*}
     \min_{M \in \{0,1\}^{m \times d} } & ~ \| A \cdot B - (A \circ M) \cdot B \|_1 \\
    \mathrm{s.t.} & ~ \| M_{i} \|_0 = k , \forall i \in [d]
\end{align*}
where $M_i$ is the $i$-th row of $M$.
\end{definition}

\paragraph{Sparse matrix approximation in the backward pass}
In the backward pass to compute the gradient wrt to the weight matrix $A$, we would like to sparsify the gradient $C B^\top$ while preserving as much of the gradient magnitude as possible.
\begin{definition}[Backward regression]\label{def:sparse_mask_factorization_after:informal}
Given four positive integers $m \geq n \geq d \geq k \geq 1$, matrices $B \in \R^{d \times n}$ and $C \in \R^{m \times n}$.
The goal is to find a $(1, b)$-block-aligned binary mask matrix $M \in \{0,1\}^{m \times d}$ such that
\begin{align*}
  \min_{M \in \{0,1\}^{m \times d} } & ~ \| C \cdot B^\top  - ( C \cdot B^\top ) \circ M \|_1 \\
  \mathrm{s.t.}& ~ \| M_i \|_0 = k , \forall i \in [d]
\end{align*}
where $M_i$ is the $i$-th row of $M$.
\end{definition}
Without making any assumptions, such problems are in general computationally hard \cite{fkt15,rsw16}.

\newpage
\section{Analysis of Butterfly Variants}
\label{sec:butterfly_proofs}

We present formal versions of theorems in~\cref{sec:analysis} regarding variants
of butterfly matrices.
We provide full proofs of the results here.

\subsection{Block Butterfly Analysis}
\label{subsec:block_butterfly_proofs}

\begin{proof}[Proof of \cref{thm:block_butterfly}]
  Let $M$ be an $n \times n$ block butterfly matrix with block size $b$.
  We want to show that $M$ also has a representation as an $n \times n$ block
  butterfly matrix with block size $2b$.

  By \cref{def:bmatrix}, $M$ has the form:
  \begin{equation*}
    M = \vB_{\frac{n}{b}}^{ \left( \frac{n}{b}, b \right)} \vB_{\frac{n}{2b}}^{ \left( \frac{n}{b}, b \right)} \hdots \vB_4^{ \left( \frac{n}{b}, b \right)} \vB_2^{ \left( \frac{n}{b}, b \right)}.
  \end{equation*}
  Notice that we can combine that last two terms to form a matrix of the form
  $\vB_2^{ \left( \frac{n}{2b}, 2b \right)}$ (see \cref{fig:tradeoff}).
  Moreover, other terms in the product of the form
  $\vB_{\frac{n}{2^ib}}^{ \left( \frac{n}{b}, b \right)}$ can also be written as
  $\vB_{\frac{n}{2^{i-1} 2b}}^{ \left( \frac{n}{2b}, 2b \right)}$ (see \cref{fig:tradeoff}).
  Thus $M$ also has the form:
  \begin{equation*}
    M = \vB_{\frac{n}{2b}}^{ \left( \frac{n}{2b}, 2b \right)} \vB_{\frac{n}{4b}}^{ \left( \frac{n}{2b}, 2b \right)} \hdots \vB_2^{ \left( \frac{n}{2b}, 2b \right)}.
  \end{equation*}
  In other words, $M$ is also an $n \times n$ block butterfly matrix with block
  size $2b$.

\end{proof}

\begin{proof}[Proof of \cref{cor:block_butterfly_contains_sparse}]
  \citet[Theorem 3]{dao2020kaleidoscope} states that any $n \times n$ sparse
  matrix with $s$ nonzeros can be represented as products of butterfly matrices
  and their transposes, with $O(s \log n)$ parameters.

  For a constant block size $b$ that is a power of 2, the set of $n \times n$ block butterfly
  matrices of block size $b$ contains the set of regular butterfly matrices by
  \cref{thm:block_butterfly}.
  Therefore any such $n \times n$ sparse matrix also has a representation has
  products of block butterfly matrices of block size $b$ and their transposes,
  with $O(s \log n)$ parameters.
\end{proof}

\subsection{Flat Butterfly Analysis}
\label{subsec:flat_butterfly_proofs}

We prove \cref{thm:flat_butterfly_approx}, which relates the first-order
approximation in the form of a flat butterfly matrix with the original butterfly
matrix.
\begin{proof}[Proof of \cref{thm:flat_butterfly_approx}]
  Let $n = 2^m$ and let $B_1, \dots, B_m \in \mathbb{R}^{n \times n}$ be the $m$
  butterfly factor matrices (we rename them here for simplicity of notation).

  Let
  \begin{equation*}
    E = \prod_{i=1}^m \left(I + \lambda B_i\right) - \left( I + \sum_{i=1}^m \lambda B_i \right).
  \end{equation*}
  Our goal is to show that $\norm{E}_F \leq \epsilon$.

  We first recall some properties of Frobenius norm.
  For any matrices $A$ and $C$, we have
  $\norm{A C}_F \leq \norm{A}_F \norm{C}_F$ and
  $\norm{A + C}_F \leq \norm{A}_F + \norm{C}_F$.

  Expanding the terms of the product in $E$, we have
  \begin{equation*}
    E = \sum_{i=2}^{m} \lambda^i \sum_{s \in [m], \abs{s} = i} \prod_{j \in s} B_j.
  \end{equation*}
  Using the above properties of Frobenius norm, we can bound $E$:
  \begin{align*}
    \norm{E}_F
    &\leq \sum_{i=2}^{m} \lambda^i \sum_{s \in [m], \abs{s} = i} \prod_{j \in s} \norm{B_j}_F \\
    &\leq \sum_{i=2}^{m} \lambda^i \sum_{s \in [m], \abs{s} = i} \prod_{j \in s} B_\mathrm{max} \\
    &= \sum_{i=2}^{m} \lambda^2 m^i \left( B_\mathrm{max} ^i \right) \\
    &= \sum_{i=2}^{m} (\lambda m B_\mathrm{max})^i \\
    &\leq \sum_{i=}^{m} \left(c \sqrt{\epsilon} \right)^i \\
    &\leq c^2 \epsilon \sum_{i=0}^{\infty} (c \sqrt{\epsilon})^i \\
    &\leq \frac{c^2\epsilon}{1 - c\sqrt{\epsilon}} \\
    &\leq \epsilon,
  \end{align*}
  where in the last step we use the assumption that $c \leq \frac{1}{2}$.
\end{proof}

We now bound the rank of the first-order approximation.
\begin{proof}[Proof of \cref{thm:flat_butterfly_rank}]
  Let $M^* = I + \sum_{i=1}^{m} \lambda B_i$.
  Note that any entry in $\sum_{i=1}^{m} \lambda B_i$ has absolute value at most
  \begin{equation*}
    m \lambda B^\infty_\mathrm{max} \leq \frac{c \sqrt{\epsilon} B^\infty_\mathrm{max}}{B_\mathrm{max}} \leq \frac{1}{4},
  \end{equation*}
  where we use the assumption that $B^\infty_\mathrm{max} \leq B_\mathrm{max}$
  and $c \leq \frac{1}{4}$.

  Thus any diagonal entry in $M^*$ has absolute value at least
  $1 - \frac{1}{4} = \frac{3}{4}$ and the off-diagonal entries are at most
  $\frac{c \sqrt{\epsilon}B^\infty_\mathrm{max}}{bm}$.

  \citet[Theorem 1.1]{alon2009perturbed} states that: there exists some $c > 0$
  such that for any real $M \in \mathbb{R}^{n \times n}$, if the diagonal
  elements have absolute values at least $\frac{1}{2}$ and the off-diagonal
  elements have absolute values at most $\epsilon$ where
  $\frac{1}{2 \sqrt{n}} \leq \epsilon \leq \frac{1}{4}$, then
  $\rank(M) \geq \frac{c \log n}{\epsilon^2 \log 1/\epsilon}$.

  Applying this theorem to our setting, we have that
  \begin{equation*}
    \rank(M^*) \geq \Omega \left( \left( \frac{B_\mathrm{max}}{B^\infty_\mathrm{max}} \right)^2 \cdot \frac{m}{\epsilon \log \left( \frac{B_\mathrm{max}}{B^\infty_\mathrm{max}} \right) }\right).
  \end{equation*}
  We just need to show that
  $\frac{B^\infty_\mathrm{max}}{B_\mathrm{max}} \geq \frac{1}{2 c \sqrt{\epsilon n}}$
  to satisfy the condition of the theorem.

  Indeed, we have that
  $1 \leq \frac{B_\mathrm{max}}{B^\infty_\mathrm{max}} \leq \sqrt{2n}$ as each
  $\norm{B_i}_0 \leq 2n$.
  Combining the two conditions on
  $\frac{B_\mathrm{max}}{B^\infty_\mathrm{max}}$, we have shown that
  $1 \leq \frac{B_\mathrm{max}}{B^\infty_\mathrm{max}} \leq 2 c \sqrt{\epsilon n}$.
  This concludes the proof.
\end{proof}

\subsection{Flat Block Butterfly + Low-rank Analysis}
\label{subsec:flat_butterfly_lr_proofs}

We show that flat butterfly + low-rank (an instance) of sparse + low-rank, is
more expressive than either sparse or low-rank alone.
We adapt the argument from~\citet{scatterbrain} to show a generative process
where the attention matrix can be well approximated by a flat butterfly +
low-rank matrix, but not by a sparse or low-rank alone.

We describe here a generative model of an input sequence to attention, parameterized by the inverse temperature $\beta \in \mathbb{R}$ and the intra-cluster distance $\Delta \in \mathbb{R}$.
\begin{process}
  \label{ex:generative}
  Let $Q \in \mathbb{R}^{n \times d}$, where $d\geq\Omega(\log^{3/2}(n))$, with every row of $Q$ generated randomly as follows:
  \begin{enumerate}[leftmargin=*,nosep,nolistsep]
    \item For $C = \Omega(n)$, sample $C$ number of cluster centers $c_1, \dots, c_C \in \mathbb{R}^{d}$ independently from $\mathcal{N}(0, I_d/\sqrt{d})$.
    \item For each cluster around $c_i$, sample $n_i = b$ number of elements around $c_i$, of the form $z_{ij} = c_i + r_{ij}$ for $j = 1, \dots, n_i$ where $r_{ij} \sim \mathcal{N}(0, I_d \Delta/\sqrt{d})$.
    Assume that the total number of elements is $n = c b$ and $\Delta\leq O(1/\log^{1/4} n)$.
  \end{enumerate}
  Let $Q$ be the matrix whose rows are the vectors $z_{ij}$ where $i = 1, \dots, C$ and $j = 1, \dots, n_i$.
  Let $A = Q Q^\top$ and let the attention matrix be $M_\beta = \exp(\beta \cdot A)$.
\end{process}

\begin{theorem}
  \label{thm:temperature}
  Let $M_\beta$, be the attention matrix in~\cref{ex:generative}. Fix $\epsilon\in (0,1)$. Let $R \in \mathbb{R}^{n \times n}$ be a matrix.
  Consider low-rank, sparse, and sparse + low-rank approximations to $M_\beta$.
    Assume $(1 - \Delta^2) \log n  \leq \beta \leq O(\log n)$.
    \begin{enumerate}
        \item \textbf{Flat butterfly + low-rank}: There exists a flat butterfly + low-rank $R$ with $n^{1+o(1)}$ parameters with $\|M_\beta-R\|_F\leq \eps n$.
        \item \textbf{Low-rank}: If $R$ is such that $n-\rank(R)=\Omega(n)$, then $\|M_\beta-R\|_F\geq \Omega(n)$.
        \item \textbf{Sparse}: If $R$ has sparsity $o(n^2)$, then $\|M_\beta - R\|_F\geq \Omega(n)$.
   \end{enumerate}
\end{theorem}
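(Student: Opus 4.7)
The overall strategy follows the Scatterbrain template: establish the pointwise structure of the attention matrix via concentration of inner products, then separately argue the upper bound (flat block butterfly + low-rank approximation) and the two lower bounds (low-rank alone, sparse alone).

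I would first set up a concentration lemma. Writing $Q = [z_{ij}]$, an entry of $A = QQ^\top$ has the form $\langle c_i+r_{ij}, c_k+r_{kl}\rangle$. Standard Gaussian concentration gives $\|c_i\|^2 \in [1\pm o(1)]$, $|\langle c_i,c_k\rangle| \leq O(\sqrt{\log n / d})$ for $i\ne k$, and cross terms involving any $r_{ij}$ are at most $O(\Delta\sqrt{\log n/d})$, while $\|r_{ij}\|^2 \leq O(\Delta^2)$, all uniformly over the $O(n^2)$ pairs. Combined with $\beta\leq O(\log n)$ and $d\geq \Omega(\log^{3/2}n)$, this gives $|\beta \langle c_i,c_k\rangle|\leq O(\log^{3/4}n)$, so every off-diagonal-block entry satisfies $M_{ij,kl}\in [n^{-o(1)}, n^{o(1)}]$, while every intra-cluster (same $i$) entry is $\exp(\beta(1\pm o(1)))=\Theta(n^{1-\Delta^2})$.

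\paragraph{Upper bound.} After reordering rows/columns so that elements of each cluster are contiguous, write $M_\beta = D + O$ where $D$ is block-diagonal with $C=n/b$ blocks of size $b\times b$. Since $b=O(1)$, the matrix $D$ is sparse with $O(n)$ nonzeros and is trivially contained in the flat block butterfly pattern (it is the block-diagonal base layer). For $O$, note that each entry is $\exp(\beta\langle c_{i(\cdot)},c_{k(\cdot)}\rangle)$ (plus small perturbations), so it depends on the cluster indices through an analytic function of a bounded inner product. I would Taylor-truncate $\exp(\beta x)$ to degree $t = O(\log^{3/4}n + \log(1/\epsilon))$ so that the truncation error on the range $[-O(\log^{3/4}n),O(\log^{3/4}n)]$ is at most $\epsilon$. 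A degree-$t$ polynomial in $\langle c_i,c_k\rangle$ with $c_i,c_k\in\mathbb{R}^d$ yields a matrix of rank at most $\binom{d+t}{t}=n^{o(1)}$ (via the standard tensor-power identity $\langle x,y\rangle^t = \langle x^{\otimes t}, y^{\otimes t}\rangle$), so the low-rank part uses $n^{1+o(1)}$ parameters. The residual from perturbation terms $r_{ij}$ is bounded using $\Delta=o(1)$ and a first-order Taylor bound on $\exp$, contributing at most $\epsilon n$ in Frobenius norm.

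\paragraph{Lower bounds.} For the low-rank bound, I use $\min_{\mathrm{rank}(R)\leq k}\|M_\beta - R\|_F^2 = \sum_{i>k}\sigma_i(M_\beta)^2$. The block-diagonal part $D$ has $C=\Omega(n)$ blocks, each essentially $\exp(\beta)\cdot\mathbf{1}\mathbf{1}^\top$ up to lower-order perturbation, so its top $C$ singular values are each $\Theta(b\cdot n^{1-\Delta^2})$. Weyl's inequality, together with $\|O\|_{\mathrm{op}}$ bounded by $n^{1+o(1)}\ll b n^{1-\Delta^2}\cdot\sqrt{C}$, transfers this to $M_\beta$: there are $\Omega(n)$ singular values of $M_\beta$ of size $\Omega(n^{1-\Delta^2})$. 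If $\mathrm{rank}(R)\leq n-\Omega(n)$, then $\|M_\beta-R\|_F^2\geq \Omega(n)\cdot n^{2-2\Delta^2} = \Omega(n^{3-o(1)})$, giving $\|M_\beta - R\|_F = \Omega(n)$. For the sparse bound, by Gaussian symmetry at least half of the off-diagonal pairs $(i,k)$ satisfy $\beta\langle c_i,c_k\rangle \geq -1$, so at least $n^2/2 - O(nb)$ entries of $M_\beta$ have magnitude $\geq 1/(2e)$ after accounting for small $r$-perturbations. Any $R$ with $o(n^2)$ nonzeros must miss $\Omega(n^2)$ of these large entries, yielding $\|M_\beta - R\|_F^2 \geq \Omega(n^2)$ and hence $\|M_\beta-R\|_F=\Omega(n)$.

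\paragraph{Main obstacle.} The technical bottleneck is controlling the rank and error of the low-rank approximation of $O$: since $\beta|\langle c_i,c_k\rangle|$ is not uniformly small (only $O(\log^{3/4}n)$), naive Taylor truncation needs polylogarithmic degree, and one must verify that $\binom{d+t}{t}=n^{o(1)}$ under the stated $d=\Omega(\log^{3/2}n)$, and that the uniform concentration holds simultaneously over all $n^2$ pairs so that the pointwise bound converts to a Frobenius bound of $\epsilon n$. The lower bounds, by contrast, reduce to standard Weyl/Eckart--Young-type arguments once the block structure of $D$ is understood.
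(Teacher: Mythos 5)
Your overall architecture matches the paper's, which in fact gives only a proof sketch: it defers the entire argument to Theorem~1 of \citet{scatterbrain} and records a single modification, namely that because every cluster has exactly $b$ elements, the dominant block-diagonal component of $M_\beta$ is block-aligned with block size $b$ and is therefore contained in the flat block butterfly pattern, so it can play the role of the sparse component in the Scatterbrain proof. You identify exactly this point (``$D$ \dots is trivially contained in the flat block butterfly pattern''), and your Taylor-truncation/tensor-power treatment of the off-diagonal part and your counting argument for the sparse lower bound are the standard Scatterbrain steps. Two small caveats on the upper bound that you partly flag yourself: the rank bound $\binom{d+t}{t}=n^{o(1)}$ needs an implicit \emph{upper} bound on $d$ (the paper only states $d\geq\Omega(\log^{3/2}n)$), and this assumption is inherited from the source theorem rather than something you can avoid.

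There is, however, a genuine gap in your low-rank lower bound as written. Eckart--Young gives $\|M_\beta-R\|_F^2\geq\sum_{i>\rank(R)}\sigma_i(M_\beta)^2$, which is a sum over the \emph{smallest} $n-\rank(R)=cn$ singular values. Your argument produces only $C=n/b$ large singular values of $D$ (one per $b\times b$ block, via $\sigma_1(D_j)\gtrsim b\exp(\beta)$); these are the \emph{top} singular values, and they enter the Eckart--Young sum only if $C>\rank(R)$, i.e.\ $c>1-1/b$. For a hardware block size such as $b=32$ and a generic constant $c$ in the hypothesis $n-\rank(R)=\Omega(n)$, the ranges do not overlap and the bound collapses. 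To repair this you must show that each $b\times b$ diagonal block contributes $b$ (not one) singular values of order $n^{1-o(1)}$, i.e.\ that the intra-cluster perturbations $r_{ij}$ make each block well-conditioned relative to its rank-one leading term --- an anti-concentration statement that neither your sketch nor the naive ``$\exp(\beta)\mathbf{1}\mathbf{1}^\top$ up to lower-order perturbation'' picture supplies. Relatedly, your Weyl step compares $\|O\|_{\mathrm{op}}\leq n^{1+o(1)}$ against $bn^{1-\Delta^2}\sqrt{C}$, which is not the quantity Weyl's inequality requires; since $n^{1+o(1)}$ \emph{exceeds} the individual block singular values $n^{1-o(1)}$, you need the index-shifted form $\sigma_{i+j-1}(D+O)\geq\sigma_i(D)-\sigma_j(O)$ together with $\sigma_j(O)\leq\|O\|_F/\sqrt{j}$ to discard only $n^{o(1)}$ indices, rather than the operator-norm version.
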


\begin{proof}[Proof sketch]
  As the argument is very similar to that of \citet[Theorem 1]{scatterbrain}, we
  describe here the modifications needed to adapt their proof.

  The main difference between our generative process and that of
  \citet{scatterbrain} is that each cluster has the same number of elements,
  which is the same as the block size.
  The resulting attention matrix will have a large block diagonal component,
  similar to that \citet{scatterbrain}.
  However, all the blocks in the block diagonal component has the same block
  size, which is $b$.
  Moreover, a flat block butterfly of block size $b$ contains a block diagonal
  component of block size $b$.
  Therefore, this flat block butterfly matrix plays the same role as the sparse
  matrix in the proof of \citet{scatterbrain}.
  The rest of the argument follows that of theirs.
\end{proof}

\newpage

\paragraph{Roadmap}
The analysis of sparse networks is organized as follows. In Section~\ref{sec:notations} we list some basic notations that will be used. In Section~\ref{sec:binary_mask_analysis} we consider the problem of adding sparsity on $W$, and we achieve polynomial solving time. In Section~\ref{sec:gradient_compute} we prove that the gradient descent can be done fast under the sparsity assumption. In Section~\ref{sec:dropout_KRR} we consider the problem of adding sparsity on $a$, and we show that minimizing the dropout loss is equivalent with a kernel ridge regression problem. In Section~\ref{sec:gradient_flow} we analyze the dynamics of gradient flow and prove the convergence result.

\section{Notations}\label{sec:notations}

For a vector $x$, we use $\|x\|_p$ to denote its $\ell_p$ norm, and we mainly consider $p = 1, 2$ in this paper. For a matrix $A$, we use $\| A \|_0, \| A \|_1, \| A \|_F$ to denote the $\ell_0$ norm, entry-wise $\ell_1$ norm and Frobenius norm of $A$ respectively. For two matrices $A, B\in\R^{d\times m}$, we use $A\circ B$ to denote their Hadamard product. We use ${\cal T}_{\mathrm{mat}}(n,d,m)$ to denote the time of multiplying $n \times d$ matrix with another $d \times m$ matrix. For a symmetric matrix $A$, we use $\lambda_{\min}(A)$ to denote its minimum eigenvalue. We also let $\mathrm{vec}(A)$ be the vectorization of a matrix $A$ in column first order. We use $\langle \cdot, \cdot \rangle$ to denote standard Euclidean inner product between two vectors.

Moreover, we use $\mathcal{N}(\mu, \Sigma)$ to denote the Gaussian distribution with mean $\mu$ and covariance $\Sigma$. We denote the ReLU function by $\phi(z) = \max \{z, 0\}$. For an event $E$, we use $\mathbf{1} \{E\}$ or $\mathbf{1}_E$ to denote its indicator function.

\section{Sparsity on hidden layer weights}
\label{sec:binary_mask_analysis}
\subsection{Applying masks before multiplication}

Given matrix $A \in \R^{n \times d}$, $B \in \R^{d \times n}$, naively computing $AB$ takes ${\cal T}_{\mathrm{mat}}(n,d,n)$. Note that, we can also consider the case where $A$ and $B$ have different size. For simplicity, let us consider the case where matrix $A$ and matrix $B^\top$ have the same size.

Our goal is to find ``optimal" binary mask matrix $W \in \{0, 1\}^{d \times n}$ such that,
\begin{align*}
     \min_{W} & ~ \| f(A \cdot B) - f(A \cdot (W \circ B)) \|_1 \\
    \text{s.t.} & ~ \| W_{B,i} \|_0 = k , \forall i \in [n]
\end{align*}

\begin{remark}
In the practical applications we care about, the function $f$ is the activation function of neural network, e.g., $\mathrm{ReLU}(z) =\max\{z,0\}$.
\end{remark}

We define a sparse targeted regression problem:
\begin{definition}[Sparse mark regression, $\ell_1$ version]
Given a matrix $B \in \R^{d \times n}$, and a vector $a \in \R^d$, the goal is to find a $k$-sparse binary vector $w \in \{0,1\}^d$ to minimize the following problem:
\begin{align*}
    \min_{w} \| a^\top\cdot B - (a^\top \circ w^\top) \cdot B \|_1.
\end{align*}

\end{definition}

Naively, the above problem can be solved in $n \cdot d^{O(k)}$ via guess all the ${d \choose k}$ choices. 
\begin{lemma}\label{lem:mask_regreesion_vector}
The targeted sparse mask regression problem can be solved in $n \cdot d^{O(k)}$.
\end{lemma}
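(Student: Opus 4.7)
\medskip

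\textbf{Proof plan.} The plan is to solve the problem by brute-force enumeration of all $k$-sparse supports of $w$, exploiting the fact that once the support is fixed, the remaining optimization is trivial because $w$ is binary. Since $w \in \{0,1\}^d$ with $\|w\|_0 = k$, specifying $w$ is equivalent to specifying a subset $S \subseteq [d]$ with $|S| = k$ via $w_i = \mathbf{1}\{i \in S\}$. There are only $\binom{d}{k} \le d^k$ such subsets, so we can afford to try each one.

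\medskip

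\textbf{Algorithm.} First I would precompute $a^\top B \in \R^{1\times n}$ once in $O(nd)$ time. Then, for each $k$-subset $S \subseteq [d]$, observe that
\[
    a^\top B - (a^\top \circ w^\top) B \;=\; \sum_{i \in [d] \setminus S} a_i \cdot B_{i,:},
\]
where $B_{i,:}$ denotes the $i$-th row of $B$. Equivalently, we can form $(a^\top \circ w^\top) B = \sum_{i \in S} a_i B_{i,:}$ in $O(nk)$ time (only $k$ rows of $B$ are summed), subtract from the precomputed $a^\top B$ in $O(n)$ time, and evaluate the $\ell_1$ norm in $O(n)$ time. Returning the support $S^\star$ that attains the minimum gives the optimal $w^\star = \mathbf{1}_{S^\star}$.

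\medskip

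\textbf{Correctness and runtime.} Correctness is immediate since the enumeration ranges over every feasible $w$. For the runtime, the dominant cost is the per-subset update of $O(nk)$ repeated $\binom{d}{k}$ times, yielding
\[
    O(nd) + \binom{d}{k}\cdot O(nk) \;\le\; n \cdot d^{O(k)},
\]
as claimed. There is no real obstacle in the proof beyond bookkeeping; the only modest subtlety is using the precomputation of $a^\top B$ and the fact that $w$ being $k$-sparse lets us replace the $O(nd)$ matrix-vector product by an $O(nk)$ partial sum on each subset, so that the $n$-factor in the final bound is kept linear rather than quadratic. In fact, the lemma statement itself already hints at the brute-force strategy, so the proof is essentially a formal verification of the above enumeration and runtime count.
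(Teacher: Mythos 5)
Your proposal is correct and follows essentially the same approach as the paper: brute-force enumeration of all $\binom{d}{k} \le d^{O(k)}$ supports, evaluating the objective for each. The paper simply charges $O(nd)$ per subset and absorbs the extra $d$ into $d^{O(k)}$, whereas you shave the per-subset cost to $O(nk)$ via precomputation; the final bound is the same.
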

\begin{proof}
We need to guess ${d \choose k}$ times, which becomes $d^{O(k)}$. Each time it takes $n d$ operations, thus the total time is 
\begin{align*}
    nd \cdot d^{O(k)} = n \cdot d^{O(k)}.
\end{align*}
\end{proof}

\begin{definition}[$\ell_1$ version]\label{def:sparse_mask_factorization_before}
Given three positive integers $m \geq n \geq d \geq k \geq 1$, matrices $A \in \R^{m \times d}$ and $B \in \R^{d\times n}$. We define our  problem as finding the binary matrix $W\in \{0, 1\}^{m \times d}$ that satisfies 
\begin{align*}
     \min_{W} & ~ \| A \cdot B - (A \circ W) \cdot B \|_1 \\
    \mathrm{s.t.} & ~ \| W_{i*} \|_0 = k , \forall i \in [m].
\end{align*}
where $W_{i*}$ is the $i$-th row of $W$.
\end{definition}

\begin{theorem}\label{thm:mask_regression}
The problem being defined as Definition~\ref{def:sparse_mask_factorization_before} can be solved in $m n d^{O(k)}$ time.
\end{theorem}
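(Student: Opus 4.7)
The plan is to observe that the entry-wise $\ell_1$ norm of a matrix decomposes additively across its rows, which will let me decouple the matrix problem into $m$ independent row problems and then invoke \cref{lem:mask_regreesion_vector}.

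First I would write out the objective row by row. Since $(A \circ W) \cdot B$ has $i$-th row equal to $(A_{i*} \circ W_{i*}) B$, and since the entry-wise $\ell_1$ norm of a matrix is just the sum of entry-wise $\ell_1$ norms of its rows, I get
\begin{align*}
  \|A \cdot B - (A \circ W) \cdot B\|_1
  = \sum_{i=1}^m \|A_{i*} \cdot B - (A_{i*} \circ W_{i*}) \cdot B\|_1.
\end{align*}
Crucially, the constraint $\|W_{i*}\|_0 = k$ also decouples across rows, so the row choices $W_{i*}$ are independent of one another. Therefore an optimal $W$ can be found by solving, for each $i \in [m]$ separately, the problem
\begin{align*}
  \min_{w \in \{0,1\}^d,\, \|w\|_0 = k} \|A_{i*} \cdot B - (A_{i*} \circ w^\top) \cdot B\|_1.
\end{align*}

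Each such row subproblem is precisely an instance of the vector sparse mask regression problem (with $a = A_{i*}^\top$), which \cref{lem:mask_regreesion_vector} solves in $n \cdot d^{O(k)}$ time by brute-force enumeration over the $\binom{d}{k} = d^{O(k)}$ possible supports and evaluating the $\ell_1$ cost in $O(nd)$ per guess. Stitching the $m$ row solutions together, the total running time is $m \cdot n \cdot d^{O(k)} = m n d^{O(k)}$, which is what we wanted.

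There is no real obstacle here; the only point that needs to be verified carefully is the row-wise decomposition of the entry-wise $\ell_1$ norm together with the row-wise nature of the sparsity constraint, since this is exactly what permits the $m$ subproblems to be solved independently. Once that is in hand, the result follows immediately from \cref{lem:mask_regreesion_vector}.
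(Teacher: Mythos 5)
Your proposal is correct and follows essentially the same route as the paper: decompose the entry-wise $\ell_1$ objective into $m$ independent row subproblems (noting that the row-sparsity constraint also decouples) and solve each via the brute-force enumeration of \cref{lem:mask_regreesion_vector} in $n \cdot d^{O(k)}$ time. Your explicit remark that the constraint decouples across rows is a small but welcome clarification the paper leaves implicit.
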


\begin{proof}
Our problem can be decomposed into $m$ sub-problems as follows:

\begin{align*}
    \| A\cdot B - (A\circ W) \cdot B \|_1 & =  \sum_{i=1}^m \big\|(A\cdot B)_{i*} - ((A\circ W) \cdot B)_{i*}  \big\|_1 \\
    & = \sum_{i=1}^m\big\|A_{i*}\cdot B - (A\circ W)_{i*} \cdot B \big\|_1 \\
    & = \sum_{i=1}^m\big\|A_{i*}\cdot B - (A_{i*}\circ W_{i*}) \cdot B  \big\|_1
\end{align*}
where $A_{i*}$ means the $i$-th row of matrix $A$. By applying Lemma~\ref{lem:mask_regreesion_vector}, each sub-problem
\begin{align*}
    \min_{W_{i*}} \| A_{i*}\cdot B - (A_{i*}\circ W_{i*}) \cdot B \|_1
\end{align*}
can be solved in $n \cdot d^{O(k)}$ time. Then the problem defined in Definition~\ref{def:sparse_mask_factorization_before} can be solved in 
\begin{align*}
    m \cdot n d^{O(k)} = m n d^{O(k)}
\end{align*}
time in total. Thus we finish the proof.
\end{proof}

In the above Theorem, we show that solving the sparse mask regression problem is NP-hard. However, if we add some mild assumptions and consider minimizing $\ell_1$ norm, then we can solve the regression problem in polynomial time, as the following parts show. 

\begin{definition}[$\ell_1$ version]
Given a matrix $B\in\R_{\geq 0}^{d\times n}$, and a vector $a\in\R_{\geq 0}^d$, the goal is to find a $k$-sparse binary vector $w\in\{0,1\}^d$ to solve
\begin{align*}
    \min_{w} \|a^\top \cdot B -  (a^\top \circ w^\top) \cdot B\|_1
\end{align*}
\end{definition}

\begin{lemma}\label{lem:mask_regreesion_vector_ell_1}
The targeted $\ell_1$ version sparse mask regression problem can be solved in 
\begin{align*}
    O(nd + n\log n)
\end{align*}
which is polynomial time.
\end{lemma}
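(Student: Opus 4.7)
My plan is to exploit the non-negativity assumption to eliminate the absolute values in the $\ell_1$ norm, reducing the combinatorial problem to a plain top-$k$ selection on a vector of $d$ precomputed scores. First I would write $a^\top B - (a^\top \circ w^\top) B = ((\mathbf{1} - w) \circ a)^\top B$, so the $j$-th coordinate of the residual is $\sum_{i=1}^{d} (1-w_i)\, a_i\, B_{ij}$. Since $a \in \R_{\geq 0}^{d}$, $B \in \R_{\geq 0}^{d \times n}$, and $(1-w_i) \in \{0,1\}$, every coordinate of the residual is nonnegative, so the $\ell_1$ norm equals the sum of the entries, and Fubini gives
\begin{align*}
  \bigl\| a^\top B - (a^\top \circ w^\top) B \bigr\|_1
  \;=\; \sum_{i=1}^{d} (1-w_i)\, a_i\, s_i, \quad \text{where } s_i \defeq \sum_{j=1}^{n} B_{ij}.
\end{align*}

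Next I would observe that $\sum_i a_i s_i$ is a fixed constant, independent of $w$, so minimizing the objective subject to $\|w\|_0 = k$ is equivalent to \emph{maximizing} $\sum_{i=1}^{d} w_i\, c_i$ with $c_i \defeq a_i s_i$ subject to $w \in \{0,1\}^d$ and $\sum_i w_i = k$. This is the elementary problem of selecting the $k$ largest entries from the vector $c = (c_1, \dots, c_d)$, and the optimal $w$ is simply the indicator of those indices (ties broken arbitrarily). Correctness of the greedy choice is immediate by an exchange argument: swapping any selected $c_i$ for an unselected $c_{i'} > c_i$ strictly improves the objective.

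For the runtime, I would implement the algorithm in three stages. (i) Compute the row sums $s_1, \dots, s_d$ of $B$ in $O(nd)$ time by a single pass over the matrix. (ii) Form the scores $c_i = a_i s_i$ in $O(d)$ time. (iii) Extract the top-$k$ scores; using a sort this costs $O(d \log d)$, or $O(d)$ with a linear-time selection, and under the typical regime $d \le n$ this is absorbed into $O(n \log n)$. Summing these stages yields the claimed $O(nd + n \log n)$ bound.

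I do not expect any substantive obstacle: the entire argument hinges on the observation that non-negativity of $a$ and $B$ collapses the $\ell_1$ norm into a linear functional of $w$, which is exactly what makes the problem tractable (in contrast with the sign-free $\ell_1$ case of \cref{def:sparse_mask_factorization_before}, which leads to the $d^{O(k)}$ brute force). The only subtlety worth stating explicitly is that the simplification fails without the non-negativity assumption, since otherwise cancellations across $j$ would couple the choices of different $w_i$'s and the problem would no longer decouple into a coordinate-wise selection.
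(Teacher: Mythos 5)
Your proposal is correct and follows essentially the same route as the paper's proof: both exploit non-negativity to collapse the $\ell_1$ objective into a constant minus a linear functional $\sum_i w_i a_i s_i$ of the mask (the paper phrases this via the complementary identity $\|a^\top B - (a^\top\circ w^\top)B\|_1 + \|(a^\top\circ w^\top)B\|_1 = \|a^\top B\|_1$ and a rescaling $\overline{B}_{i*} = a_i B_{i*}$ that reduces to the binary-$a$ case, which yields exactly your scores $c_i = a_i s_i$), and both then select the top-$k$ row sums in $O(nd + n\log n)$ time. Your one-shot presentation without the case split is slightly cleaner but not a substantively different argument.
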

\begin{proof}
We first consider the situation when $a\in\{0,1\}^d$. In this case, we have
\begin{align*}
    \|a^\top \cdot B -  (a^\top \circ w^\top) \cdot B\|_1 + \|(a^\top \circ w^\top) \cdot B\|_1 = \|a^\top \cdot B\|_1
\end{align*}
where $\|a^\top \cdot B\|_1$ is fixed. So we only need to consider the following problem:
\begin{align*}
    \max_{w} \|(a^\top \circ w^\top) \cdot B\|_1.
\end{align*}
For simplicity we assume $a_i = 1, \forall i \in [d]$, and we only need to solve
\begin{align*}
    \max_{w} \|w^\top \cdot B\|_1
\end{align*}
where $w$ has $k$ elements equal to $1$ and $d-k$ elements equal to $0$. For $i\in[d]$, we compute $S_i = \sum_{j=1}^n B_{ij}$ which is the summation of $i$-th row of $B$, and sort them as $S_{(1)} \geq S_{(2)}\geq \cdots \geq S_{(n)}$. Then we only need to let $w_{(i)} = 1$ for $i\in [k]$ and other elements equal to $0$. Computing all $S_i$ takes $O(nd)$ time, sorting $S_i$ takes $O(n\log n)$ time, thus the total time consumption is $O(nd + n\log n)$ in this case.

Next, we consider the general case when $a\in\R_{\geq 0}^d$. We let
\begin{align*}
    \overline{B}_{i*} = a_i B_{i*} ~~~ and ~~~ \overline{a}_i = 
    \begin{cases}
    1, & a_i > 0 \\
    0, & a_i = 0
    \end{cases},
    ~~~ \forall i \in [d]
\end{align*}
where $B_{i*}$ is the $i$-th row of $B$. Then our optimization problem is equivalent to
\begin{align*}
    \min_{w}\|\overline{a}^\top \cdot \overline{B} - (\overline{a}^\top \circ w^\top) \cdot \overline{B}\|_1
\end{align*}
where $\overline{B}\in\R_{\geq 0}^{d\times n}$ and $\overline{a} \in \{0,1\}^d$. Thus we turn this case into the first case. Constructing $\overline{B}$ and $\overline{a}$ takes $O(nd)$ time, thus the total time consumption is also $O(nd +n\log n)$ in this case.
\end{proof}

\begin{definition}[$\ell_1$ version]\label{def:sparse_mask_ell_1}
Given three positive integers $m \geq n \geq d \geq k \geq 1$, matrices $A \in \R_{\geq 0}^{m \times d}$ and $B \in \R_{\geq 0}^{d\times n}$. We define our  problem as finding the binary matrix $W\in \{0, 1\}^{m \times d}$ that satisfies 
\begin{align*}
     \min_{W} & ~ \| A \cdot B - (A \circ W) \cdot B \|_1 \\
    \mathrm{s.t.} & ~ \| W_{i*} \|_0 = k , \forall i \in [m].
\end{align*}
where $W_{i*}$ is the $i$-th row of $W$.
\end{definition}

\begin{theorem}\label{thm:mask_regression}
The problem being defined as Definition~\ref{def:sparse_mask_ell_1} can be solved in
\begin{align*}
    O(mnd + mn\log n)
\end{align*}
time.
\end{theorem}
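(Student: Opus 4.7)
The plan is to mirror the decomposition argument already used for the earlier (exponential-time) version of the same problem, but substitute the efficient $\ell_1$ vector subroutine from Lemma~\ref{lem:mask_regreesion_vector_ell_1} for the brute-force vector subroutine used there.

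First I would observe that the objective is completely row-separable: since applying a mask $W_{i*}$ only affects row $i$ of $A \circ W$, and the entrywise $\ell_1$ norm sums over rows, we have
\begin{align*}
  \| A \cdot B - (A \circ W) \cdot B \|_1
  &= \sum_{i=1}^m \big\|(A B)_{i*} - ((A\circ W) B)_{i*} \big\|_1 \\
  &= \sum_{i=1}^m \big\|A_{i*}\cdot B - (A_{i*}\circ W_{i*}) \cdot B \big\|_1.
\end{align*}
Each row constraint $\|W_{i*}\|_0 = k$ is also independent across $i$, so the global minimizer is obtained by solving each row's subproblem independently.

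Next I would invoke the hypothesis $A \in \R_{\geq 0}^{m\times d}$ and $B \in \R_{\geq 0}^{d\times n}$: for each row index $i$, both $A_{i*} \in \R_{\geq 0}^d$ and $B \in \R_{\geq 0}^{d\times n}$ satisfy the non-negativity assumption of Lemma~\ref{lem:mask_regreesion_vector_ell_1}. That lemma therefore produces an optimal $k$-sparse binary mask $W_{i*}$ in $O(nd + n\log n)$ time (via the trick of replacing the minimization by maximization of $\|(a^\top \circ w^\top)B\|_1$, then selecting the top-$k$ weighted row sums of $B$).

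Finally I would aggregate the per-row running times: summing over $i = 1, \dots, m$ gives $m \cdot O(nd + n\log n) = O(mnd + mn\log n)$, as claimed. The only substantive step in this plan is the row decomposition, and it is essentially immediate from the structure of $(A \circ W) B$ and the entrywise $\ell_1$ norm; all non-trivial work has already been absorbed into Lemma~\ref{lem:mask_regreesion_vector_ell_1}, so I do not anticipate any real obstacles. The one potential pitfall is bookkeeping the $d$ versus $n$ factors in the sort (since $n \geq d$, the bound $O(n\log n)$ per row is a valid, if loose, upper bound on the $O(d \log d)$ sort).
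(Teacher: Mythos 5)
Your proposal matches the paper's proof essentially verbatim: the same row-wise decomposition of the entrywise $\ell_1$ objective into $m$ independent subproblems, each solved by Lemma~\ref{lem:mask_regreesion_vector_ell_1} in $O(nd + n\log n)$ time, summed to give $O(mnd + mn\log n)$. Your side remark about the sort cost being $O(d\log d)$ (bounded by $O(n\log n)$ since $n \geq d$) is a correct observation about a loose step that the paper itself glosses over.
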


\begin{proof}
Our problem can be decomposed into $m$ sub-problems as follows:
\begin{align*}
    \| A\cdot B - (A\circ W) \cdot B \|_1 & =  \sum_{i=1}^m \big\|(A\cdot B)_{i*} - ((A\circ W) \cdot B)_{i*}  \big\|_1 \\
    & = \sum_{i=1}^m\big\|A_{i*}\cdot B - (A\circ W)_{i*} \cdot B \big\|_1 \\
    & = \sum_{i=1}^m\big\|A_{i*}\cdot B - (A_{i*}\circ W_{i*}) \cdot B  \big\|_1
\end{align*}
where $A_{i*}$ means the $i$-th row of matrix $A$. By applying Lemma~\ref{lem:mask_regreesion_vector_ell_1}, each sub-problem
\begin{align*}
    \min_{W_{i*}} \| A_{i*}\cdot B - (A_{i*}\circ W_{i*}) \cdot B \|_1
\end{align*}
can be solved in $O(nd + n\log n)$ time. Then the problem defined in Definition~\ref{def:sparse_mask_ell_1} can be solved in
\begin{align*}
    m \cdot O(nd + n\log n) = O(mnd + mn\log n)
\end{align*}
time in total. Thus we finish the proof.
\end{proof}

\subsection{Applying Masks After Multiplication}

\begin{definition}\label{def:sparse_mask_factorization_after}
Given matrix $B \in \R^{d \times n}$, $C \in \R^{m \times n}$.  The goal is to find a mask $W \in \{0,1\}^{m \times d}$ where each column of $W$ is $k$-sparse 
\begin{align*}
  \min_{W \in \{0,1\}^{m \times d} } \| C \cdot B^\top  - ( C \cdot B^\top ) \circ W \|_1
\end{align*}
\end{definition}

\begin{remark}
The $B$ defined in Definition~\ref{def:sparse_mask_factorization_before} is the same as the $B$ defined in Definition~\ref{def:sparse_mask_factorization_after}. $B$ is corresponding to the $X$ in the neural network setting. %
\end{remark}

\section{Gradient computation}\label{sec:gradient_compute}

In this section we consider a neural network with one hidden layer and $m$ neurons in this hidden layer. Suppose $x\in\R^{d}$ is the input, $W = (w_1, \cdots, w_m)\in \R^{d \times m}$ is the weight matrix of the first layer, $a \in \R^m$ is the output weight,  and $M \in \{0,1\}^{d\times m}$ is the mask matrix with each column having at most $k$ non-zero entries. The neural network $f: \R^d \rightarrow \R$ is defined as
\begin{align*}
f(x) = a^{\top} \phi \Big( ( M \circ W )^\top \cdot x  \Big).
\end{align*}
For simplicity, we only optimize $W$ and fix $a$. Consider the mean square loss
\begin{align*}
L(W) = \frac{1}{2} \sum_{i=1}^n ( f(x_i) - y_i )^2 = \frac{1}{2} \sum_{i=1}^n ( a^\top \phi((M\circ W)^\top \cdot x_i) - y_i )^2 .
\end{align*}
In the forward computation, for a batch of data points $x_1, \cdots, x_n \in \R^d$, let $X \in \R^{d \times n}$ denote the input data points matrix.
For convenience, we define
\begin{align*}
    \Delta W(t) = W(t+1) - W(t) = -\eta \frac{\partial L(W(t))}{\partial W(t)}
\end{align*}
where $\eta$ is the step size. We define function $g_t: \R^d \rightarrow \R^m$ as 
\begin{align*}
    g_t(x) = ( f(x) -y )\cdot \diag\{\phi'( (M \circ W(t))^\top\cdot  x)\} \cdot a
\end{align*}
and also denote $g_t(X) = (g_t(x_1), \cdots, g_t(x_n))\in\R^{m\times n}$.
\begin{lemma}\label{lem:Delta_W}
We can express $\Delta W(t)$ as
\begin{align*}
    \Delta W(t) = - \eta (X\cdot g^\top_t(X))\circ M,
\end{align*}
and each column of $\Delta W(t)$ has at most $k$ non-zero entries.
\end{lemma}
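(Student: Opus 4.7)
The plan is a direct chain-rule computation of $\partial L / \partial W$ that keeps track of how the mask $M$ propagates through the derivatives, followed by a one-line observation for the sparsity claim. First I would introduce the shorthand $z_i \defeq (M \circ W(t))^\top x_i \in \R^m$, so that $f(x_i) = a^\top \phi(z_i)$ and the per-example loss is $\frac{1}{2}(a^\top \phi(z_i) - y_i)^2$. The key point is that $(z_i)_{k} = \sum_{j'} M_{j'k} W_{j'k} (x_i)_{j'}$; since $M$ is a constant binary matrix (not being differentiated), we get
\[
\frac{\partial (z_i)_{k'}}{\partial W_{jk}} \;=\; M_{jk}\, (x_i)_j \cdot \mathbf{1}\{k' = k\}.
\]
Applying the chain rule entry-wise then yields
\[
\frac{\partial L}{\partial W_{jk}} \;=\; \sum_{i=1}^n (f(x_i) - y_i)\, a_k\, \phi'((z_i)_k)\, M_{jk}\, (x_i)_j.
\]

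Next I would recognize the factor $(f(x_i) - y_i)\, a_k\, \phi'((z_i)_k)$ as exactly the $k$-th coordinate of $g_t(x_i)$, from the definition $g_t(x) = (f(x)-y)\,\diag\{\phi'((M\circ W(t))^\top x)\}\, a$. Pulling $M_{jk}$ outside the summation and rewriting the remaining sum over $i$ as a matrix product gives
\[
\frac{\partial L}{\partial W_{jk}} \;=\; M_{jk} \sum_{i=1}^n (x_i)_j\, (g_t(x_i))_k \;=\; M_{jk} \cdot \bigl(X\, g_t(X)^\top\bigr)_{jk}.
\]
Lifting this back to matrix form gives $\partial L/\partial W(t) = \bigl(X\, g_t(X)^\top\bigr) \circ M$, and multiplying by $-\eta$ yields the claimed expression $\Delta W(t) = -\eta\, (X\, g_t^\top(X)) \circ M$.

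The sparsity claim then follows immediately: by assumption each column of $M$ has at most $k$ non-zero entries, and since $\Delta W(t)$ equals a Hadamard product with $M$ up to a scalar factor, every column of $\Delta W(t)$ inherits the support of the corresponding column of $M$ and therefore has at most $k$ non-zero entries. There is no real obstacle here; the argument is a routine calculation, and the only care required is to treat $M$ as a constant during differentiation and to match up the index arithmetic so that the entrywise formula assembles cleanly into the matrix product $X\, g_t(X)^\top$ masked by $M$.
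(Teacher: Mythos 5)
Your proposal is correct and follows essentially the same route as the paper: a direct chain-rule computation of $\partial L/\partial W(t)$ that factors out the mask $M$ as a Hadamard product, identifies the remaining terms with $g_t(x_i)$, and assembles the sum over $i$ into $X\, g_t^\top(X)$, with the column-sparsity claim inherited immediately from $M$. The only difference is that you carry out the differentiation entrywise while the paper writes the matrix form directly; the content is identical.
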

\begin{proof}
    From the definition, we know
    \begin{align*}
        \Delta W(t) = & ~ -\eta \frac{\partial L(W(t))}{\partial W(t)} \\
        = & ~ -\eta\Big(\sum_{i=1}^{n}(f(x_i) - y_i)  \underbrace{ \diag\{\phi'( (M \circ W(t))^\top\cdot x_i)\} }_{ m \times m }\underbrace{ a }_{m \times 1}  \underbrace{ x_i^{\top} }_{1 \times d}  \Big)^\top \circ \underbrace{M}_{d\times m} \\
        = & ~ -\eta (\sum_{i=1}^n g_t(x_i) \cdot x_i^\top )^\top \circ M\\
        = & ~ -\eta(\underbrace{X}_{d \times n} \cdot \underbrace{ g^\top_t(X) }_{n \times m} ) \circ \underbrace{M}_{d\times m}.
    \end{align*}
    Since each column of $M$ has at most $k$ non-zero entries, we easily know each column of $\Delta W(t)$ also has at most $k$ non-zero entries.
\end{proof}

\begin{lemma}\label{lem:forward_time}
Suppose that matrices $M \in \R^{d \times m}$, $W(t) \in \R^{d \times m}$ and $\Delta W(t) \in \R^{d \times m}$ are given and pre-computed, then we can compute $f_{t+1}(X)$ in
\begin{align*}
    O(mnk)
\end{align*}
time. (Here $f_{t+1}(X)$ is the evaluation of $f$ at $W(t+1)$.)
\end{lemma}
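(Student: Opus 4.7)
The plan is to exploit the column-wise $k$-sparsity of the effective weight matrix $M \circ W(t+1)$ to carry out the forward pass as a sparse-times-dense product rather than a dense $O(mnd)$ multiplication.

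First, I would observe that since $W(t+1) = W(t) + \Delta W(t)$ and, by the previous lemma, each column of $\Delta W(t)$ already has support contained in that of $M$ with at most $k$ non-zero entries, we have
\begin{align*}
M \circ W(t+1) = (M \circ W(t)) + \Delta W(t).
\end{align*}
Thus $M \circ W(t+1) \in \R^{d\times m}$ inherits the mask and has at most $k$ non-zero entries per column, for a total of at most $mk$ non-zeros. Assembling its sparse representation from the pre-computed $M$, $W(t)$, $\Delta W(t)$ takes $O(mk)$ time by iterating only over the support of $M$, never touching the other $O(dm)$ zero positions.

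Second, I would compute the pre-activation $Z = (M \circ W(t+1))^\top X \in \R^{m\times n}$ as a sparse-times-dense multiplication. Each row of $(M \circ W(t+1))^\top$ equals a column of $M \circ W(t+1)$, which has at most $k$ non-zeros; so for each of the $m$ rows and each of the $n$ columns of $X$, the inner product costs $O(k)$, giving $O(mnk)$ in total. Then I would apply $\phi$ element-wise to $Z$ in $O(mn)$ time, and finally form
\begin{align*}
f_{t+1}(X) = a^\top \phi(Z) \in \R^{1\times n}
\end{align*}
by a single dense vector-matrix product in $O(mn)$ time.

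Summing the three stages gives $O(mk) + O(mnk) + O(mn) = O(mnk)$, matching the claim. There is no genuine obstacle here; the lemma is essentially a bookkeeping statement that the fixed mask $M$ shared between $W(t)$ and $\Delta W(t)$ makes it possible to avoid the dense matrix product. The only point that deserves care is verifying that we never enumerate the zero positions of $M$, which is immediate once $M$ is stored via its index list of non-zeros.
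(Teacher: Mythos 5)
Your proof is correct, but it takes a different route from the paper's. The paper computes the new pre-activation incrementally: it writes $(M\circ W(t+1))^\top X = (M\circ W(t))^\top X + (\Delta W(t))^\top X$, assumes the first term is cached from the previous iteration, and charges $O(mnk)$ only for the sparse-times-dense product $(\Delta W(t))^\top X$ (each row of $(\Delta W(t))^\top$ has at most $k$ nonzeros by Lemma~\ref{lem:Delta_W}). You instead compute the full product $(M\circ W(t+1))^\top X$ from scratch, observing that $M\circ W(t+1) = M\circ W(t) + \Delta W(t)$ inherits the column-wise $k$-sparsity of the mask $M$ itself, so the whole sparse-times-dense multiplication is $O(mnk)$ anyway. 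Both arguments rest on the same underlying fact (the mask forces at most $k$ nonzeros per column) and both land on the same bound, but your version is self-contained: it does not require the previous iteration's pre-activation $(M\circ W(t))^\top X$ to be cached, which the paper's proof implicitly assumes even though that quantity is not listed among the lemma's ``given and pre-computed'' inputs. The trade-off is that the paper's incremental form makes explicit that only the update term needs recomputation across iterations, which is the point emphasized by the surrounding training algorithm.
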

\begin{proof}
    The goal is to compute
    \begin{align*}
      f_{t+1}(X)  = a^\top \cdot \phi( ( \underbrace{ M }_{ d \times m } \circ \underbrace{W(t+1)}_{ d \times m } )^\top \cdot X ).
    \end{align*}
    By using Lemma~\ref{lem:Delta_W}, we have
    \begin{align*}
    (M\circ W(t+1))^\top\cdot X
    = & ~ (M \circ( W(t) + \Delta W(t) ) )^\top \cdot X \\
    = & ~ ( M \circ W(t) )^\top \cdot X + ( M \circ \Delta W(t)  )^\top \cdot X \\
    = & ~ ( M \circ W(t) )^\top \cdot X - \eta (M\circ (X\cdot g^\top_t(X) ) \circ M)^\top \cdot X \\
    = & ~ ( M \circ W(t) )^\top \cdot X - \eta ((X\cdot g^\top_t(X)) \circ M)^\top\cdot X \\
    = & ~ ( M \circ W(t) )^\top \cdot X + (\Delta W(t))^\top\cdot X.
    \end{align*}
    Notice that we have already computed $( M \circ W(t) )^\top \cdot X \in \R^{m \times d}$ from previous iteration, so we only need to compute $(\Delta W(t))^\top\cdot X$ where $\Delta W(t) \in \R^{d\times m}$ and $X\in\R^{d\times n}$. By using Lemma~\ref{lem:Delta_W}, each row of $(\Delta W(t))^\top$ has at most $k$ non-zero entries, thus we can compute $(\Delta W(t))^\top\cdot X$ in $O(mnk)$ time. 
\end{proof}

\begin{lemma}\label{lem:backward_time}
Suppose that matrices $M \in \R^{d \times m}, W(t) \in \R^{d \times m}$ and $f_t(X)$ are given and pre-computed, then we can compute $\frac{\partial L(W(t))}{ \partial W(t)}$ %
in $O(m n k)$ time. 
\end{lemma}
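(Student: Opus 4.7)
The plan is to combine the explicit gradient formula from Lemma~\ref{lem:Delta_W} with a mask-aware computation that never materializes the full $d \times m$ intermediate product $X \cdot g_t^\top(X)$. Since $\Delta W(t) = -\eta (X \cdot g_t^\top(X)) \circ M$, we have $\frac{\partial L(W(t))}{\partial W(t)} = (X \cdot g_t^\top(X)) \circ M$, which has at most $mk$ nonzero entries because each column of $M$ contains at most $k$ ones. The key leverage is that forming $X \cdot g_t^\top(X)$ naively would cost $O(dmn)$, but we only need the $mk$ entries in the support of $M$.

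First, I would build the $m \times n$ matrix $g_t(X)$ whose $i$-th column equals $(f_t(x_i) - y_i) \cdot \phi'((M\circ W(t))^\top x_i) \circ a$. The residuals $(f_t(x_i) - y_i)$ are read directly from the given $f_t(X)$ in $O(n)$ time. To evaluate $\phi'$ I need the pre-activation matrix $(M\circ W(t))^\top X \in \R^{m \times n}$; whether cached from the forward pass (cf.\ Lemma~\ref{lem:forward_time}) or recomputed from scratch, this step costs $O(mnk)$ because each of the $m$ rows of $(M\circ W(t))^\top$ has at most $k$ nonzeros (as each column of $M$ has at most $k$ nonzeros). Applying $\phi'$ entrywise and scaling by the residuals and by $a$ costs an additional $O(mn)$.

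Second, I would compute the gradient only at the support of $M$. For each of the $mk$ pairs $(i,j)$ with $M_{ij}=1$, the gradient entry is $\sum_{s=1}^n X_{is}\, g_t(x_s)_j$, an inner product of length $n$ computed in $O(n)$ time. Iterating over the $mk$ active entries costs $O(mnk)$, and the entries outside the support are zero and require no work. Adding the $O(mnk)$ cost of constructing $g_t(X)$ gives $O(mnk)$ overall, matching the claim.

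The main obstacle, which is really the essential observation rather than a technical hurdle, is the commitment to never instantiate $X g_t^\top(X)$ in full: exploiting the column-sparsity of $M$ to restrict the outer-product sum to its support is exactly what keeps the cost below the dense $O(dmn)$. A secondary bookkeeping subtlety is handling the pre-activations $(M\circ W(t))^\top X$; if these are not treated as cached artifacts of the forward pass, their reconstruction must itself respect the $O(mnk)$ budget, which again follows from the row-sparsity of $(M\circ W(t))^\top$.
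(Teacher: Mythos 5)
Your proposal is correct and follows essentially the same route as the paper's proof: invoke the gradient formula $(X\cdot g_t^\top(X))\circ M$ from Lemma~\ref{lem:Delta_W}, construct $g_t(X)$ in $O(mnk)$ time by exploiting the $k$-sparsity of the columns of $M$, and then evaluate only the $mk$ supported entries of $X\cdot g_t^\top(X)$ at $O(n)$ each. Your added remarks about not materializing the full product and about recomputing versus caching the pre-activations are just slightly more explicit versions of what the paper does.
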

\begin{proof}
By using Lemma~\ref{lem:Delta_W}, we have
\begin{align*}
    \frac{\partial L(W(t))}{ \partial W(t)} = (X\cdot g^\top_t(X))\circ M
\end{align*}
where $g_t(x) = ( f(x) -y )\cdot \diag\{\phi'( (M \circ W(t))^\top\cdot  x)\} \cdot a \in \R^m$ and $g_t(X) = (g_t(x_1), \cdots, g_t(x_n)) \in \R^{m \times n}$. We first compute $M\circ W(t)$ in $O(m k)$ time, then we can construct $g_t(X)\in\R^{m\times n}$ in $n\cdot O(m k)$ time. Given $g_t(X)$, since we only need to compute $km$ entries of $X\cdot g_t^\top(X)$, where each entry can be computed in $O(n)$ time, thus we can compute $\frac{\partial L(W(t))}{ \partial W(t)}$ in $O(m n k)$ time.
\end{proof}

\begin{algorithm}\caption{The sparse training algorithm
}
\begin{algorithmic}[1]
\Procedure{Sparse Training}{$\{x_i, y_i\}_{i\in[n]}$}
    \State Initialization $a_r, w_r(0)\sim \mathcal{N}(0, I_d)$ for $r\in[m]$.
    \For{$t=1 \to T$}
        \State {/*forward computation*/}
        \State Compute $M\circ W(t)$ \Comment{Takes $O(m k)$ time.}
        \For{$i=1 \to n$}
            \State $f_t(x_i)\leftarrow a^\top \phi((M\circ W(t))^\top \cdot x_i)$ \Comment{Takes $O(m k)$ time.}
            \State $g_t(x_i) \leftarrow (f(x_i)-y_i)\cdot \diag{\phi'((M\circ W(t))^\top \cdot x_i)}\cdot a$ \Comment{Takes $O(m k)$ time.}
        \EndFor
        \State {/*backward computation*/}
        \State $g_t(X) \leftarrow (g_t(x_1),\cdots, g_t(x_n))$.
        \State $\frac{\partial L(W(t))}{ \partial W(t)} = (X\cdot g^\top_t(X))\circ M$ \Comment{Takes $O(m n k)$ time.}
        \State $W(t+1) = W(t) + \Delta W(t)$ \Comment{$\Delta W(t) = -\eta \frac{\partial L(W(t))}{ \partial W(t)}$.}
    \EndFor
\EndProcedure
\end{algorithmic}
\end{algorithm}

\newpage
\section{Neural Tangent Kernel, Convergence, and Generalization}
\label{sec:appx_ntk}

Our analysis relies on the neural tangent kernel (NTK)~\citep{jacot2018neural} of the network.
\begin{definition}
  Let $f(\cdot, \theta) \colon \mathbb{R}^{d} \to \mathbb{R}$ be the function specified by a neural network with parameters $\theta \in \mathbb{R}^p$ and input dimension $d$.
  The parameter $\theta$ is initialized randomly from a distribution $P$.
  Then its neural tangent kernel (NTK) \citep{jacot2018neural} is a kernel $K \colon \mathbb{R}^{d} \times \mathbb{R}^{d} \to \mathbb{R}$ defined by:
  \begin{equation*}\label{eq:kernel}
    K(x, y) = \E_{\theta \sim P} \left[ \left\langle \frac{\partial f(x; \theta)}{\partial \theta}, \frac{\partial f(y; \theta) }{\partial \theta} \right\rangle\right].
  \end{equation*}
\end{definition}

We can relate the training and generalization behavior of dense and sparse
models through their NTK.
The standard result~\citep{sy19} implies the following.
\begin{proposition}
  \label{thm:ntk}
  Let $f_\mathrm{dense}$ denote a ReLU neural network with $L$ layers with dense weight matrices $\theta_\mathrm{dense}$ with NTK $K_\mathrm{dense}$, and let $f_\mathrm{sparse}$ be the ReLU neural network with the same architecture and with weight matrices $\theta_\mathrm{sparse}$ whose rows are $k$-sparse, and with NTK $K_\mathrm{sparse}$.
  Let $x_1, \dots, x_N$ be the inputs sampled from some distribution $P_X$.
  Suppose that the empirical NTK matrices $K_d = K_\mathrm{dense}(x_i, x_j)$ and $K_s = K_\mathrm{sparse}(x_i, x_j)$ for $(i, j) \in [N] \times [N]$ satisfy $\| K_d - K_s \| \leq \delta$.

  {\bf Training.}
  We knew the the number of iterations of dense network is $\lambda_{\min}(K_d)^{-2} n^2 \log(1/\epsilon)$ to reach the $\epsilon$ training loss. For sparse network we need $(\lambda_{\min}(K_d) -\delta)^{-2} n^2 \log(1/\epsilon)$.

  {\bf Generalization.}
  We knew the the number of iterations of dense network is $\lambda_{\min}(K_d)^{-2} n^2 \log(1/\epsilon)$ to reach the generalization error $\epsilon$ training loss. For sparse network we need $(\lambda_{\min}(K_d) -\delta)^{-2} n^2 \log(1/\epsilon)$.
\end{proposition}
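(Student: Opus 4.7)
The plan is to reduce both statements to standard NTK convergence and generalization theorems for wide dense ReLU networks, using a single spectral perturbation step to pass from $K_d$ to $K_s$. The only genuinely new ingredient is the Weyl bound that converts the hypothesis $\|K_d - K_s\| \leq \delta$ into a lower bound on $\lambda_{\min}(K_s)$.

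First I would apply Weyl's inequality to the two symmetric PSD matrices $K_d$ and $K_s$ to conclude
\begin{equation*}
\lambda_{\min}(K_s) \;\geq\; \lambda_{\min}(K_d) - \|K_d - K_s\| \;\geq\; \lambda_{\min}(K_d) - \delta.
\end{equation*}
This is the only place the hypothesis enters; the rest of the argument treats the sparse network as a black-box wide ReLU network whose empirical NTK matrix has minimum eigenvalue bounded below as above.

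Next I would invoke the now-standard NTK training convergence theorem (\citet{dzps19,als19_dnn,sy19}): for a ReLU network of width $m = \poly(n, 1/\lambda_{\min}, \log(1/\epsilon))$ initialized under standard NTK scaling, gradient descent drives the training loss below $\epsilon$ in $O(\lambda_{\min}^{-2} n^2 \log(1/\epsilon))$ iterations, where $\lambda_{\min}$ denotes the minimum eigenvalue of the empirical NTK matrix. Applied with $K = K_d$ this gives the first stated rate; applied with $K = K_s$ and substituting $\lambda_{\min}(K_s) \geq \lambda_{\min}(K_d) - \delta$ from Weyl yields the sparse rate $(\lambda_{\min}(K_d) - \delta)^{-2} n^2 \log(1/\epsilon)$. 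The generalization claim is proved identically, by plugging the same lower bound into the standard NTK Rademacher-complexity / sample-complexity bound (\citet{all19,sy19}), which also scales as $\lambda_{\min}^{-2} n^2 \log(1/\epsilon)$.

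The hard part is not the perturbation step but checking that the dense NTK machinery actually applies to the sparse parameterization. The dense proofs rely on a full-Gaussian initialization and on the "NTK does not move far from its value at initialization" lemma, so one must re-verify (i) that the infinite-width limit of $K_\mathrm{sparse}$ exists and agrees in expectation with $K_s$ when only the $k$ nonzero entries per row are randomized, and (ii) that the no-movement argument goes through with a width requirement $m = \poly(n, 1/(\lambda_{\min}(K_d) - \delta), \log(1/\epsilon))$ that is uniform in the sparsity mask. Both checks are local in each neuron and depend only on the $k$ active coordinates, so standard sub-Gaussian concentration carries over. Once these technical verifications are in place, the two displayed bounds follow by direct substitution into the dense theorems.
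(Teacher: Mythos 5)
Your proposal is correct and matches the paper's (implicit) argument: the paper offers no written proof of this proposition, simply asserting that it follows from the standard result of \citep{sy19}, and your Weyl-inequality step $\lambda_{\min}(K_s) \ge \lambda_{\min}(K_d) - \delta$ followed by a black-box invocation of the dense NTK convergence and generalization theorems is exactly the intended derivation. Your closing caveat --- that one must re-verify the initialization and ``weights stay near initialization'' lemmas under the sparse parameterization --- is a real gap the paper itself leaves unaddressed in this proposition (it is treated only partially, for a two-layer masked network, in the later gradient-flow section), so flagging it is appropriate rather than a defect of your argument.
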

These results relate the generalization bound of sparse models to that of dense models.
\newpage

\newpage
\section{Dropout Neural Network and KRR}\label{sec:dropout_KRR}
\label{sec:notation1}
We consider a two layer neural network with ReLU activation function, and write 
\begin{align}
\label{eq:pb2}
f(W, x) := \frac{1}{\sqrt{m}}\sum_{r = 1}^{m}a_r \phi(w_{r}^{\top}x) = \frac{1}{\sqrt{m}}\sum_{r = 1}^{m}a_r w_{r}^{\top}x \mathbf{1}_{w_{r}^{\top}x\geq 0}
\end{align}
where $w_r(0) \sim N(0, I_d) \in \R^d$, $a_{r} \sim \mathrm{unif}(\{-1, +1\})$ and all randomnesses are independent. We will fix $a_r$ during the training process and use $\frac{1}{\sqrt{m}}$ normalization factor, both of which are in the literature of \cite{dzps19,sy19,bpsw21}.

Suppose the training data are $(x_1,y_1), \ldots, (x_n, y_n) \in \R^{d}\times \R$, we define the classical objective function $\hat{L}$ as follows:
\begin{align*}
\hat{L}(W) := \frac{1}{2}\sum_{i = 1}^{n}\left(f(W, x_i) - y_i\right)^2.
\end{align*}

The gradient with respect to loss function $\hat{L}$ is %
\begin{align*}
\frac{\partial \hat{L}}{\partial w_r} = \frac{1}{\sqrt{m}}\sum_{i=1}^{n} (f(W, x_i) - y_i)a_r x_i\mathbf{1}_{w_r^{\top}x_i \geq 0}.
\end{align*}

We consider the effect of dropout on network training. For each $r \in [m]$, we introduce the mask by defining random variable $\sigma_{r}$ as follows:
\begin{align*}
\sigma_r = 
\begin{cases}
0, & \mathrm{with~probability~} 1-q ; \\
1/q, & \mathrm{with~probability~} q .
\end{cases}
\end{align*}

It is easy to see that $\E[ \sigma_r ] = 0 \cdot (1-q) + (1/q) \cdot q = 1$ and $\E[ \sigma_r^2 ] = 0^2 \cdot (1-q) + (1/q)^2 \cdot q = 1/q$. %
We assume $\sigma_i$ and $\sigma_j$ are independent for any $i\neq j$, then $\E[\sigma_i\sigma_j] = \E[\sigma_i]\E[\sigma_j]=1$. Let $\sigma = (\sigma_1,\cdots, \sigma_m)$, we define our \textbf{dropout neural net} as
\begin{align}
\label{eq:pb1}
F(W, x, \sigma) := \frac{1}{\sqrt{m}}\sum_{r = 1}^{m}a_r \sigma_r\phi(w_{r}^{\top}x) = \frac{1}{\sqrt{m}}\sum_{r = 1}^{m}a_r\sigma_r w_{r}^{\top}x \mathbf{1}_{w_{r}^{\top}x\geq 0}.
\end{align}
Dropout explicitly change the target function, since we need to minimize the $\ell_2$ distance between $F(W, x, \sigma)$ and $y$, instead of $f(W, x)$ and $y$. Formally, we define the \textbf{dropout loss} as %
\begin{align}
\label{eq:dropout_loss_def}
L(W) := \frac{1}{2}\E_{\sigma}\left[\sum_{i = 1}^{n}\left(F(W, x_i, \sigma) - y_i\right)^2\right].
\end{align}

We first give an explicit formulation of $L$ which also shows the difference between $L$ and $\hat{L}$.%

\begin{lemma}
\label{lem:explicit-regularization}
The dropout loss defined in Eq.~\eqref{eq:dropout_loss_def} can be expressed as the sum of classical loss $\hat{L}$ and a regularization term as
\begin{align}
\label{eq:explicit-regularization}
L(W) = \hat{L}(W) + \frac{1-q}{2mq}\sum_{i=1}^{n} \sum_{r = 1}^{m}\phi(w_r^\top x_i)^{2}.
\end{align}
\end{lemma}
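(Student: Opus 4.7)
The plan is to center the randomness in $\sigma$ around its mean and then expand the square. Write
\[
F(W, x_i, \sigma) - y_i = \big(F(W, x_i, \sigma) - f(W, x_i)\big) + \big(f(W, x_i) - y_i\big),
\]
and square both sides. Taking the expectation over $\sigma$, the cross term
$2\big(f(W,x_i) - y_i\big)\cdot\E_\sigma[F(W,x_i,\sigma) - f(W,x_i)]$ vanishes because $\E[\sigma_r] = 0\cdot(1-q) + (1/q)\cdot q = 1$ for each $r$, so $\E_\sigma[F(W,x_i,\sigma)] = f(W,x_i)$. Thus
\[
\E_\sigma\big[(F(W,x_i,\sigma) - y_i)^2\big] = (f(W,x_i) - y_i)^2 + \E_\sigma\big[(F(W,x_i,\sigma) - f(W,x_i))^2\big].
\]

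Next I would compute the variance term. By definition,
\[
F(W,x_i,\sigma) - f(W,x_i) = \frac{1}{\sqrt{m}}\sum_{r=1}^{m} a_r (\sigma_r - 1) \phi(w_r^\top x_i).
\]
Since the $\sigma_r$'s are independent across $r$, the cross terms $\E[(\sigma_r-1)(\sigma_{r'}-1)] = 0$ for $r\ne r'$. The diagonal terms give $\E[(\sigma_r - 1)^2] = \mathrm{Var}(\sigma_r) = \E[\sigma_r^2] - 1 = 1/q - 1 = (1-q)/q$. Using $a_r^2 = 1$, we get
\[
\E_\sigma\big[(F(W,x_i,\sigma) - f(W,x_i))^2\big] = \frac{1}{m}\cdot \frac{1-q}{q} \sum_{r=1}^{m} \phi(w_r^\top x_i)^2.
\]

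Finally, summing over $i \in [n]$ and multiplying by $1/2$, using the definitions of $L$ and $\hat{L}$, yields exactly the claimed identity \eqref{eq:explicit-regularization}. There is no real obstacle here: the only things to check carefully are that $\E[\sigma_r]=1$ (killing the cross term) and the pairwise independence assumption on $\{\sigma_r\}_{r\in[m]}$ stated right before the definition of the dropout net (which makes the off-diagonal terms in the variance vanish). Everything else is routine bookkeeping.
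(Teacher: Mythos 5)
Your proof is correct and follows essentially the same route as the paper's: both reduce $L(W)-\hat{L}(W)$ to the variance of $F(W,x_i,\sigma)$ over $\sigma$ and evaluate it using $\E[\sigma_r]=1$, $\E[\sigma_r^2]=1/q$, independence across $r$, and $a_r^2=1$. The only cosmetic difference is that you center first and compute $\E_\sigma\bigl[(F-f)^2\bigr]$ directly, while the paper expands $\E_\sigma[F^2]-f^2$ as a double sum over $r_1,r_2$; the two are the same calculation.
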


\begin{proof}
Since $\E[\sigma_r] = 1$, we have
\begin{align}\label{eq:E_F}
    \E_{\sigma}[F(W, x_i, \sigma)] = & ~ \frac{1}{\sqrt{m}}\E_{\sigma}[\sum_{r = 1}^{m}a_r \sigma_r\phi(w_{r}^{\top}x)] =  \frac{1}{\sqrt{m}}\sum_{r=1}^{m}a_r\phi(w_r^{\top}x_i) = f(W, x_i)
\end{align}
holds for any $i \in [n]$. Next, we show the difference between $L$ and $\hat{L}$:
\begin{align}
    & ~ 2( L(W) - \hat{L}(W) ) \notag\\
    = & ~ \E_{\sigma}\left[\sum_{i = 1}^{n}\left(F(W, x_i, \sigma) - y_i\right)^2\right] -  \sum_{i = 1}^{n}\left(f(W, x_i) - y_i\right)^2\notag\\
    = & ~ \sum_{i = 1}^{n}\left(\E_{\sigma}\left[\left(F(W, x_i, \sigma) - y_i\right)^2\right] - \left(f(W, x_i) - y_i\right)^2 \right)\notag\\
    = & ~ \sum_{i = 1}^{n}\left(\E_{\sigma}\left[F(W, x_i, \sigma)^2\right] - f(W, x_i)^2  \right)\notag \\
    = & ~ \sum_{i = 1}^{n}\left(\frac{1}{m}\sum_{r_1, r_2 \in [m]}\E[a_{r_1}a_{r_2}\sigma_{r_1}\sigma_{r_2}\phi(w_{r_1}^{\top}x_i)\phi(w_{r_2}^{\top}x_i)]- \frac{1}{m}\sum_{r_1, r_2 \in [m]}a_{r_1}a_{r_2}\phi(w_{r_1}^{\top}x_i)\phi(w_{r_2}^{\top}x_i)
    \right)\notag\\
    = & ~ \frac{1}{m}\cdot \frac{1-q}{q}\sum_{i = 1}^{n}\sum_{r=1}^{m}a_r^{2}\phi(w_{r}^{\top}x_i)^2 \notag\\
    = & ~ \frac{1}{m}\cdot \frac{1-q}{q}\sum_{i = 1}^{n}\sum_{r=1}^{m}\phi(w_{r}^{\top}x_i)^2 \label{eq:pb3}
\end{align}
where the first step follows from definition, the second step follows from the linearity of expectation, the third step follows from Eq.~\eqref{eq:E_F}, the forth step follows from expansion, the fifth step follows from $\E[\sigma_{r_1}\sigma_{r_2}] = 1$ for $r_1 \neq r_2$ and $\E[\sigma_{r_1}^2] = \frac{1}{q}$, and the last step follows from $a_r^2 = 1$. Thus we have
\begin{align*}
    L(W) = \hat{L}(W) + \frac{1-q}{2mq}\sum_{i = 1}^{n}\sum_{r=1}^{m}\phi(w_{r}^{\top}x_i)^2
\end{align*}
and finish the proof.
\end{proof}

Before we move on, we introduce some extra notations and definitions. We denote%
\begin{align*}
    \overline{W} = \mathrm{vec}(W) = \left[\begin{matrix}
    w_1\\
    w_2\\
    \vdots\\
    w_m
    \end{matrix}
    \right] \in \R^{md}, ~~~and~~~
    Y = \left[\begin{matrix}
    y_1\\
    y_2\\
    \vdots\\
    y_n
    \end{matrix}
    \right] \in \R^{n}.
\end{align*}

\begin{definition}
We define matrix $G^{\infty}\in\R^{n\times n}$ which can be viewed as a Gram matrix from a kernel associated with ReLU function as follows:
\begin{align*}
    G^{\infty}_{ij}(X) = \E_{w\sim\N(0,I)}[x_i^\top x_j \mathbf{1}_{w^\top x_i \geq 0, w^\top x_j\geq 0}],~~~ \forall i, j\in [n]\times [n]
\end{align*}
and assume $\lambda_0 = \lambda_{\min}(G^{\infty}) > 0$\footnote{According to Theorem 3.1 in \cite{dzps19}, the assumption holds when $x_i$ is not parallel with $x_j$ for $i\neq j$, which is reasonable in reality.}.
\end{definition}

\begin{definition}\label{def:Phi_first_time}
We define the masked matrix $\Phi_{W}(X, \sigma)\in \R^{n \times md}$ as %
\begin{align*}
    \Phi_{W}(X,\sigma) := & ~ \frac{1}{\sqrt{m}}\left[
    \begin{matrix}
    \Phi(x_1, \sigma)\\
    \Phi(x_2, \sigma)\\
    \vdots\\
    \Phi(x_n, \sigma)
    \end{matrix}
    \right] \\
    = & ~
    \frac{1}{\sqrt{m}}\left[
    \begin{matrix}
    a_1 \sigma_1  \mathbf{1}_{\langle w_{1}, x_1\rangle \geq 0} x_1^{\top} & a_2 \sigma_2 \mathbf{1}_{\langle w_{2}, x_1\rangle \geq 0} x_1^{\top} &\ldots &a_m \sigma_m \mathbf{1}_{\langle w_{m}, x_1\rangle \geq 0} x_1^{\top}\\
    a_1 \sigma_1 \mathbf{1}_{\langle w_{1}, x_2\rangle \geq 0} x_2^{\top} & a_2 \sigma_2 \mathbf{1}_{\langle w_{2}, x_2\rangle \geq 0} x_2^{\top} &\ldots &a_m \sigma_m \mathbf{1}_{\langle w_{m}, x_2\rangle \geq 0} x_2^{\top}\\
    \vdots &\vdots &\vdots &\vdots\\
    a_1 \sigma_1 \mathbf{1}_{\langle w_{1}, x_n\rangle \geq 0} x_n^{\top} & a_2 \sigma_2 \mathbf{1}_{\langle w_{2}, x_n\rangle \geq 0} x_n^{\top} &\ldots &a_m \sigma_m \mathbf{1}_{\langle w_{m}, x_n\rangle \geq 0} x_n^{\top}\\
    \end{matrix}
    \right]
\end{align*}
and also define the unmasked matrix $\hat{\Phi}_W(X)\in\R^{n\times md}$ as
\begin{align*}
    \hat{\Phi}_W(X) := \frac{1}{\sqrt{m}}\left[
    \begin{matrix}
    a_1  \mathbf{1}_{\langle w_{1}, x_1\rangle \geq 0} x_1^{\top} & a_2 \mathbf{1}_{\langle w_{2}, x_1\rangle \geq 0} x_1^{\top} &\ldots &a_m \mathbf{1}_{\langle w_{m}, x_1\rangle \geq 0} x_1^{\top}\\
    a_1 \mathbf{1}_{\langle w_{1}, x_2\rangle \geq 0} x_2^{\top} & a_2 \mathbf{1}_{\langle w_{2}, x_2\rangle \geq 0} x_2^{\top} &\ldots &a_m \mathbf{1}_{\langle w_{m}, x_2\rangle \geq 0} x_2^{\top}\\
    \vdots &\vdots &\vdots &\vdots\\
    a_1 \mathbf{1}_{\langle w_{1}, x_n\rangle \geq 0} x_n^{\top} & a_2 \mathbf{1}_{\langle w_{2}, x_n\rangle \geq 0} x_n^{\top} &\ldots &a_m \mathbf{1}_{\langle w_{m}, x_n\rangle \geq 0} x_n^{\top}\\
    \end{matrix}
    \right].
\end{align*}
\end{definition}

\begin{definition}\label{def:Psi_first_time}
We  define the masked block diagonal matrix $\Psi_W(X, \sigma) \in\R^{md \times md}$ as
\begin{align*}
    \Psi_{W}(X, \sigma) :=
    \frac{1}{m}\diag \Big( \psi_1, \psi_2, \cdots, \psi_m \Big).
\end{align*}
where $\forall r \in [m]$, $\psi_r \in \R^{d \times d}$ is defined as
\begin{align*}
    \psi_r := a_r^2 \sigma_r^2\sum_{i=1}^{n} x_i x_i^{\top}\cdot \mathbf{1}_{\langle w_{r}, x_i \rangle \geq 0}^{2} = \sigma_r^2 \sum_{i=1}^{n} x_i x_i^{\top}\cdot \mathbf{1}_{\langle w_{r}, x_i \rangle \geq 0}.
\end{align*}
We also define the unmasked block diagonal matrix $\hat{\Psi}_W(X) \in\R^{md \times md}$ as
\begin{align*}
    \hat{\Psi}_{W}(X) :=
    \frac{1}{m}\diag \Big( \hat{\psi}_1, \hat{\psi}_2, \cdots, \hat{\psi}_m \Big).
\end{align*}
where $\forall r \in [m]$, $\hat{\psi}_r \in \R^{d \times d}$ is defined as
\begin{align*}
    \hat{\psi}_r := \sum_{i=1}^{n} x_i x_i^{\top}\cdot \mathbf{1}_{\langle w_{r}, x_i \rangle \geq 0}.
\end{align*}
\end{definition}

\begin{lemma}
It is easy to verify that
\begin{align*}
    \Phi_W(X,\sigma) = \hat{\Phi}_W(X) \cdot D_{\sigma} ~~~ and ~~~\Psi_W(X, \sigma) = \hat{\Psi}_W(X) \cdot D_{\sigma}^2
\end{align*}
where
\begin{align*}
    D_{\sigma} := \diag (\underbrace{\sigma_1,\cdots, \sigma_1}_{d}, \cdots, \underbrace{\sigma_m,\cdots, \sigma_m}_{d})\in\R^{md\times md}.
\end{align*}
\end{lemma}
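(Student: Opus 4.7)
The plan is a direct block-wise verification of the two claimed factorizations, since each identity reduces to checking what happens to one ``neuron's worth'' of columns.

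For the first identity $\Phi_W(X,\sigma) = \hat{\Phi}_W(X)\, D_\sigma$, I would partition the columns of $\hat{\Phi}_W(X) \in \R^{n\times md}$ into $m$ blocks of width $d$, one per neuron $r$. By Definition~\ref{def:Phi_first_time}, the $r$-th column block of $\hat{\Phi}_W(X)$ has $(i,\cdot)$ entry equal to $\tfrac{1}{\sqrt{m}} a_r\, \mathbf{1}_{\langle w_r, x_i\rangle \geq 0}\, x_i^\top$. Since $D_\sigma$ is block-diagonal with $r$-th diagonal block equal to $\sigma_r I_d$, right-multiplication by $D_\sigma$ simply rescales the $r$-th column block by $\sigma_r$, producing $\tfrac{1}{\sqrt{m}} a_r \sigma_r\, \mathbf{1}_{\langle w_r, x_i\rangle \geq 0}\, x_i^\top$. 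This matches $\Phi_W(X,\sigma)$ entry-wise. Nothing subtle happens here because right-multiplication by a block-diagonal matrix acts independently on column blocks of matching width.

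For the second identity $\Psi_W(X,\sigma) = \hat{\Psi}_W(X)\, D_\sigma^2$, both $\hat{\Psi}_W(X)$ and $D_\sigma^2$ are block-diagonal with $d\times d$ diagonal blocks. So their product is block-diagonal, with $r$-th block equal to $\hat{\psi}_r \cdot \sigma_r^2 I_d = \sigma_r^2\, \hat{\psi}_r$. Plugging in the definition of $\hat{\psi}_r$ from Definition~\ref{def:Psi_first_time} gives
\[
  \sigma_r^2 \sum_{i=1}^n x_i x_i^\top\, \mathbf{1}_{\langle w_r, x_i\rangle \geq 0},
\]
which (after the overall $1/m$ factor) is exactly $\psi_r$ as defined, using that $\mathbf{1}_{\langle w_r, x_i\rangle\geq 0}^2 = \mathbf{1}_{\langle w_r, x_i\rangle\geq 0}$ and $a_r^2=1$.

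There is no real obstacle; the lemma is essentially a notational bookkeeping statement identifying the dropout randomness $\sigma$ as a right-multiplication by the block-diagonal scaling $D_\sigma$. The only minor care points are (i) aligning the block widths so that each neuron's $d$ columns (resp.\ $d\times d$ diagonal block) are multiplied by the same scalar $\sigma_r$, and (ii) noting that $\mathbf{1}_{\cdot}^2 = \mathbf{1}_\cdot$ and $a_r^2 = 1$ so that the ``squared'' factors in $\psi_r$ collapse to exactly $\sigma_r^2$ times the unmasked version.
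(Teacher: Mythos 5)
Your verification is correct, and it is exactly the routine block-wise computation the paper intends when it states the identities are "easy to verify" (the paper omits the proof entirely). Both the column-block rescaling argument for $\Phi_W$ and the diagonal-block matching for $\Psi_W$ (using $\mathbf{1}_{\cdot}^2=\mathbf{1}_{\cdot}$ and $a_r^2=1$) check out.
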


For convenience, we will simply denote $\Phi_W = \Phi_W(X, \sigma)$ and $\Psi_W = \Psi_W(X,\sigma)$. Then by using the above notations, we can express our dropout loss as $L(W)=\frac{1}{2}\E_{\sigma} [\|\Phi_W\overline{W} - Y\|_2^2]$.

\begin{lemma}\label{lem:explicit_regularization_2}
If we denote $\lambda = \frac{1-q}{q}\geq 0$, then we have
\begin{align*}
    L(W) = \frac{1}{2}\|\hat{\Phi}_{W} \overline{W} - Y\|_{2}^{2} + \frac{\lambda}{2}\overline{W}^{\top}\hat{\Psi}_{W}\overline{W}.
\end{align*}
\end{lemma}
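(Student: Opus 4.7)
The plan is to combine the identity from \cref{lem:explicit-regularization} with direct algebraic rewrites of each of its two summands in the vectorized notation $\overline{W}$, $\hat{\Phi}_W$, $\hat{\Psi}_W$ introduced in \cref{def:Phi_first_time,def:Psi_first_time}. Since $\lambda = (1-q)/q$, the earlier lemma already gives
\begin{equation*}
  L(W) = \hat{L}(W) + \frac{\lambda}{2m} \sum_{i=1}^{n}\sum_{r=1}^{m} \phi(w_r^\top x_i)^2,
\end{equation*}
so it suffices to match the two summands with $\tfrac{1}{2}\|\hat{\Phi}_W \overline{W} - Y\|_2^2$ and $\tfrac{\lambda}{2} \overline{W}^\top \hat{\Psi}_W \overline{W}$, respectively.

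First, I would verify that $\hat{\Phi}_W \overline{W}$ equals the vector $(f(W,x_1),\dots,f(W,x_n))^\top$. Reading off the $i$-th row of $\hat{\Phi}_W$ from \cref{def:Phi_first_time} and multiplying by $\overline{W}=(w_1;\dots;w_m)$ yields $\tfrac{1}{\sqrt{m}}\sum_{r=1}^m a_r \mathbf{1}_{\langle w_r,x_i\rangle \geq 0} x_i^\top w_r$, which is exactly $f(W,x_i)$ by \eqref{eq:pb2} since $\phi(z) = z\mathbf{1}_{z\geq 0}$. Therefore $\tfrac{1}{2}\|\hat{\Phi}_W \overline{W} - Y\|_2^2 = \tfrac{1}{2}\sum_{i=1}^n (f(W,x_i)-y_i)^2 = \hat{L}(W)$.

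Second, for the regularization term, I would use the block-diagonal structure of $\hat{\Psi}_W$ to write
\begin{equation*}
  \overline{W}^\top \hat{\Psi}_W \overline{W} = \frac{1}{m}\sum_{r=1}^m w_r^\top \hat{\psi}_r w_r = \frac{1}{m}\sum_{r=1}^m \sum_{i=1}^n (w_r^\top x_i)^2 \mathbf{1}_{\langle w_r, x_i\rangle \geq 0},
\end{equation*}
and then observe that $(w_r^\top x_i)^2 \mathbf{1}_{\langle w_r, x_i\rangle \geq 0} = \phi(w_r^\top x_i)^2$, so this expression equals $\tfrac{1}{m}\sum_{i,r} \phi(w_r^\top x_i)^2$. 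Multiplying by $\lambda/2$ matches the regularization term in \cref{lem:explicit-regularization}, and combining the two pieces completes the proof.

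There is no real obstacle: the work is essentially unpacking the definitions of $\hat{\Phi}_W$ and $\hat{\Psi}_W$ and recognizing that the ReLU indicator identity $\phi(z)^2 = z^2 \mathbf{1}_{z\geq 0}$ converts the block-quadratic form into the squared-activation sum appearing in \cref{eq:explicit-regularization}. The only care needed is the $1/\sqrt{m}$ and $1/m$ normalization bookkeeping, which is the same factor coming from the $1/\sqrt{m}$ in $\hat{\Phi}_W$ squared versus the $1/m$ prefactor in $\hat{\Psi}_W$.
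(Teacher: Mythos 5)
Your proposal is correct and follows essentially the same route as the paper's proof: identify $\hat{\Phi}_W \overline{W}$ with the prediction vector so the first term equals $\hat{L}(W)$, use the block-diagonal structure of $\hat{\Psi}_W$ together with $\phi(z)^2 = z^2\mathbf{1}_{z\ge 0}$ to rewrite the quadratic form as $\frac{1}{m}\sum_{i,r}\phi(w_r^\top x_i)^2$, and then invoke \cref{lem:explicit-regularization}. No gaps.
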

\begin{proof}
As for the first term, we have
\begin{align*}
    \|\hat{\Phi}_W \overline{W} - Y\|_2^2 = & ~ \sum_{i=1}^n (\frac{1}{\sqrt{m}}\sum_{r=1}^m a_r \mathbf{1}_{\langle w_{r}, x_i \rangle \geq 0} x_i^\top \cdot w_r - y_i)^2 \\
    = & ~ \sum_{i=1}^n (\frac{1}{\sqrt{m}}\sum_{r=1}^m a_r \phi(w_r^\top x_i) - y_i)^2 \\
    = & ~ \sum_{i=1}^n (f(W,x_i) - y_i)^2 \\
    = & ~ 2\hat{L}(W).
\end{align*}
As for the second term, since $\hat{\Psi}_W$ is a block diagonal matrix, we have
\begin{align*}
    \overline{W}^\top \hat{\Psi}_W \overline{W} = & ~ \frac{1}{m}\sum_{r=1}^m \Big(w_r^\top \cdot \big( a_r^2\sum_{i=1}^n x_i x_i^\top \cdot \mathbf{1}_{\langle w_{r}, x_i \rangle \geq 0}^{2}\big) \cdot w_r\Big) \\
    = & ~ \frac{1}{m}\sum_{r=1}^m\sum_{i=1}^n \big((w_r^\top x_i)\cdot(w_r^\top x_i)^\top\cdot \mathbf{1}_{\langle w_{r}, x_i \rangle \geq 0}^{2}\big) \\
    = & ~ \frac{1}{m}\sum_{i=1}^n \sum_{r=1}^m\phi(w_r^\top x_i)^2.
\end{align*}
Thus by using Lemma~\ref{lem:explicit-regularization}, we have
\begin{align*}
    L(W) = & ~ \hat{L}(W) + \frac{1-q}{2mq}\sum_{i=1}^n\sum_{r=1}^m \phi(w_r^\top x_i)^2 \\
    = & ~ \frac{1}{2}\|\hat{\Phi}_{W}\overline{W} - Y\|_{2}^{2} + \frac{\lambda}{2}\overline{W}^{\top}\hat{\Psi}_{W}\overline{W}
\end{align*}
and finish the proof.
\end{proof}

\begin{remark}
A classical kernel ridge regression problem can be defined as
\begin{align*}
    \min_{W} \frac{1}{2} \|\phi(X)^\top W - Y\|_2^2 + \frac{\lambda}{2} \|W\|_2^2
\end{align*}
where $\phi: \R^d \to \mathcal{F}$ is a feature map. Note that Lemma~\ref{lem:explicit_regularization_2} breaks the dropout loss into two parts: the first part is an error term, and the second part can be seen as a regularization term. 
Thus the task of minimizing the dropout loss $L(W)$ is equivalent to a kernel ridge regression (KRR) problem. 
\end{remark}

\newpage
\section{Dynamics of Kernel Methods (Continuous Gradient Flow)}\label{sec:gradient_flow}

The NTK also allows us to analyze the training convergence of sparse networks.
We show that gradient descent converges globally when training wide sparse networks.
This convergence speed is similar to that of dense models~\citep{dzps19,als19_dnn}.

In this section we will discuss the dynamics of kernel method under the mask $\sigma$, which adds sparsity in the output layer. Our problem will be considered in over-parameterized scheme.
First we introduce some additional definitions and notations. We define symmetric Gram matrix $G(W)$ as $G(W) := \hat{\Phi}_W\cdot\hat{\Phi}_W^\top \in \R^{n\times n}$. For all $i, j\in [n] \times [n]$, we have
\begin{align*}
    G(W)_{ij} = \frac{1}{m}\sum_{r=1}^m a_r^2 \mathbf{1}_{\langle w_r, x_i \rangle\geq 0, \langle w_r, x_j \rangle\geq 0} x_i^\top x_j = \frac{1}{m} x_i^\top x_j \sum_{r=1}^m \mathbf{1}_{\langle w_r, x_i \rangle\geq 0, \langle w_r, x_j \rangle\geq 0}.
\end{align*}
We define block symmetric matrix $H(W)$ as $H(W) = \hat{\Phi}_W^\top \cdot \hat{\Phi}_W\in\R^{md\times md}$. Then for all $i, j\in [m] \times [m]$, the $(i,j)$-th block of $H(W)$ is
\begin{align*}
    H(W)_{ij} = \frac{1}{m} a_i a_j \sum_{k=1}^n x_k x_k^\top \cdot \mathbf{1}_{\langle w_i, x_k \rangle\geq 0, \langle w_j, x_k \rangle\geq 0} \in \R^{d\times d}.
\end{align*}

By using Lemma~\ref{lem:explicit_regularization_2}, we consider the corresponding kernel regression problem: 
\begin{align}
\label{eq:sc1}
    \min_{W}L_{k}(W) = \min_{W}\frac{1}{2}\|\hat{\Phi} \overline{W} - Y\|_2^2 + \frac{\lambda}{2}\overline{W}^{\top}\hat{\Psi} \overline{W}
\end{align}
where $\hat{\Phi} \in \R^{n \times md}$, $\overline{W}\in \R^{md\times 1}$, $Y\in \R^{n\times 1}$ and $\hat{\Psi}\in \R^{md\times md}$. The main difference from neural network is that we assume $\hat{\Phi}$ (related to NTK, e.g., see Definition~\ref{def:Phi_first_time}) and $\hat{\Psi}$ (related to regularization term, e.g., see Definition~\ref{def:Psi_first_time}) do not change during the training process.

The gradient of $L_k$ can be expressed as
\begin{align}
\label{eq:nabla_W}
\nabla_{\overline{W}} L_k(W) = \hat{\Phi}^{\top}\hat{\Phi} \overline{W} - \hat{\Phi}^{\top} Y + \lambda\hat{\Psi} \overline{W}.
\end{align}
We use $\overline{W^{\star}}$ to denote the optimal solution of Eq.~\eqref{eq:sc1}, and it satisfies 
\begin{align}
\label{eq:st2}
    \nabla_{\overline{W}} L_k(W)\big|_{\overline{W} = \overline{W^{\star}}} = (\hat{\Phi}^{\top}\hat{\Phi} + \lambda \hat{\Psi})\overline{W^{\star}} - \hat{\Phi}^{\top}Y = 0.
\end{align}
Since $\hat{\Psi}$ is a positive diagonal matrix, $\hat{\Phi}^{-\frac{1}{2}}$ exists, thus we have
\begin{align*}
    \overline{W^{\star}} = & ~ (\hat{\Phi}^\top \hat{\Phi} + \lambda \hat{\Psi})^{-1} \hat{\Phi}^\top Y.
\end{align*}
Next, we consider the question from a continuous gradient flow aspect. In time $t$, we denote $\overline{W}(t) = \mathrm{vec}(W(t)), \hat{\Phi}(t) = \hat{\Phi}_{W(t)}, \hat{\Psi}(t) = \hat{\Psi}_{W(t)}$. We also denote $G(t) = G(W(t))$ and $H(t) = H(W(t))$. Following the literature of \cite{dzps19}, we consider the ordinary differential equation defined by
\begin{align}
    \label{eq:ode_flow}
    \frac{\d w_r(t)}{\d t} = - \frac{\partial L_k(W(t))}{\partial w_r(t)}.
\end{align}

\begin{lemma}[Lemma 3.1 in \cite{dzps19}]\label{lem:dzps3.1}
If $m = \Omega(\frac{n^2}{\lambda_0^2}\log(\frac{n}{\delta}))$, we have with probability at least $1-\delta$, $\|G(0) - G^{\infty}\|_2 \leq \frac{\lambda_0}{4}$ and $\lambda_{\min}(G(0))\geq \frac{3}{4}\lambda_0$.
\end{lemma}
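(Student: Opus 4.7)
The plan is to follow the classical concentration argument from the NTK literature: show that each entry of $G(0)$ is an empirical average of $m$ i.i.d.\ bounded random variables whose expectation equals the corresponding entry of $G^\infty$, apply a Hoeffding-type concentration entrywise, union bound over the $n^2$ entries, and finally convert the entrywise bound into a spectral-norm bound using $\|\cdot\|_2 \le \|\cdot\|_F$. The second claim about $\lambda_{\min}(G(0))$ then follows immediately from Weyl's inequality.

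In more detail, first I would fix a pair $i,j \in [n]$ and define, for each hidden unit $r \in [m]$, the random variable $Z_r^{(i,j)} = x_i^\top x_j \cdot \mathbf{1}_{\langle w_r, x_i \rangle \ge 0,\, \langle w_r, x_j \rangle \ge 0}$, so that $G(0)_{ij} = \frac{1}{m}\sum_{r=1}^m Z_r^{(i,j)}$ and, by the definition of $G^\infty$, $\E[Z_r^{(i,j)}] = G^\infty_{ij}$. Because the inputs satisfy $\|x_i\|_2 = 1$ (the usual NTK normalization), each $|Z_r^{(i,j)}| \le 1$. Hoeffding's inequality then gives
\begin{equation*}
  \Pr\!\left[\,\bigl|G(0)_{ij} - G^\infty_{ij}\bigr| > t\,\right] \;\le\; 2\exp\!\left(-\tfrac{m t^2}{2}\right).
\end{equation*}

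Next I would set $t = \lambda_0/(4n)$ and take a union bound over the $n^2$ index pairs $(i,j)$, which requires $m = \Omega\!\left(\tfrac{n^2}{\lambda_0^2}\log(n/\delta)\right)$ to guarantee the event $\bigl|G(0)_{ij} - G^\infty_{ij}\bigr| \le \lambda_0/(4n)$ simultaneously for all $(i,j)$ with probability at least $1-\delta$. On this event,
\begin{equation*}
  \|G(0) - G^\infty\|_2 \;\le\; \|G(0) - G^\infty\|_F \;\le\; n \cdot \tfrac{\lambda_0}{4n} \;=\; \tfrac{\lambda_0}{4},
\end{equation*}
proving the first claim. For the second claim, Weyl's inequality gives $\lambda_{\min}(G(0)) \ge \lambda_{\min}(G^\infty) - \|G(0) - G^\infty\|_2 \ge \lambda_0 - \lambda_0/4 = 3\lambda_0/4$.

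There is essentially no obstacle here: each step is standard. The only mild subtlety is the bookkeeping on the Hoeffding constants and the choice of $t$ so that the final sample-size dependence is exactly $m = \Omega(n^2 \lambda_0^{-2}\log(n/\delta))$; using $\|\cdot\|_2 \le \|\cdot\|_F$ costs a factor of $n$ which is precisely what forces the $n^2$ in the numerator (a sharper argument via matrix Bernstein could avoid this, but the cruder Frobenius bound matches the stated rate and is all we need).
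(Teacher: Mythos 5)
Your proof is correct and is essentially the standard argument from \cite{dzps19}: the paper itself does not re-prove this lemma but simply imports it, and the original proof proceeds exactly as you describe (entrywise Hoeffding concentration of $G(0)_{ij}$ around $G^\infty_{ij}$, a union bound over the $n^2$ pairs, the bound $\|\cdot\|_2 \le \|\cdot\|_F$, and Weyl's inequality for the eigenvalue claim). The only implicit ingredient worth flagging is the normalization $\|x_i\|_2 \le 1$ needed for the boundedness in Hoeffding, which you correctly identify as the standard assumption in this setting.
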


\begin{lemma}[Lemma 3.2 in \cite{dzps19}]\label{lem:dzps3.2}
If $w_1,\cdots, w_m$ are i.i.d generated from $\mathcal{N}(0,I_d)$, then with probability at least $1-\delta$, the following holds. For any set of weight vectors $w_1,\cdots,w_m \in\R^d$ that satisfy for any $r\in [m], \|w_r - w_r(0)\|_2\leq \frac{c\delta \lambda_0}{n^2}$ for some small positive constant $c$, then matrix $G\in\R^{d\times d}$ satisfies $\|G - G(0)\|_2 < \frac{\lambda_0}{4}$ and $\lambda_{\min}(G) > \frac{\lambda_0}{2}$.
\end{lemma}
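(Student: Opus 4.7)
The plan is to control $\|G - G(0)\|_2$ entrywise by counting, for each pair $(i,j)$, how many neurons $r \in [m]$ have flipped their activation pattern when moving from $w_r(0)$ to $w_r$, and then use Lemma~\ref{lem:dzps3.1} together with Weyl's inequality to transfer the spectral bound to $\lambda_{\min}(G)$.

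First I would write the entrywise difference
\begin{equation*}
G_{ij} - G(0)_{ij} = \frac{x_i^\top x_j}{m}\sum_{r=1}^m \Bigl(\mathbf{1}_{\langle w_r, x_i\rangle\ge 0,\,\langle w_r, x_j\rangle\ge 0} - \mathbf{1}_{\langle w_r(0), x_i\rangle\ge 0,\,\langle w_r(0), x_j\rangle\ge 0}\Bigr),
\end{equation*}
and observe that the $r$-th summand is nonzero only on the ``flipping event'' $A_{i,r} \cup A_{j,r}$, where $A_{i,r} = \{\mathrm{sign}(\langle w_r, x_i\rangle) \ne \mathrm{sign}(\langle w_r(0), x_i\rangle)\}$. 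Since $\|w_r - w_r(0)\|_2 \le R := c\delta\lambda_0/n^2$ and $\|x_i\|_2 \le 1$, the event $A_{i,r}$ forces $|\langle w_r(0), x_i\rangle| \le R$. Because $\langle w_r(0), x_i\rangle \sim \mathcal{N}(0, \|x_i\|_2^2)$, the Gaussian anti-concentration bound gives $\Pr[A_{i,r}] \le \frac{2R}{\sqrt{2\pi}\,\|x_i\|_2} \le O(R)$.

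Next I would define $S_i = \sum_{r=1}^m \mathbf{1}_{A_{i,r}}$ and bound it in expectation: $\E[S_i] \le O(mR)$. To pass from expectation to a high-probability bound, I would apply Markov's inequality with a union bound over $i \in [n]$, obtaining $S_i \le O(mRn/\delta)$ for all $i$ simultaneously with probability $1-\delta$. (This is precisely why the radius carries a $\delta$ factor rather than the square root that a Bernstein-type bound would give, matching the shape of the hypothesis $R = c\delta\lambda_0/n^2$.) Then
\begin{equation*}
|G_{ij} - G(0)_{ij}| \le \frac{|x_i^\top x_j|}{m}(S_i + S_j) \le \frac{O(Rn/\delta)}{1},
\end{equation*}
and hence $\|G - G(0)\|_2 \le \|G - G(0)\|_F \le n \cdot O(Rn/\delta) = O(Rn^2/\delta)$. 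Choosing $c$ small enough so that $cRn^2/\delta \le \lambda_0/4$ (which is equivalent to $R \le c\delta\lambda_0/n^2$ for appropriate $c$) gives the desired $\|G - G(0)\|_2 < \lambda_0/4$.

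Finally, combining with Lemma~\ref{lem:dzps3.1} (which ensures $\lambda_{\min}(G(0)) \ge \tfrac{3}{4}\lambda_0$) via Weyl's inequality yields $\lambda_{\min}(G) \ge \lambda_{\min}(G(0)) - \|G - G(0)\|_2 \ge \tfrac{3}{4}\lambda_0 - \tfrac{1}{4}\lambda_0 = \tfrac{\lambda_0}{2}$. The main technical obstacle is choosing the right concentration tool: a Markov-type argument gives the cleanest proof but requires that the radius $R$ shrinks linearly in $\delta$, which matches the statement; sharper tools like Bernstein would change the trade-off but are unnecessary here. A subtle point is that $w_r$ is allowed to depend adversarially on the randomness of $w_r(0)$ (as long as it stays within radius $R$), so one must take the worst case over configurations in the ball, which is automatic once the bound is stated in terms of the event $A_{i,r}$ that depends only on $w_r(0)$ and the radius constraint.
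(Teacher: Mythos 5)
The paper does not actually prove this lemma---it is imported verbatim as Lemma~3.2 of \cite{dzps19}---so there is no in-paper proof to compare against; your reconstruction is correct and is essentially the original argument of Du et al.: bound the per-neuron sign-flip probability by Gaussian anti-concentration (which tacitly uses the standard normalization $\|x_i\|_2 = 1$, needed so the density factor $1/\|x_i\|_2$ does not blow up), pass to a high-probability bound via Markov (exactly why the radius scales linearly in $\delta$), control $\|G - G(0)\|_2$ by the Frobenius norm, and finish with Weyl's inequality against Lemma~\ref{lem:dzps3.1}. The only blemishes are cosmetic: you overload the symbol $c$ for both the radius constant and the hidden constant in your $O(\cdot)$, and a fully careful accounting would split the failure probability between this event and the Lemma~\ref{lem:dzps3.1} event.
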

The above lemma shows that for $W$ that is close to $W(0)$, the Gram matrix $G$ also stays close to the initial Gram matrix $G(0)$, and its minimal eigenvalue is lower bounded. 

\begin{lemma}[Gradient Flow]\label{lem:gradient_flow}
If we assume $\lambda_{\min}(\hat{\Psi}) \geq \Lambda_0 > 0$, then with probability at least $1 - \delta$, for $w_1,\cdots,w_m \in\R^d$ that satisfy $\forall r\in [m], \|w_r - w_r(0)\|_2\leq \frac{c\delta \lambda_0}{n^2}$, we have
\begin{align*}
    \frac{\d\|\hat{\Phi} \overline{W} -\hat{\Phi} \overline{W^{\star}} \|_{2}^{2}}{\d t} \leq -\gamma \|\hat{\Phi} \overline{W} -\hat{\Phi} \overline{W^{\star}}\|_{2}^{2}
\end{align*}
holds some constant $\gamma > 0$.
\end{lemma}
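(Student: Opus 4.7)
\textbf{Proof plan for \cref{lem:gradient_flow}.}

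The plan is to derive a closed-form ODE for the error $v(t) := \overline{W}(t) - \overline{W^\star}$ and then differentiate $\|\hat{\Phi}v\|_2^2$ directly. From \cref{eq:nabla_W} and the optimality condition \cref{eq:st2} we have
\begin{equation*}
  \nabla_{\overline{W}} L_k(W) \;=\; A(\overline{W} - \overline{W^\star}), \qquad A \;:=\; \hat{\Phi}^\top\hat{\Phi} + \lambda\hat{\Psi} \;=\; S + \lambda \hat{\Psi},
\end{equation*}
where $S := \hat{\Phi}^\top\hat{\Phi}$. Hence the gradient flow \cref{eq:ode_flow} reduces to the linear ODE $\tfrac{d v}{dt} = -Av$, and therefore
\begin{equation*}
  \frac{d\,\|\hat{\Phi}v\|_2^2}{dt} \;=\; 2 v^\top S\,\frac{dv}{dt} \;=\; -2\,v^\top SAv \;=\; -2\|Sv\|_2^2 \;-\; 2\lambda\, v^\top S \hat{\Psi} v .
\end{equation*}

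Next I would lower-bound the two pieces on the right. For the first piece, I invoke \cref{lem:dzps3.2}: under the assumption $\|w_r - w_r(0)\|_2 \leq \tfrac{c\delta\lambda_0}{n^2}$ we have $\lambda_{\min}(G) \geq \lambda_0/2$, and since the nonzero eigenvalues of $S=\hat{\Phi}^\top\hat{\Phi}$ coincide with those of $G=\hat{\Phi}\hat{\Phi}^\top$, a straightforward spectral decomposition of $v$ in the eigenbasis of $S$ yields $\|Sv\|_2^2 = v^\top S^2 v \geq \lambda_{\min}(G) \cdot v^\top S v \geq (\lambda_0/2)\|\hat{\Phi}v\|_2^2$. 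For the cross term, applying Cauchy--Schwarz followed by an AM--GM split with a free parameter $\epsilon$ gives
\begin{equation*}
  \bigl|v^\top S\hat{\Psi}v\bigr| \;\leq\; \tfrac{\epsilon}{2}\|Sv\|_2^2 + \tfrac{1}{2\epsilon}\|\hat{\Psi}v\|_2^2,
\end{equation*}
so that combining the two bounds and choosing $\epsilon$ sufficiently small relative to $\lambda$ leaves a strictly positive multiple of $\|Sv\|_2^2$ minus a residual proportional to $\|\hat{\Psi}v\|_2^2$.

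The main obstacle is this residual: because $S$ is rank-deficient (rank at most $n$ while $A$ is of full rank $md$), the naive estimate $\|\hat{\Psi}v\|_2^2 \leq \|\hat{\Psi}\|_{\mathrm{op}}^2 \|v\|_2^2$ reintroduces $\|v\|_2^2$ on the right-hand side, which cannot be controlled purely by $\|\hat{\Phi}v\|_2^2$. My plan to circumvent this is to use the assumption $\lambda_{\min}(\hat{\Psi}) \geq \Lambda_0 > 0$ via a composite Lyapunov function $V(t) = \|\hat{\Phi}v\|_2^2 + \mu\|v\|_2^2$ with $\mu > 0$ chosen so that the contribution $-2\mu v^\top A v \leq -2\mu\lambda\Lambda_0\|v\|_2^2$ from $\tfrac{d}{dt}\|v\|_2^2$ strictly dominates the residual $\lambda\|\hat{\Psi}\|_{\mathrm{op}}^2\|v\|_2^2$ produced above. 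This yields $\tfrac{dV}{dt} \leq -\gamma' V$ for some $\gamma' > 0$, and then combining $\|\hat{\Phi}v\|_2^2 \leq V$ with the joint decay and the elementary inequality $\|\hat{\Phi}v\|_2^2 \geq \lambda_{\min}(G) \,\|v_{\parallel}\|_2^2$ on the range component of $v$ lets us absorb the auxiliary term and extract the claimed differential inequality $\tfrac{d}{dt}\|\hat{\Phi}v\|_2^2 \leq -\gamma\|\hat{\Phi}v\|_2^2$ with some explicit $\gamma = \gamma(\lambda_0,\Lambda_0,\lambda,\|\hat{\Psi}\|_{\mathrm{op}}) > 0$.

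Finally, a sanity check: in the limit $\lambda\to 0$ the cross term vanishes and the argument collapses to the classical $\tfrac{d}{dt}\|\hat{\Phi}v\|_2^2 \leq -\lambda_0\|\hat{\Phi}v\|_2^2$ rate of \citet{dzps19}, and for $\lambda > 0$ the positivity of $\Lambda_0$ is precisely what makes $A$ strongly positive definite so that the auxiliary $\|v\|_2^2$ piece of the Lyapunov function contracts at rate $2\lambda\Lambda_0$. The most delicate step, and the one where I expect to spend the most care, is choosing $\mu$ and $\epsilon$ together so that both coefficients in the bound on $\tfrac{dV}{dt}$ come out strictly negative while the comparison $V \leq C\|\hat{\Phi}v\|_2^2$ needed to return to a differential inequality purely in $\|\hat{\Phi}v\|_2^2$ still holds along the trajectory.
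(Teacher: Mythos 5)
Your setup coincides with the paper's: both derive the linear flow $\frac{\d v}{\d t} = -\bigl(\hat{\Phi}^\top\hat{\Phi} + \lambda\hat{\Psi}\bigr)v$ for $v = \overline{W} - \overline{W^{\star}}$ from \cref{eq:nabla_W} and \cref{eq:st2}, and both expand $\frac{\d}{\d t}\|\hat{\Phi}v\|_2^2 = -2\,v^\top S^2 v - 2\lambda\, v^\top S\hat{\Psi}v$ with $S = \hat{\Phi}^\top\hat{\Phi}$, bounding the first term by a negative multiple of $\|\hat{\Phi}v\|_2^2$ via \cref{lem:dzps3.2}. The divergence is entirely in the cross term. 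The paper treats it as a help, not a hindrance: it asserts $v^\top S\hat{\Psi}v \geq \Lambda_0\, v^\top S v = \Lambda_0\|\hat{\Phi}v\|_2^2$ directly from the hypothesis $\lambda_{\min}(\hat{\Psi})\geq\Lambda_0$, so the cross term only strengthens the contraction and the lemma follows in one line with $\gamma = 2\lambda_0 + 2\lambda\Lambda_0$. You instead discard the sign of the cross term, treat it as an adversarial error via Cauchy--Schwarz and AM--GM, and try to absorb the resulting $\|\hat{\Psi}v\|_2^2$ residual with a composite Lyapunov function $V = \|\hat{\Phi}v\|_2^2 + \mu\|v\|_2^2$.

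That last move is where your plan has a genuine gap. First, the comparison $V \leq C\|\hat{\Phi}v\|_2^2$ that you need in order to return to an inequality in $\|\hat{\Phi}v\|_2^2$ alone cannot hold: $\hat{\Phi}\in\R^{n\times md}$ has a kernel of dimension at least $md-n$, so $\|v\|_2^2$ is not dominated by $\|\hat{\Phi}v\|_2^2$ for generic $v$, and nothing keeps $v(t)$ in the row space of $\hat{\Phi}$ since $A = S+\lambda\hat{\Psi}$ does not preserve that subspace. Second, even granting $\frac{\d V}{\d t}\leq -\gamma'V$, you cannot extract the claimed \emph{pointwise} differential inequality: $\frac{\d}{\d t}\|\hat{\Phi}v\|_2^2 = \frac{\d V}{\d t} + 2\mu\, v^\top A v \geq \frac{\d V}{\d t}$, so the decay of $V$ bounds the derivative of $\|\hat{\Phi}v\|_2^2$ from \emph{below}, not above. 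The missing idea is precisely the one the hypothesis $\lambda_{\min}(\hat{\Psi})\geq\Lambda_0$ is there to supply: lower-bound $v^\top S\hat{\Psi}v$ by a positive multiple of $v^\top Sv$ rather than upper-bound its magnitude. (To be fair, the paper's own inequality $v^\top S\hat{\Psi}v\geq\Lambda_0\, v^\top Sv$ is itself only immediate when $S$ and $\hat{\Psi}$ interact favorably, e.g.\ commute, since the symmetric part of a product of two PSD matrices need not be PSD; but your Cauchy--Schwarz route gives up exactly the sign information that the argument needs to close, and without it the lemma as stated is out of reach.)
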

\begin{proof}
By using Eq.~\eqref{eq:nabla_W} and Eq.~\eqref{eq:ode_flow}, we can express $\frac{\d \overline{W}}{\d t}$ as
\begin{align}
    \label{eq:d_W_bar}
    \frac{\d \overline{W}}{\d t} = - \nabla_{\overline{W}} L_k(W) = -( \hat{\Phi}^{\top}\hat{\Phi} \overline{W} - \hat{\Phi}^{\top} Y + \lambda\hat{\Psi} \overline{W}).
\end{align}
Then we have 
\begin{align}\label{eq:d_Phi_W}
    & ~ \frac{\d \|\hat{\Phi} \overline{W} -\hat{\Phi} \overline{W^{\star}} \|_{2}^{2}}{\d  t} \notag \\
    = & ~ \frac{\d \|\hat{\Phi} \overline{W} -\hat{\Phi} \overline{W^{\star}} \|_{2}^{2}}{\d  \overline{W}}\cdot\frac{\d \overline{W}}{\d t} \notag \\
    = & ~ 2(\hat{\Phi} \overline{W} - \hat{\Phi} \overline{W^{\star}})^{\top}\hat{\Phi} \cdot (- ( \hat{\Phi}^{\top}\hat{\Phi} \overline{W} - \hat{\Phi}^{\top}Y + \lambda\hat{\Psi} \overline{W})) \notag \\
    = & ~ -2 (\hat{\Phi} \overline{W} - \hat{\Phi} \overline{W^{\star}})^{\top}\hat{\Phi} ( \hat{\Phi}^{\top}\hat{\Phi} \overline{W} - \hat{\Phi}^{\top}Y + \lambda\hat{\Psi} \overline{W}) \notag \\
    = & ~ -2 (\hat{\Phi} \overline{W} - \hat{\Phi} \overline{W^{\star}})^{\top}\hat{\Phi} (\hat{\Phi}^{\top}\hat{\Phi} \overline{W} - \hat{\Phi}^{\top}\hat{\Phi} \overline{W^{\star}} - \lambda\hat{\Psi} \overline{W^{\star}} + \lambda\hat{\Psi} \overline{W}) \notag \\
    = & ~ -2 (\hat{\Phi} \overline{W} - \hat{\Phi} \overline{W^{\star}})^{\top}\hat{\Phi}\hat{\Phi}^{\top}(\hat{\Phi} \overline{W} - \hat{\Phi} \overline{W^{\star}}) - 2\lambda(\hat{\Phi} \overline{W} - \hat{\Phi} \overline{W^{\star}})^{\top}\hat{\Phi} (\hat{\Psi} \overline{W} - \hat{\Psi} \overline{W^{\star}}) \notag \\
    \leq & ~ -2 \lambda_0 \|\hat{\Phi} \overline{W} - \hat{\Phi} \overline{W^{\star}}\|_2^2 - 2\lambda (\overline{W} - \overline{W^{\star}})^{\top}\hat{\Phi}^{\top} \hat{\Phi} \hat{\Psi} (\overline{W} - \overline{W^{\star}})
\end{align}
where the second step follows from Eq.~\eqref{eq:d_W_bar}, the fourth step follows from Eq.~\eqref{eq:st2}, and the last step follows from the definition that $\lambda_0 = \lambda_{\min}(G) = \lambda_{\min}(\hat{\Phi} \hat{\Phi}^\top)$.

As for the second term in the Eq.~\eqref{eq:d_Phi_W}, we have
\begin{align}
    \label{eq:second_term}
    & ~ 2\lambda (\overline{W} - \overline{W^{\star}})^{\top}\hat{\Phi}^{\top} \hat{\Phi} \hat{\Psi} (\overline{W} - \overline{W^{\star}}) \notag \\
    = & ~ 2\lambda (\overline{W}\hat{\Phi}^{\top} \hat{\Phi} - \overline{W^{\star}}\hat{\Phi}^{\top} \hat{\Phi})^{\top} \hat{\Psi} (\overline{W} - \overline{W^{\star}}) \notag \\
    \geq & ~ 2\lambda \Lambda_0 (\overline{W} - \overline{W^{\star}})^{\top}\hat{\Phi}^{\top} \hat{\Phi} (\overline{W} - \overline{W^{\star}}) \notag \\
    = & ~ 2\lambda \Lambda_0 \|\hat{\Phi} \overline{W} - \hat{\Phi}\overline{W^{\star}}\|_2^2
\end{align}
Thus by Eq.~\eqref{eq:d_Phi_W} and Eq.~\eqref{eq:second_term} we have
\begin{align*}
     \frac{\d \|\hat{\Phi} \overline{W} -\hat{\Phi} \overline{W^{\star}} \|_{2}^{2}}{\d t} \leq -(2\lambda_0 + 2\lambda \Lambda_0) \|\hat{\Phi} \overline{W} - \hat{\Phi}\overline{W^{\star}}\|_2^2.
\end{align*}
By letting $\gamma = 2\lambda_0 + 2\lambda \Lambda_0$ we finish the proof.

\end{proof}

For convenience, we denote $u(t) = \hat{\Phi}(t) \cdot \overline{W}(t) \in \R^n$. Then it is easy to verify that
\begin{align*}
    u_i(t) = \frac{1}{\sqrt{m}}\sum_{r=1}^m a_r \phi(w_r^\top x_i) = f(W(t), x_i), ~~~\forall i\in [n],
\end{align*}
showing that $u(t)$ is the prediction in time $t$. %

\begin{lemma}[Convergence rate]\label{lem:gradient_flow_2}
If we assume $\lambda_{\min}(G(s))\geq \frac{\lambda_0}{2}$ holds for $0\leq s\leq t$, then we have
\begin{enumerate}
    \item $\|u(t) - Y\|_2^2 \leq e^{-(\lambda_0+2\lambda/m) t} \|u(0) - Y\|_2^2;$
    \item $\forall r\in [m], \|w_r(t) - w_r(0)\|_2 \leq \frac{\sqrt{n}\|u(0) - Y\|_2}{\lambda_0 \sqrt{m}}.$
\end{enumerate}
\end{lemma}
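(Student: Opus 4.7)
I would follow the continuous-time NTK proof strategy of \citet{dzps19} (Lemma 3.3) with modifications to accommodate the dropout regularization term $R(W) = \frac{\lambda}{2m}\sum_{i=1}^{n}\sum_{r=1}^{m}\phi(w_r^\top x_i)^2$ identified in Lemma~\ref{lem:explicit-regularization}. First, I would compute $\frac{du_i}{dt}$ via chain rule, using the gradient-flow ODE $\frac{dw_r}{dt} = -\nabla_{w_r}\hat{L}(W) - \nabla_{w_r}R(W)$, which produces a decomposition
\begin{equation*}
\frac{du_i}{dt} = -\sum_{j=1}^{n} G(t)_{ij}(u_j - y_j) + v_i(t),
\end{equation*}
where the first term is the usual NTK contribution (since $G(t)_{ij} = \frac{1}{m}x_i^\top x_j\sum_r \mathbf{1}_{w_r^\top x_i\geq 0, w_r^\top x_j\geq 0}$ arises naturally from $\nabla_{w_r}\hat L$ and the $a_r^2=1$ identity) and $v_i(t)$ captures the regularization gradient $\frac{\lambda}{m}\sum_j \phi(w_r^\top x_j)x_j$ propagated through the chain rule.

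For part 1, taking the inner product with $u - Y$ yields
\begin{equation*}
\frac{d}{dt}\|u(t) - Y\|_2^2 = -2(u-Y)^\top G(t)(u-Y) + 2(u-Y)^\top v(t).
\end{equation*}
The hypothesis $\lambda_{\min}(G(s)) \geq \lambda_0/2$ bounds the first term by $-\lambda_0\|u-Y\|_2^2$. The crux is to show that the regularization contribution satisfies $2(u-Y)^\top v(t) \leq -\frac{2\lambda}{m}\|u-Y\|_2^2$, which requires expanding $v_i(t) = -\frac{\lambda}{m^{3/2}}\sum_j (x_i^\top x_j)\sum_r a_r \mathbf{1}_{w_r^\top x_i\geq 0}\phi(w_r^\top x_j)$ and relating the inner double-sum back to $u_j - y_j$ via the kernel-regression interpretation of Lemma~\ref{lem:explicit_regularization_2} and the positivity of $\hat\Psi$ (so that $\overline{W}^\top \hat\Phi^\top \hat\Phi \hat\Psi \overline{W}$ absorbs a portion of $\|u-Y\|_2^2$, as in Eq.~\eqref{eq:second_term} of Lemma~\ref{lem:gradient_flow}). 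Grönwall's inequality then integrates the resulting differential inequality to give $\|u(t)-Y\|_2^2 \leq e^{-(\lambda_0 + 2\lambda/m)t}\|u(0)-Y\|_2^2$.

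For part 2, I would use $\|w_r(t) - w_r(0)\|_2 \leq \int_0^t \|dw_r(s)/ds\|_2\, ds$ and bound the classical-loss portion of the gradient by
\begin{equation*}
\left\|\nabla_{w_r}\hat L(W(s))\right\|_2 = \left\|\frac{1}{\sqrt{m}}\sum_{i=1}^n (u_i(s) - y_i)a_r x_i \mathbf{1}_{w_r^\top x_i \geq 0}\right\|_2 \leq \frac{\sqrt{n}}{\sqrt{m}}\|u(s)-Y\|_2
\end{equation*}
via Cauchy--Schwarz and $\|x_i\|_2 \leq 1$. Plugging in the bound from part 1, $\|u(s)-Y\|_2 \leq e^{-\lambda_0 s/2}\|u(0)-Y\|_2$, and integrating $\int_0^\infty e^{-\lambda_0 s/2}\, ds = 2/\lambda_0$ gives a bound of the form $\frac{2\sqrt n \|u(0)-Y\|_2}{\lambda_0\sqrt m}$, which matches the target up to constants. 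The regularization gradient $\frac{\lambda}{m}\hat\psi_r w_r$ has operator norm $O(\lambda n / m)$ times $\|w_r\|_2$, and is of lower order for large $m$, so it can be absorbed into the same integral estimate.

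\textbf{Main obstacle.} The hardest step is the clean derivation of the $-\frac{2\lambda}{m}\|u-Y\|_2^2$ contribution from the regularization in $\frac{d}{dt}\|u-Y\|_2^2$: the expression $v_i(t)$ involves mixed sums $\sum_r a_r\mathbf{1}_{w_r^\top x_i\geq 0}\phi(w_r^\top x_j)$ that are neither purely $u_j$ (wrong activation pattern) nor purely $u_i$ (wrong $\phi$), so turning them into a quadratic in $u - Y$ likely requires either a PSD identity exploiting $\phi(z)\mathbf{1}_{z\geq 0} = \phi(z)$ together with $a_r^2 = 1$, or a concentration argument at width $m$ that treats the extra indicator as an effective factor of $1/2$. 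Everything else is routine Grönwall-plus-Cauchy-Schwarz bookkeeping.
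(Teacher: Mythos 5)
Your proposal takes essentially the same route as the paper's proof. The paper works with the frozen-feature kernel-regression dynamics $\frac{\mathrm{d}\overline{W}}{\mathrm{d}t} = -(\hat{\Phi}^\top\hat{\Phi}\overline{W} - \hat{\Phi}^\top Y + \lambda\hat{\Psi}\overline{W})$ rather than differentiating through the network as you do, but for ReLU these gradients coincide almost everywhere (since $\phi(z)\mathbf{1}_{z\geq 0} = \phi(z)$ and $\nabla_{w_r}\phi(w_r^\top x) = x\,\mathbf{1}_{w_r^\top x\geq 0}$), so this difference is cosmetic. Both arguments then split $\frac{\mathrm{d}}{\mathrm{d}t}\norm{u(t)-Y}_2^2$ into the NTK quadratic form, bounded by $-\lambda_0\norm{u(t)-Y}_2^2$ using $\lambda_{\min}(G(s))\geq\lambda_0/2$, plus a regularization contribution, and both handle part 2 by Cauchy--Schwarz on $\norm{\mathrm{d}w_r/\mathrm{d}s}_2$ followed by integrating the exponential decay (the paper simply cites the method of Lemma 3.3 of \citet{dzps19} here; your remark that the integration naturally produces $2\sqrt{n}\norm{u(0)-Y}_2/(\lambda_0\sqrt{m})$ rather than the stated constant is fair).

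The step you flag as the main obstacle is genuine, and you should know that the paper does not close it either. After rewriting the regularization contribution as $-\frac{2\lambda}{m}(u(t)-Y)^\top U(t)$ with $U_j(t) = \frac{1}{\sqrt{m}}\sum_{r=1}^m a_r\, x_j^\top w_r\,\mathbf{1}_{\langle w_r, x_j\rangle\geq 0}\sum_{i=1}^n\mathbf{1}_{\langle w_r, x_i\rangle\geq 0}$, the paper passes directly to $-(\lambda_0 + \frac{2\lambda}{m})\norm{u(t)-Y}_2^2$, which amounts to asserting $(u(t)-Y)^\top U(t)\geq\norm{u(t)-Y}_2^2$ with no justification: $U(t)$ is a reweighting of $u(t)$ by the per-neuron activation counts $\sum_i\mathbf{1}_{\langle w_r,x_i\rangle\geq 0}\in\{0,\dots,n\}$, and neither a sign argument nor a PSD identity is supplied. (The analogous lower bound in Lemma~\ref{lem:gradient_flow}, which bounds $(\overline{W}-\overline{W^\star})^\top\hat{\Phi}^\top\hat{\Phi}\hat{\Psi}(\overline{W}-\overline{W^\star})$ from below by $\Lambda_0$ times the corresponding quadratic form, has the same difficulty, since $\hat{\Phi}^\top\hat{\Phi}$ and $\hat{\Psi}$ need not commute.) So the ``PSD identity or concentration argument'' you say you would need is genuinely missing from the paper as well; note also that a crude absolute-value bound on this term only yields convergence up to an additive error, so some structural argument really is required to obtain the stated rate $\lambda_0+2\lambda/m$. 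In short, your proposal reproduces the paper's argument and honestly isolates the one step that neither you nor the paper actually proves.
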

\begin{proof}
From Eq.~\eqref{eq:nabla_W}, we can express the dynamics by using $u(t)$ as
\begin{align}
    \label{eq:d_u}
    \frac{\d u(t)}{\d t} = & ~ - \hat{\Phi} (\hat{\Phi}^{\top}\hat{\Phi} \overline{W} - \hat{\Phi}^{\top} Y + \lambda\hat{\Psi} \overline{W}) \notag \\
    = & ~ G(t) (Y - u(t)) - \lambda \hat{\Phi} \hat{\Psi} \overline{W}.
\end{align}
Thus we have
\begin{align}
    \label{eq:d_y_u}
    \frac{\d \|u(t) - Y\|_2^2}{\d t} = & ~ 2(u(t) - Y)^\top \big(G(t) (Y - u(t)) - \lambda \hat{\Phi} \hat{\Psi} \overline{W} \big) \notag \\
    = & ~ -2 (u(t) - Y)^\top G(t) (u(t) - Y) - 2\lambda (u(t) - Y)^\top\hat{\Phi} \hat{\Psi} \overline{W} \notag \\
    \leq & ~ - \lambda_0 \|u(t) - Y\|_2^2- 2\lambda (u(t) - Y)^\top\hat{\Phi} \hat{\Psi} \overline{W}.
\end{align}
As for the second term, we have
\begin{align}
    \label{eq:phi_psi_W}
    2\lambda (u(t) - Y)^\top\hat{\Phi} \hat{\Psi} \overline{W} = & ~ \frac{2\lambda}{m} (u(t) - Y)^\top\hat{\Phi}\cdot [\hat{\psi}_1\cdot w_1, \cdots, \hat{\psi}_m\cdot w_m]^\top \notag \\
    = & ~ \frac{2\lambda}{m} (u(t) - Y)^\top\hat{\Phi}\cdot [\sum_{i=1}^n x_i \phi(w_1^\top x_i), \cdots, \sum_{i=1}^n x_i \phi(w_m^\top x_i)]^\top \notag \\
    = & ~ \frac{2\lambda}{m} (u(t) - Y)^\top \cdot [U_1(t),\cdots, U_n(t)]^\top
\end{align}
where for $j\in [n]$, $U_j(t)\in\R$ can be expressed as
\begin{align*}
    U_j(t) = & ~ \frac{1}{\sqrt{m}}\sum_{r=1}^m \big(a_r \mathbf{1}_{\langle w_r, x_j \rangle \geq 0}x_j^\top \cdot \sum_{i=1}^n x_i \phi(w_r^\top x_i)\big) \\
    = & ~ \frac{1}{\sqrt{m}}\sum_{r=1}^m \sum_{i=1}^n a_r x_j^\top (x_i x_i^\top) w_r \cdot \mathbf{1}_{\langle w_r, x_i \rangle \geq 0, \langle w_r, x_j \rangle \geq 0} \\
    = & ~ \frac{1}{\sqrt{m}}\sum_{r=1}^m \Big(a_r x_j^\top w_r \cdot \mathbf{1}_{\langle w_r, x_j \rangle \geq 0}\cdot \sum_{i=1}^n \mathbf{1}_{\langle w_r, x_i \rangle \geq 0}\Big). %
\end{align*}
We denote $U(t) = [U_1(t),\cdots, U_n(t)]^\top\in\R^n$ and have
\begin{align}
    \label{eq:U}
    2\lambda (u(t) - Y)^\top\hat{\Phi} \hat{\Psi} \overline{W} = \frac{2\lambda}{m} (u(t) - Y)^\top \cdot U(t)
\end{align}
and our dynamics becomes
\begin{align}
    \label{eq:new_ode}
    \frac{\d \|u(t) - Y \|_2^2}{\d t} \leq & ~ -\lambda_0 \|u(t) - Y\|_2^2 - \frac{2\lambda}{m} (u(t) - Y)^\top \cdot U(t) \notag \\
    \leq & ~ -(\lambda_0 + \frac{2\lambda}{m}) \|u(t) - Y\|_2^2
\end{align}
showing that $\frac{\d}{\d t}\big(e^{(\lambda_0+2\lambda/m) t}\|u(t) - Y\|_2^2 \big) \leq 0$. Thus $e^{(\lambda_0+2\lambda/m) t}\|u(t) - Y\|_2^2$ is a decreasing function with respect to $t$, and we have
\begin{align*}
    \|u(t) - Y\|_2^2 \leq e^{-(\lambda_0+2\lambda/m) t} \|u(0) - Y\|_2^2.
\end{align*}
As for bounding $\|w_r(t) - w_r(0)\|_2$, we use the same method as in Lemma 3.3 of \cite{dzps19}. Thus we complete the proof.
\end{proof}

Finally, by combining Lemma~\ref{lem:dzps3.1}, \ref{lem:dzps3.2}, \ref{lem:gradient_flow} and \ref{lem:gradient_flow_2}, we have the following convergence result.
\begin{theorem}[Convergence of gradient flow]\label{thm:convergence_1}
Suppose $\lambda_0>0$, $m = \poly(n, 1/ \lambda_0, 1/ \delta)$, then with probability at least $1-\delta$ over the randomness of initialization, we have
\begin{align*}
    \|u(t) - Y\|_2^2 \leq e^{-(\lambda_0 + 2\lambda/m)t} \|u(0) - Y\|_2^2.
\end{align*}
\end{theorem}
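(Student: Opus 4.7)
\textbf{Proof plan for Theorem~\ref{thm:convergence_1}.}
The plan is to run the standard bootstrap argument for wide-network convergence (as in \citet{dzps19}) but combined with the dropout regularization, leveraging the three ingredients already established in this section: the initialization concentration bound (Lemma~\ref{lem:dzps3.1}), the Gram-matrix perturbation bound (Lemma~\ref{lem:dzps3.2}), and the conditional convergence inequality (Lemma~\ref{lem:gradient_flow_2}). The regularization term involving $\hat{\Psi}$ and $\lambda$ only helps (it contributes the extra $2\lambda/m$ in the exponent via the computation in Eq.~\eqref{eq:new_ode}), so the main work is purely in controlling $\lambda_{\min}(G(t))$ along the flow.

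First, I would invoke Lemma~\ref{lem:dzps3.1} with $m = \Omega(n^2 \lambda_0^{-2} \log(n/\delta))$ to obtain $\lambda_{\min}(G(0)) \geq \tfrac{3}{4}\lambda_0$ with probability at least $1-\delta/2$. Next, define the stopping time
\[
t^* = \inf\{\, t \ge 0 : \lambda_{\min}(G(t)) < \tfrac{1}{2}\lambda_0 \,\}.
\]
On $[0, t^*)$ the hypothesis of Lemma~\ref{lem:gradient_flow_2} holds, so both the exponential decay $\|u(t)-Y\|_2^2 \le e^{-(\lambda_0 + 2\lambda/m)t}\|u(0)-Y\|_2^2$ and the weight-movement bound $\|w_r(t)-w_r(0)\|_2 \le \frac{\sqrt{n}\|u(0)-Y\|_2}{\lambda_0 \sqrt{m}}$ are valid for every $r \in [m]$ and $t < t^*$.

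The core step is then a bootstrap/contradiction argument showing $t^* = \infty$. Choose
\[
m \ge C \cdot \frac{n^5 \|u(0)-Y\|_2^2}{\lambda_0^4 \delta^2}
\]
for a sufficiently large constant $C$, so that the weight-movement bound is dominated by the perturbation radius required by Lemma~\ref{lem:dzps3.2}, namely $\frac{\sqrt{n}\|u(0)-Y\|_2}{\lambda_0\sqrt{m}} \le \frac{c\delta\lambda_0}{n^2}$. A standard concentration argument (as in \citet{dzps19}) shows $\|u(0)-Y\|_2 = O(\sqrt{n\log(n/\delta)})$ with probability $1-\delta/2$, so this only costs a $\poly(n,1/\lambda_0,1/\delta)$ factor in $m$. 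Suppose for contradiction that $t^* < \infty$. By continuity of $G(\cdot)$ we would have $\lambda_{\min}(G(t^*)) = \tfrac{1}{2}\lambda_0$, yet the weight bound at $t^*$ together with Lemma~\ref{lem:dzps3.2} forces $\lambda_{\min}(G(t^*)) > \tfrac{1}{2}\lambda_0$, a contradiction. Hence $t^* = \infty$ and the decay inequality of Lemma~\ref{lem:gradient_flow_2} holds for all $t \ge 0$, which is exactly the statement of the theorem.

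The main obstacle I anticipate is the bootstrap step: establishing continuity of $\lambda_{\min}(G(t))$ in $t$ is subtle because $G(t)$ depends on the non-smooth ReLU indicators $\mathbf{1}_{\langle w_r(t), x_i\rangle \ge 0}$, so entries of $G(t)$ can jump when a neuron changes sign on some $x_i$. The remedy, as in prior NTK analyses, is to show that for sufficiently large $m$ at most a small fraction of neurons can flip their activation pattern on any $x_i$ within the allowed movement radius $\frac{c\delta\lambda_0}{n^2}$ (by anti-concentration of $w_r(0)^\top x_i$), so the perturbation of $G(t)$ stays below $\tfrac{1}{4}\lambda_0$ in operator norm as long as the weight bound holds. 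A union bound over $i \in [n]$ and a suitable choice of the polynomial dependence of $m$ on $n, 1/\lambda_0, 1/\delta$ completes the argument and yields the claimed $\poly(n, 1/\lambda_0, 1/\delta)$ width requirement.
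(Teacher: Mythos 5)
Your proposal is correct and follows essentially the same route as the paper: the paper's entire proof is the single sentence ``by combining Lemma~\ref{lem:dzps3.1}, \ref{lem:dzps3.2}, \ref{lem:gradient_flow} and \ref{lem:gradient_flow_2}, we have the following convergence result,'' and the bootstrap/contradiction argument you spell out (initialization bound, stopping time, weight-movement radius fed into the perturbation lemma) is exactly what that combination entails, imported from \citet{dzps19}. Your additional care about the discontinuity of the ReLU indicators in $G(t)$ is a genuine subtlety that the paper glosses over, but it does not constitute a different approach.
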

The above theorem shows that in the over-parameterized setting (when $m$ is large enough), the training loss of the kernel ridge regression problem define in Eq.~\eqref{eq:sc1} converges to $0$ in a linear rate. By comparing our Theorem~\ref{thm:convergence_1} with Theorem 3.2 in \cite{dzps19}, we can find that the introducing of regularization term makes the convergence speed faster, though the improvement is limited. Further notice that in Section~\ref{sec:dropout_KRR} we prove the equivalence between minimizing the dropout loss and the kernel ridge regression problem. So we conclude our results as:
\begin{center}
    \emph{The introducing of sparsity into neural network makes the convergence speed faster, but the improvement is limited due to the over-parameterized scheme.}
\end{center}

\newpage
\section{Method Details}
\label{sec:appx_method_details}

We describe some details of our method.
\subsection{Compute budget allocation}

We describe here a procedure to compute the budget allocation
based on our cost model.
This procedure is more complicated than our simple rule of thumb in
\cref{sec:method}, and tend to produce the same allocation.
For completeness, we include the procedure here for the interested reader.

Given a parameter budget $B$, we find the density of each layer type that
minimize the models' total cost of matrix multiplication.
For example, in Transformers, let $d_a$ and $d_m$ be the density of the
attention and the MLP layers.
Let $s$ be the sequence length and $d$ be the feature size.
The attention layer with density $d_a$ will cost $d_a (n^2 + nd)$, and the fully
connected layers with density $d_m$ will cost $2 d_m nd$.
We then set $d_a$ and $d_m$ to minimize the total cost while maintaining the
parameter budget:
\begin{equation}\label{eq:budget}
  \text{minimize}_{\delta_a, \delta_m} \delta_a (n^2 + nd) + 2 \delta_m n d \quad
  \text{subject to} \quad \text{$\#$ of trainable parameters} \leq B.
\end{equation}
As this is a problem with two variables, we can solve it in closed form.

\subsection{Low-rank in Attention}

In \cref{sec:method}, we describe how to use the sparsity pattern from flat
block butterfly and the low-rank term for weight matrices.
This applies to the linear layer in MLP and the projection steps in the
attention.

We also use the sparse + low-rank structure in the attention step itself.
\citet{scatterbrain} describes a general method to combine sparse and low-rank
attention, where one uses the sparse component to discount the contribution from
the low-rank component to ensure accurate approximation of the attention matrix.

We follow a simpler procedure, which in practice yields similar performance.
We use a restricted version of low-rank of the form a ``global'' sparsity mask
(as shown in \cref{fig:block_sparse_visualization}).
Indeed, a sparse matrix whose sparsity pattern follows the ``global'' pattern is
a sum of two sparse matrices, one containing the ``horizontal'' global components
and one containing the ``vertical'' components.
Let $w$ be the width of each of those components, then each of them has rank at
most $w$.
Therefore, this sparse matrix has rank at most $2w$, and is low-rank (for small $w$).

We also make the global component block-aligned (i.e., set $w$ to be a multiple
of the smallest supported block size such as 32) for hardware efficiency.

\subsection{Comparison to Other Sparsity Patterns for Attention}

In the context of sparse attention, other sparsity patterns such as BigBird and
Longformer also contain a ``global'' component, analogous to our low-rank
component.
Their ``local'' component is contained in the block diagonal part of the flat
block butterfly sparsity pattern.

The main difference that we do not use the random components (e.g., BigBird),
and the diagonal strides from flat block butterfly are not found in BigBird or
Longformer.
Moreover, we apply the same sparsity pattern (+ low-rank) to the linear layers
in the MLP and the projection step in attention as well, allowing our method to
target most neural network layers, not just the attention layer.

\subsection{Sparsity Mask for Rectangular Matrices}

We have described the sparsity masks from flat block butterfly for square
matrices.
For rectangular weight matrices, we simply ``stretch'' the sparsity mask.
The low-rank component applies to both square and rectangular matrices (as shown in~\cref{fig:rec}).
We have found this to work consistently well across tasks.
\begin{figure}[ht]
  \centering
  \includegraphics[width=0.7\linewidth]{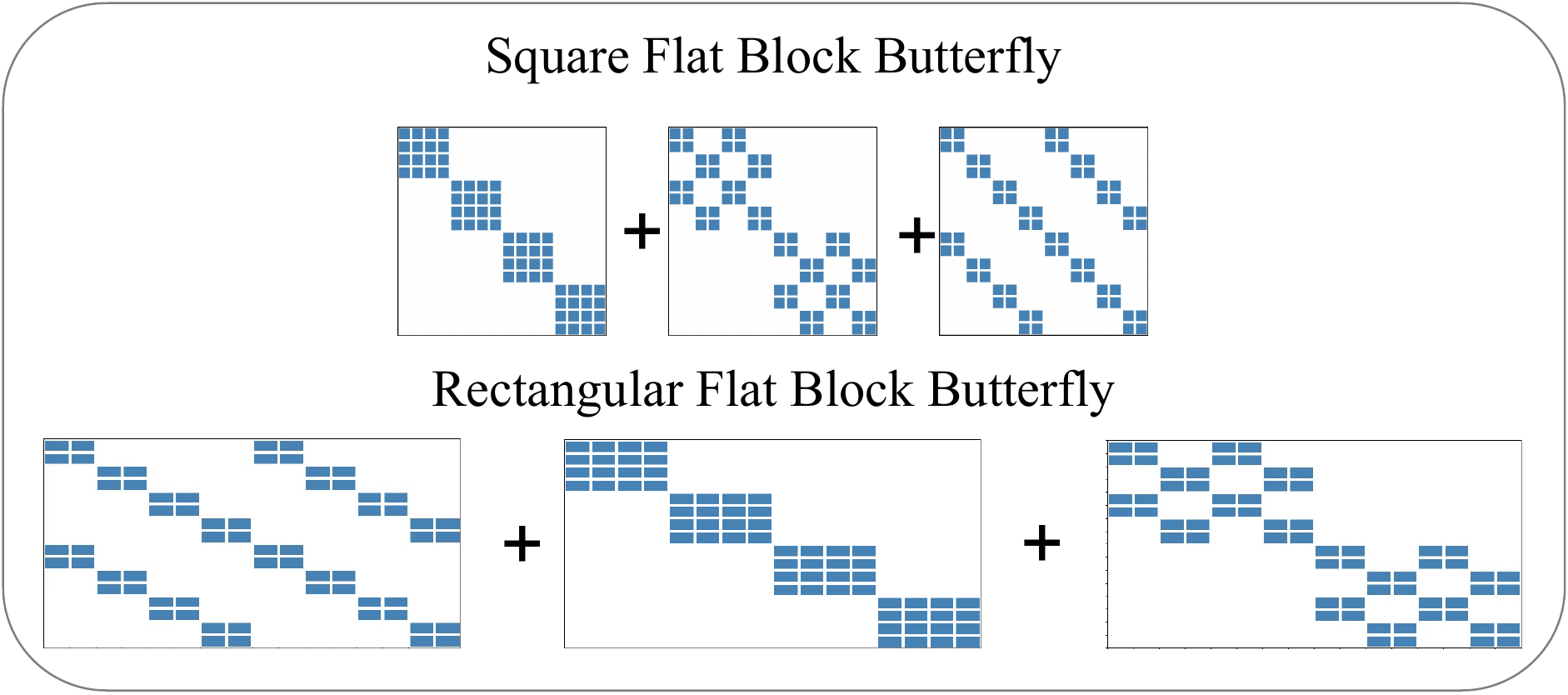}
  \caption{\label{fig:rec} Sparsity Mask for Rectangular Matrices.}
\end{figure}

\newpage
\section{Benchmarking of Butterfly Multiply}
\label{sec:appx_benchmark}

We validate that flat butterfly matrices (sum of factors) can speed up multiplication on GPUs
compared to butterfly matrices (products of factors).

Consider the matrix $M \in \mathbb{R}^{n \times n}$ that can be written as products of butterfly factors of
strides of up $k$ (a power of 2), with residual connection:
\begin{equation*}
  M = (I + \lambda \vB_k^{(n)}) (I + \lambda \vB_{k/2}^{(n)}) \dots (I + \lambda \vB_2^{(n)}).
\end{equation*}
The first-order approximation of $M$ has the form of a flat butterfly matrix
with maximum stride $k$ (\cref{sec:flat_butterfly}):
\begin{equation*}
  M_\mathrm{flat} = I + \lambda (\vB_2^{(n)} + \dots + \vB_{k/2}^{(n)} + \vB_k^{(n)}).
\end{equation*}

Notice that $M$ is a product of $\log_2 k$ factors, each has $2n$ nonzeros, so
multiplying $M$ by a input vector $x$ costs $O(n \log k)$ operations (by
sequentially multiplying $x$ by the factors of $M$).
The flat version $M_\mathrm{flat}$ is a sparse matrix with $O(n \log k)$
nonzeros as well, and the cost of multiplying $M_\mathrm{flat} x$ is also
$O(n \log k)$.
However, in practice, multiplying $M_\mathrm{flat} x$ is much more efficient on
GPUs than multiplying $Mx$ because of the ease of parallelization.

We measure the total time of forward and backward passes of multiplying either
$M_\mathrm{flat} x$ and compare to that of multiplying $Mx$ for different
maximum strides, as shown
in~\cref{fig:flat_butterfly_speed}.
We see that ``flattening'' the products brings up to 3$\times$ speedup.
\begin{figure}[ht]
  \centering
  \includegraphics[width=0.6\linewidth]{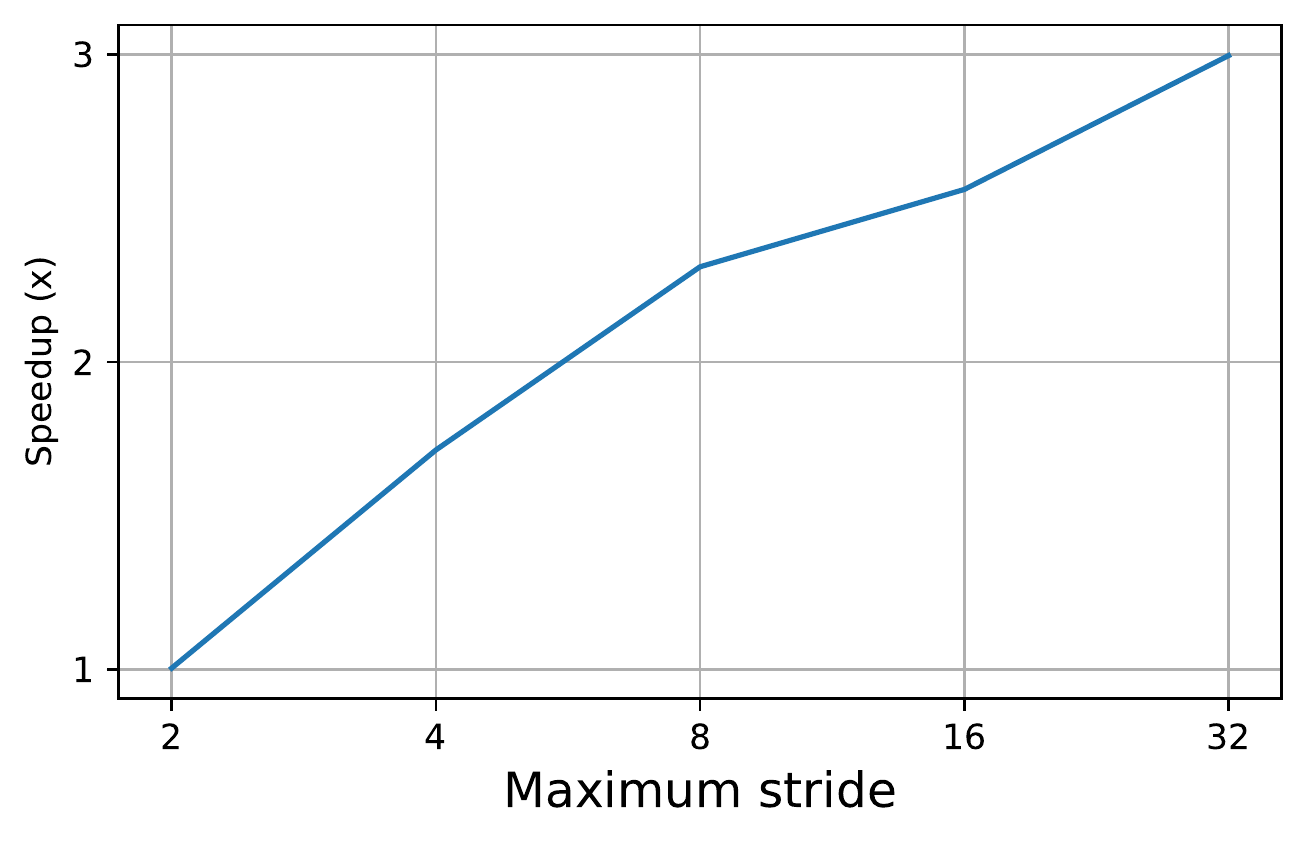}
  \caption{\label{fig:flat_butterfly_speed}Speedup of multiplying
    $M_\mathrm{flat}x$ compared to multiplying $Mx$. Flattening the products
    yields up 3$\times$ speedup.}
\end{figure}

We use matrix size $1024 \times 1024$ with block size 32.
The input batch size is 2048.
We use the block sparse matrix multiply library from
\url{https://github.com/huggingface/pytorch_block_sparse}.
The speed measurement is done on a V100 GPU.

\newpage
\begin{figure}[t]
	\begin{center}
	\scriptsize
		\begin{tabular}{c}
			\includegraphics[width=\linewidth]{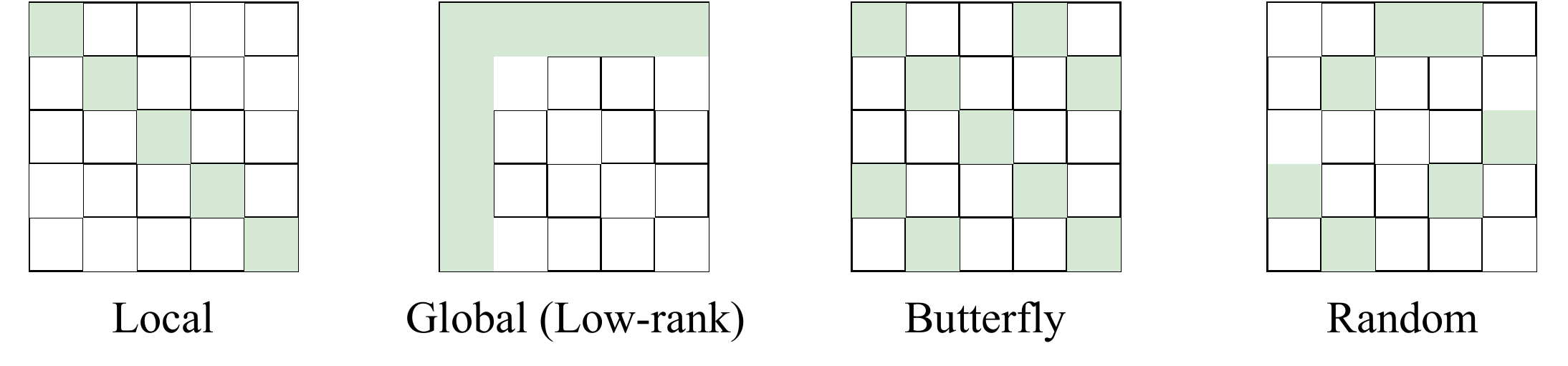}
		\end{tabular}
	\end{center}
	\caption{Sparsity pattern candidate components:  Local corresponds to local interaction of neighboring elements; Global (low-rank) involves the interaction between all elements and a small subset of elements; Butterfly captures the interaction between elements that are some fixed distance apart; Random is common in the pruning literature.}
	\label{fig:block_sparse_visualization} 
\end{figure}

\section{Exhausted Searching Sparsity Patterns for Efficient Sparse Training}
\label{sec:appx_ntk_algorithm}
We describe here our early exploration of searching among different sparsity patterns that has been proposed in the literature.
We use a metric derived from the NTK, which has emerged as one of the standard metric to predict the training and generalization of the model.
We consistently found the butterfly + low-rank pattern to perform among the best.

In~\cref{sec:challenges}, we describe the challenges of selecting sparsity patterns for every model components using the a metric derived from the NTK, followed by our approaches.
Then in , we describe details of empirical NTK computation, which is an important step in our method implementation. 
Last, in~\cref{sec:property}, we highlight important properties of our method -- it rediscovers several classical sparsity patterns, and the sparse models can inherit the training hyperparamters of the dense models, reducing the need for hyperparameters tuning.

\subsection{Challenges and Approaches}
\label{sec:challenges}

\textbf{Challenge 1:} We seek sparsity patterns for each model components that can closely mimic the training dynamics of the dense counterpart. As mentioned in~\cref{thm:mask_regression}, it is NP-hard to find the optimal sparse matrix approximation. Although NTK provides insights and measurement on the ``right'' sparse model, bruteforcely computing NTK for one-layer models with all sparsity patterns is still infeasible.

\textbf{Approach 1: Sparsity Pattern Candidates.}
To address the above challenge, we design our search space to be a limited set of sparsity pattern candidates, each is either a component visualized in \cref{fig:block_sparse_visualization} or the combination of any two of them.
These components encompass the most common types of sparsity pattern used, and can express
We provide the intuition behind these sparsity components:
\begin{itemize}[leftmargin=*,nosep,nolistsep]
  \item Local: this block-diagonal component in the matrix corresponds to local interaction of neighboring elements. This has appeared in classical PDE discretization \citep{collins1971diagonal}, and has been rediscovered in Longformer and BigBird attention patterns.
  \item Global: this component involves interaction between all elements and a small subset of elements (i.e., ``global'' elements).
  This global pattern is low-rank, and this sparse + low-rank structure is common in data science~\citep{udell2019big}, and rediscovered in Longformer and BigBird patterns as well.
  \item Butterfly: this component corresponds to interaction between elements that are some fixed distance apart.
  The many divide-and-conquer algorithms, such as the classical fast Fourier transform~\citep{cooley1965algorithm}, uses this pattern at each step. Butterfly matrices reflects this divide-and-conquer structure, and hence this sparsity component. The sparse transformer~\citep{child2019generating} also found this pattern helpful for attention on image data.
  \item Random: this component is a generalization of sparsity patterns found in one-shot magnitude, gradient, or momentum based pruning~\citep{lee2018snip}. Note that at network initialization, they are equivalent to random sparsity.
\end{itemize}

\textbf{Challenge 2:} Even with a fixed pool of sparsity patterns for each layer, if the model has many layers, the number of possible layer-pattern assignments is exponentially large.

\textbf{Approach 2:} To further reduce the search space,
we constrain each layer type (attention, MLP) to have the same sparsity pattern.
For example, if there are 10 patterns and 2 layer types, the candidate pool is $10^2 = 100$ combinations.

\textbf{Challenge 3:} Computing the empirical NTK on the whole dataset is expensive in time and space, as it scales quadratically in the dataset size.

\textbf{Approach 3:} We compute the empirical NTK on a randomly chosen subset of the data (i.e., a principal submatrix of the empirical NTK matrix).
In our experiments, we verify that increasing the subset size beyond 1000 does not change the choices picked by the NTK heuristic.
The subsampled empirical NTK can be computed within seconds or minutes.

\subsection{Algorithm Description}
\label{sec:algorithm_description}

\begin{algorithm}[t]
{\small
    \begin{algorithmic}[1]
        \State \textbf{Input: model schema $\Omega$, compute budget $B$, dataset subset $X$, sparsity mask candidate set $C$.}
        \State $K_{dense} \leftarrow$ \textsc{NTK}$(f_\theta, X)$. \Comment{\cref{eq:empirical_ntk}}
        \State output sparsity mask assignment $s_\mathrm{out}$, $d_{min} \leftarrow \inf$
        \For {$M_1, \dots, M_{|\Omega|} \in C^{|\Omega|}$} \Comment{Enumerate all sparsity mask candidate combinations}
            \State Let $s$ be the sparsity mask assignment $(t_i, r_i, m_i, n_i) \to M_i$.
            \If {$\text{TotalCompute}(s) < B$} \Comment{\cref{eq:budget}, Check if masks satisfy budget constraint}
                \State Let $M_s$ be the flattened sparse masks
                \State $K_{sparse} \leftarrow \textsc{NTK}(f_{\theta \circ M_s}, X)$
                \State $d_s \leftarrow \textsc{Distance}(K_{dense}, K_{sparse})$ \Comment{\cref{eq:empirical_ntk}}
                \If{$d_{min}>d_s$}
                    \State $d_{min} \leftarrow d_s$, $s_\mathrm{out} \leftarrow s$
                \EndIf
            \EndIf
        \EndFor
        \State\Return $s_\mathrm{out}$ \Comment{Return sparsity mask assignment}
    \end{algorithmic}
    }
    \caption{Model Sparsification}\label{algo:pre}
    \end{algorithm}

Our method targets GEMM-based neural networks, which are networks whose computation is dominated by general matrix multiplies (GEMM), such as Transformer and MLP-Mixer.
As a result, we can view the network as a series of matrix multiplies.
We first define:
\begin{itemize}[leftmargin=*,nosep,nolistsep]
  \item Model schema: a list of layer types $t$ (e.g., attention, linear layers in MLP), number of layers $r$ of that type, and dimension of the matrix multiplies $m \times n$.
  We denote it as $\Omega = \{(t_1, r_1, m_1, n_1), \dots, (t_{|\Omega|}, r_{|\Omega|}, m_{|\Omega|}, n_{|\Omega|})\}$.
  \item A \emph{mask} $M$ of dimension $m \times n$ is a binary matrix $\{0, 1\}^{m \times n}$.
  The compute of a mask is the total number of ones in the matrix: $\mathrm{compute}(M) = \sum_{i, j} M_{ij}$.
  \item A \emph{sparsity pattern} $P_{m \times n}$ for matrix dimension $m \times n$ is a set of masks $\{M_1, ..., M_{|P|}\}$, each of dimension $m \times n$.
  \item A \emph{sparsity mask assignment} is a mapping from a model schema $\Omega$ to masks $M$ belonging to some sparsity pattern $P$: $s \colon (t, r, m, n) \to M$.
  \item Given a set of sparsity patterns $P_1, \dots, P_k$, the set of sparsity mask candidate $C$ is the union of sparsity masks in each of $P_i$: $C = \cup P_i$
  \item A sparsity pattern assignment $s$ satisfies the compute budget $B$ if:
\begin{equation}
\label{eq:budget}
  \mathrm{TotalCompute}(s) := \sum_{\text{layer type } l} \mathrm{compute}(s(t, r, m, n)) \le B.
\end{equation}
  \item Let $\theta$ be the flattened vector containing the model parameters, and let $M_s$ be the flattened vector containing the sparsity mask by the sparsity mask assignment $s$.
  Let $f_\theta(x)$ be the output of the dense network with parameter $\theta$ and input $x$.
  Then the output of the sparse network is $f_{\theta \circ M_s}(x)$.
  \item The empirical NTK of a network $f_\theta$ on a data subset $X = \{x_1, \dots, x_{|X|}\}$ is a matrix of size $|X| \times |X|$:
\begin{equation}
  \label{eq:empirical_ntk}
  \mathrm{NTK}(f_\theta, X)_{i, j} = \left \langle \frac{\partial f_\theta(x_i)}{\partial \theta}, \frac{\partial f_\theta(x_j)}{\partial \theta} \right \rangle.
\end{equation}
\end{itemize}

The formal algorithm to assign the sparsity mask to each layer type is described in \cref{algo:pre}.
The main idea is that, as the set of sparsity mask candidate is finite, we can enumerate all possible sparsity mask assignment satisfying the budget and pick the one with the smallest NTK distance to the dense NTK.
In practice, we can use strategies to avoid explicitly enumerating all possible sparsity mask, e.g. for each sparsity pattern, we can choose the largest sparse mask that fits under the budget.

\subsection{Method Properties: Rediscovering Classical Sparsity Patterns, No Additional Hyperparameter Tuning}
\label{sec:property}
When applied to the Transformer architecture, among the sparsity components described in \cref{sec:challenges}, the NTK-guided heuristic consistently picks the local and global components for \emph{both} the attention and MLP layers.
Moreover, the butterfly component is also consistently picked for image data, reflecting the 2D inductive bias in this component\footnote{Convolution (commonly used in image data) can be written in terms of the fast Fourier transform, which has this same sparse pattern at each step of the algorithm}.
While some of these patterns have been proposed for sparse attention, it is surprising that they are also picked for the MLP layers.
The most popular type of sparsity pattern in MLP layers is top-k (in magnitude or gradient, which at initialization is equivalent to random sparsity).
We have proved that lower NTK difference results in better generalization bound for the sparse model. As expected, we observe that this allows the sparse model to use the same hyperparamters (optimizer, learning rate, scheduler) as the dense model (\cref{sec:experiments}).

\newpage
\section{Experiment Details}
\label{sec:experiment_details}
\subsection{Datasets}
\begin{itemize}
    \item \textbf{Cifar10} \citep{krizhevsky2009learning} consists of 60000 coloured images of resolution 32 $\times$ 32. Each of them belong to one of 10 classes, including airplanes, cars, birds, cats, deer, dogs, frogs, horses, ships, and trucks. Among these, 50000 images are allocated to be the training set and 10000 images the testing set.
    \item \textbf{Cifar100} \citep{krizhevsky2009learning} is similar to Cifar10. It also consists of images of resolution 32 $\times$ 32. In total, there are 60000 images, each of which belongs to one of 100 classes. Each of the 100 classes has 500 images in training set and 100 images in testing set.
    \item \textbf{ImageNet1K} \citep{russakovsky2015imagenet} spans 1000 object classes, containing 1,281,167 training images, 50,000 validation images and 100,000 test images. Although images are collected in different resolutions, in practice they are generally reshaped and cropped into 224 $\times$ 224.
    \item \textbf{WikiText-103} \citep{merity2016pointer} contains articles from the wikipedia page. It extracts verified articles from Wikipedia, which add up to over 100 million tokens. Compared to other datasets, such as Penn Treebank (PTB) \citep{taylor2003penn}, WikiText features a larger vocabulary and preserves original upper/lower cases, punctuation and numbers.

\end{itemize}
\subsection{Model Configurations and Hyperparameter}

We summarize the details required to replicate our experiments below.

\textbf{Baseline Model:} Except for dense model. We choose our baselines for each experiment base on the following. RigL aims to sparsify model weights/parameters, so we use it as a baseline in MLP-based models (Mixer). BigBird focuses on attention matrices, so we used it as a baseline in Transformer-based models (ViT, GPT-2).

\subsubsection{Image Classification}
\begin{table}[!htbp]
    
\centering
\caption{Configuration of the Cifar10 experiments.}
\resizebox{0.9\linewidth}{!}{
\noindent\begin{tabular}{@{}lccccccc@{}}
Model&\multicolumn{1}{c}{Optimizer}&\multicolumn{1}{c}{Weight Decay}&\multicolumn{1}{c}{Learning Rate}&\multicolumn{1}{c}{Drop Path}&\multicolumn{1}{c}{Warmup/Epoch}\\
\midrule
ViT-Small& AdamW & 0.05 & 0.0005 & 0.1& 5/300 \\
Pixelfly-ViT-Small& AdamW & 0.05 & 0.0005 &0& 5/300 \\
ViT-Base& AdamW & 0.05 & 0.0005 &0.1& 5/300 \\
Pixelfly-ViT-Base& AdamW & 0.05 & 0.0005 &0& 5/300 \\
\midrule
Mixer-Small &AdamW& 0.1 &0.0005&0.1& 5/300 \\
Pixelfly-Mixer-Small &AdamW&0.1 &0.0005& 0 & 5/300 \\
Mixer-Base &AdamW& 0.1 &0.0005&0.1& 5/300 \\
Pixelfly-Mixer-Base &AdamW &0.1 &0.0005& 0 & 5/300 \\
\bottomrule
\end{tabular}}
\label{table:}
\end{table}

\begin{table}[!htbp]
    
\centering
\resizebox{0.9\linewidth}{!}{
\noindent\begin{tabular}{@{}lccccccc@{}}
Model&\multicolumn{1}{c}{Optimizer}&\multicolumn{1}{c}{Weight Decay}&\multicolumn{1}{c}{Learning Rate}&\multicolumn{1}{c}{Drop Path}&\multicolumn{1}{c}{Warmup/Epoch}\\
\midrule
ViT-Small& AdamW & 0.05 & 0.0005 & 0.1& 5/300 \\
Pixelfly-ViT-Small& AdamW & 0.05 & 0.0005 &0& 5/300 \\
ViT-Base& AdamW & 0.05 & 0.0005 &0.1& 5/300 \\
Pixelfly-ViT-Base& AdamW & 0.05 & 0.0005 &0& 5/300 \\
\midrule
Mixer-Small &AdamW& 0.1 &0.0005&0.1& 5/300 \\
Pixelfly-Mixer-Small &AdamW&0.1 &0.0005& 0 & 5/300 \\
Mixer-Base &AdamW& 0.1 &0.0005&0.1& 5/300 \\
Pixelfly-Mixer-Base &AdamW &0.1 &0.0005& 0 & 5/300 \\
\bottomrule
\end{tabular}
}
\caption{Configuration of the Cifar100 experiments}
\label{table:}
\end{table}

\begin{table}[!htbp]
    
\centering
\resizebox{0.9\linewidth}{!}{
\noindent\begin{tabular}{@{}lccccccc@{}}
Model&\multicolumn{1}{c}{Optimizer}&\multicolumn{1}{c}{Weight Decay}&\multicolumn{1}{c}{Learning Rate}&\multicolumn{1}{c}{Drop Path}&\multicolumn{1}{c}{Warmup/Epoch}\\
\midrule
ViT-Small& AdamW & 0.05 & 0.001 & 0.1& 5/300 \\
Pixelfly-ViT-Small& AdamW & 0.05 & 0.001 &0& 5/300 \\
ViT-Base& AdamW & 0.05 & 0.001 &0.1& 5/300 \\
Pixelfly-ViT-Base& AdamW & 0.05 & 0.001 &0& 5/300 \\
\midrule
Mixer-Small &AdamW& 0.1 &0.001&0.1& 5/300 \\
Pixelfly-Mixer-Small &AdamW&0.1 &0.001& 0 & 5/300 \\
Mixer-Base &AdamW& 0.1 &0.001&0.1& 5/300 \\
Pixelfly-Mixer-Base &AdamW &0.1 &0.001& 0 & 5/300 \\
\bottomrule
\end{tabular}
}
\caption{Configuration of the ImageNet experiment}
\label{table:}
\end{table}

We report more details on the models, including number of parameters and FLOPs, in \cref{table:flops_imagenet}.

We follow the naming convention in the Vision Transformer paper and MLP-Mixer paper. In particular, ViT-S and ViT-B refers to the small and base ViT models respectively, and 16 refers to the patch size of 16x16. The MLP-Mixer models follows the same convention.

\begin{table}[h]
\centering
\caption{\label{table:flops_imagenet}The performance of Pixelfly and ViT or MLP-Mixer on the ImageNet benchmarks, including the number of parameters and FLOPs. We measure the accuracy and the training time speedup (on ImageNet) compared to the dense model.}
\begin{tabular}{@{}lccccccc@{}}
Model&\multicolumn{1}{c}{ImageNet top-1 acc.}&\multicolumn{1}{c}{Speedup} &\multicolumn{1}{c}{Params} & \multicolumn{1}{c}{FLOPs} \\
\midrule
Mixer-S/16& 72.4& - & 18.5M & 3.8G \\
Pixelfly-Mixer-S/16& 72.6& 1.7$\times$ & 5.9M & 1.3G \\
Mixer-B/16& 75.6& - & 59.9M & 12.6G \\
Pixelfly-Mixer-B/16& 76.3& 2.3$\times$ & 17.4M & 4.3G \\
\midrule
ViT-S/16& 77.7 & - & 48.8M & 9.9G \\
Pixelfly-ViT-S/16& 77.5 & 1.9$\times$ & 16.9M & 3.6G \\
ViT-B/16& 78.5 & - & 86.6M  & 17.6G \\
Pixelfly-ViT-B/16& 78.6 & 2.0$\times$ & 28.2M & 6.1G \\
\bottomrule
\end{tabular}
\end{table}

\subsubsection{Language Modeling}
We report more details on the models, including number of parameters and FLOPs, in \cref{table:gpt_flop} and \cref{table:wt103}.

\begin{table}[!h]
\caption{The performance of Pixelfly, BigBird and GPT-2-Small on WikiText-103, including the number of parameters and FLOPs. We measure the perplexity and the training speed up.}
  \centering
      	\resizebox{0.8\linewidth}{!}{
\centering
\setlength{\tabcolsep}{10pt}
\renewcommand{\arraystretch}{1.15}
\begin{tabular}{c||c|c|c|c}
\specialrule{.15em}{.05em}{.05em}
\multirow{1}{*}{{\bf Model} } & \multicolumn{1}{c|}{\multirow{1}{*}{WikiText-103 (ppl)}}
                              & \multicolumn{1}{c|}{\multirow{1}{*}{Speedup}}
                              & \multicolumn{1}{c|}{\multirow{1}{*}{Params}}
                              & \multicolumn{1}{c}{\multirow{1}{*}{FLOPS}}\\
\hline
GPT-2-Small &  22.2 & - & 117M& 48.4G\\
\cline{1-5}
BigBird & 23.3  & 0.96$\times$ & 117M& 40.2G\\
\cline{1-5}
Pixelfly& 22.5  & 2.1$\times$ &68M & 18.5G\\
\midrule
\hline
GPT-2-Medium &  20.9 & - & 345 M& 168G\\
\cline{1-5}
BigBird & 21.5  & 1.1$\times$ & 345 M& 134G\\
\cline{1-5}
Pixelfly& 21.0  & 2.5$\times$ &203M & 27G\\
\specialrule{.15em}{.05em}{.05em}
\end{tabular}
}
\label{table:gpt_flop}
\end{table}

\begin{table}[!h]
\centering
\caption{Configuration of the WikiText103 experiments}
\resizebox{0.9\linewidth}{!}{
\noindent\begin{tabular}{@{}lccccccc@{}}
Model&\multicolumn{1}{c}{Optimizer}&\multicolumn{1}{c}{Weight Decay}&\multicolumn{1}{c}{Learning Rate}&\multicolumn{1}{c}{Dropout}&\multicolumn{1}{c}{Warmup/Epoch}\\
\midrule
GPT-2-Small& Adam & 0.1 & 0.0001 & 0.1& 5/100 \\
Pixelfly& Adam & 0.1 & 0.0001 & 0.1 & 5/100 \\
\bottomrule
\end{tabular}
}
\label{table:wt103}
\end{table}
\pagebreak

\subsection{Measuring Empirical NTK}
\label{app:ntk_exp}
The Empirical NTK is a rough estimation of the real NTK, in which the width of the neural net goes to infinity. As the width grows, the kernel gets closer to its infinite-width limit. Fortunately, both our models of interest, MLP-Mixer and Vision Transformer, are wide and overly parameterized. Therefore they are only one step away from the real NTK domain. This allows us to use the Empirical NTK to approximately predict their training behaviors.

As described in equation \ref{eq:empirical_ntk}, we first compute the gradient of each data sample, then we compute pair-wise product to construct the Empirical NTK. Although we use a relatively small dataset, it's still expensive to build a kernel for large models, such as ViTs and MLP-Mixers. In practice, we find that it's sufficient to compute kernels for a subsampled dataset.

MLP-Mixer and Vision Transformer each represent one type of module of interest for our sparsification. In MLP-Mixer, we study the sparse behavior of the Linear module, whereas, in Vision Transformer, we mainly focus on sparsifying attention. All models are first sparsified to around $10 \%$ of the original dense compute. Then we compare their NTK kernels with their original dense kernel. We run three random seeds to eliminate noise, i.e., three different initializations for each pair of configurations. We report the mean relative difference between the kernels with respect to the norm of the dense kernel.

\subsection{Transfer Learning Experiments}
\label{app:throughput}
We conduct extended experiments to test the generalization of our pretrained sparse models on downstream tasks. Specifically, we finetune Pixelfly pretrained model (ImageNet) on CIFAR-10 and show that it get 99.03\% accuracy compared to 98.77\% on our pretrained dense ViT-B/16 model. In addition, we see more than $2\times$ speed up on downstream task fine-tuning process as well.

\subsection{Microbenchmarking}
\label{app:study}
In this section, we perform microbenchmarking on a  4K$\times$ 4K sparse matrix multiplication. We aim to show that Pixelfly patterns are far more hardware friendly than random patterns. For a 4K$\times$4K matrix, expected density is the number of non-zero entries/(4K$\times$4K) ; actual density is the number of accessed entries/(4K$\times$4K), e.g. even if there is only one non-zero, 32$\times$32 entries would be accessed because the hardware block size is 32$\times$32. 

When random patterns are generated with small block size, (e.g $1\times1$, $2\times2$), the resources, such as memory access and computes(denoted by Actual Density), required  to compute a random sparse matrix of density $1.25\%$ are equivalent to computing a dense matrix multiplication. This is further reflected in the latency: As pattern block size shrinks, deviating from the hardware block size of $32\times 32$, the random patterns' latency worsens, whereas the Pixelfly remains efficient.
Vanilla Butterfly is 5$\times$ slower than Pixelfly as expected,  because (1) it does not take advantage of the hardware property -- not structured sparsity(2) it is a series of products. 
\begin{table}[!htbp]
\centering
\resizebox{0.7\linewidth}{!}{
\noindent\begin{tabular}{@{}lccccccc@{}}
Pattern&\multicolumn{1}{c}{Block size}&\multicolumn{1}{c}{Expected Density}&\multicolumn{1}{c}{Actual Density}&\multicolumn{1}{c}{Latency(ms)}\\
\midrule
 & 1$\times$1 & 1.25\% & 100\% & 9.4 \\
 & 2$\times$2& 2.5\% & 99.84\% & 9.3 \\
 & 4$\times$4 & 5\% & 96.24\% & 9.04 \\
Random & 6$\times$6 & 10\% & 93.66\% & 8.8\\
 & 8$\times$8 & 20\%& 81.89\% & 7.7 \\
 & 16$\times$16 & 40\% &34.52\%& 3.3 \\
 & 32$\times$32 & 80\%& 10.15\% & 1.0 \\
\midrule
Butterfly & 1$\times$1 & 10\% & 62.50\% & 5.2 \\
\bottomrule
 & 1$\times$1 & 1.25\% & 4.62\% & 0.48 \\
 & 2$\times$2& 2.5\% & 5.38\% & 0.56 \\
 & 4$\times$4 & 5\% & 6.13\% & 0.63 \\
Pixelfly & 6$\times$6 & 10\% & 9.64\% & 0.96\\
 & 8$\times$8 & 10\%& 10.58\% & 1.05 \\
 & 16$\times$16 & 10\% &11.30\%& 1.12 \\
 & 32$\times$32 & 10\%& 10.58\% & 1.04 \\
 \bottomrule
\end{tabular}
}
\caption{Microbenchmarking of different patterns. Given GPU processes the matrix block by block of size 32 $\times$ 32, random block pattern's latency increases as the block size shrinks, while Pixelfly remains efficient. We measure the latency by averaging 100 runs of batch size 4096 for each configuration.}
\label{table:throughput}
\end{table}

\subsection{Efficient Implementation of Pixelfly}
\label{subsec:efficient_implementation}

We run all of our experiments on V100 GPUs.
We rely on efficient implementation of block sparse matrix multiply and block
sparse attention from the libraries Triton
(\url{https://github.com/openai/triton}) and
\url{https://github.com/huggingface/pytorch_block_sparse}.
For the low-rank part, we rely on efficient (dense) matrix multiplies from cuBLAS.
In particular, to multiply the input $x$ by the low-rank matrix $U V^\top$, we
multiply $U (V^\top x)$.

We keep the same number of training epochs as that of the dense models (e.g.,
on ImageNet, the dense model and the Pixelfly model are trained for 300 epochs).
The training speedup of the Pixelfly models is due to faster time per
epoch.

We do not use 2:4 sparsity (available on Ampere GPUs such as A100).
Such fine-grained sparsity is orthogonal to our approach, and we expect that
future work incorporating both 2:4 sparsity and block sparsity to yield further speedup.

\subsection{Ablation: Speed-Accuracy Tradeoff of Pixelfly}
\label{subsec:speed_accuracy_tradeoff}

We conduct an ablation experiment to examine the speed-accuracy trade of
Pixelfly: on the ImageNet dataset and the Mixer-B/16 model, we replace the dense
matrices with flat block butterfly + low-rank matrices, while varying the
compute / parameter budget.
We plot the speed-accuracy tradeoff in \cref{fig:speed_accuracy_tradeoff}.
\begin{figure}[ht]
  \centering
  \includegraphics[width=0.8\linewidth]{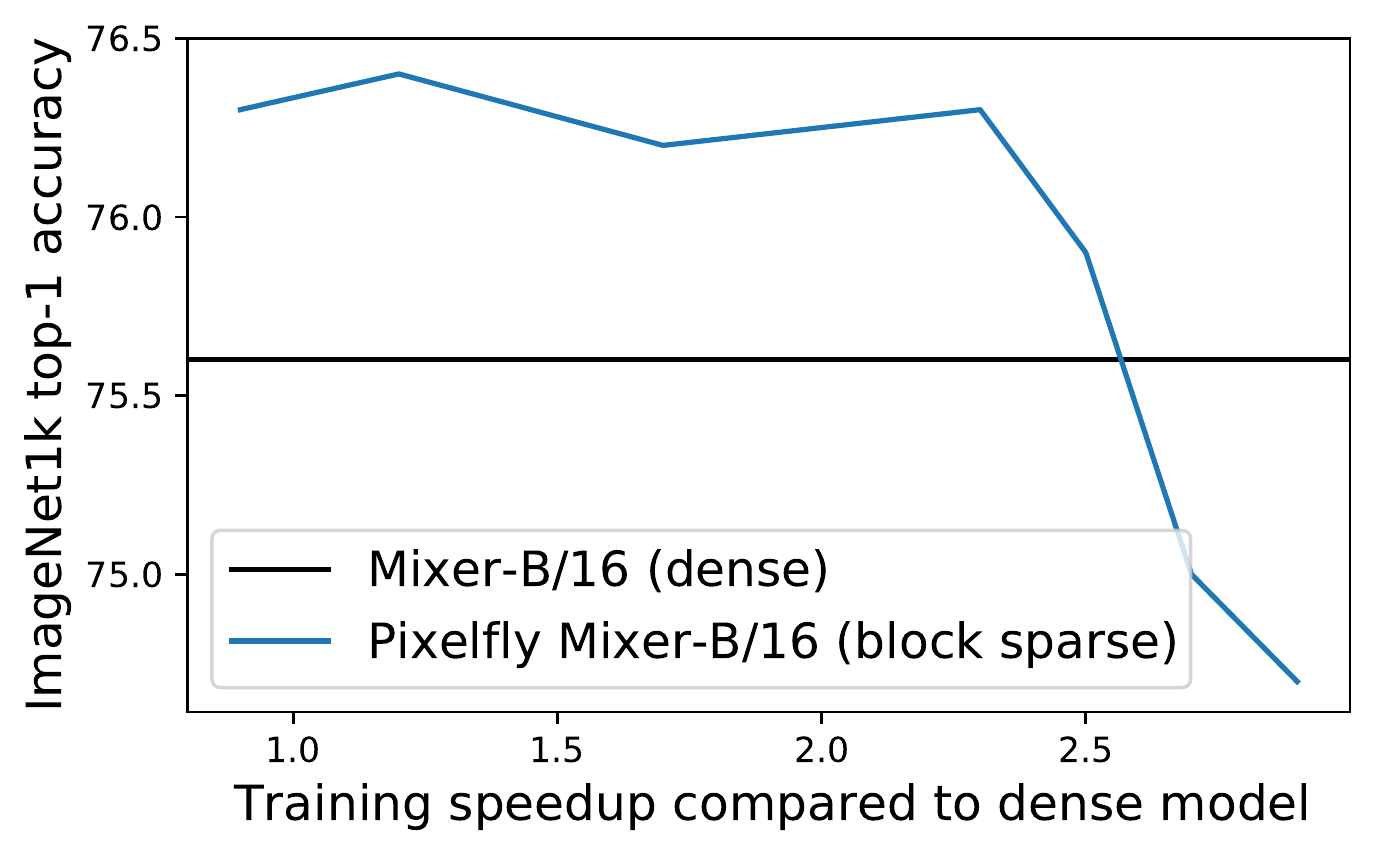}
  \caption{\label{fig:speed_accuracy_tradeoff}Speed-accuracy tradeoff of Pixelfly on
    ImageNet classification, with Mixer-B/16 as the dense model. Pixelfly
    maintains or exceeds the accuracy of the dense model, up to around
    2.3$\times$ speedup (or around 30\% of the number of parameters).
    Performance degrades when the Pixelfly model has fewer than 30\% of the number
    of parameters.}
\end{figure}

\subsection{Comparison Against Original Butterfly}
\label{subsec:original_butterfly}

We compare Pixelfly against original Butterfly
matrices~\citep{dao2020kaleidoscope} on the ImageNet dataset and Mixer-B/16
dense model.
We present the results in \cref{table:original_butterfly}.
We notice another benefit of Pixelfly compared to Butterfly: it trains more stably and requires less careful initialization. Since Butterfly is a product of many factors, it requires careful initialization, otherwise the activation and gradient will be very large or very small.

\begin{table}[h]
  \centering
  \caption{\label{table:original_butterfly}The performance of Pixelfly and
    original Butterfly on MLP-Mixer on the ImageNet benchmarks.}
  \begin{tabular}{@{}lccccccc@{}}
    Model&\multicolumn{1}{c}{ImageNet top-1 acc.}&\multicolumn{1}{c}{Speedup} &\multicolumn{1}{c}{Params} & \multicolumn{1}{c}{FLOPs} \\
    \midrule
    Mixer-B/16& 75.6& - & 59.9M & 12.6G \\
    Butterfly-Mixer-B/16& 76.1& 0.8$\times$ & 17.4M & 4.3G \\
    Pixelfly-Mixer-B/16& 76.3& 2.3$\times$ & 17.4M & 4.3G \\
    \bottomrule
  \end{tabular}
\end{table}

\newpage

\section{Extended Related Work}
\label{app:related}
In this section, we extend the related works referenced in the main paper and discuss them in detail.

\subsection{Neural Pruning} 
Our work is loosely related to neural network pruning. By iteratively eliminating neurons and connections, pruning has seen great success in compressing complex models.\citet{han2015deep,han2015learning} put forth two naive but effective algorithms to compress models up to 49x and maintain comparable accuracy. \citet{li2016pruning} employ filter pruning to reduce the cost of running convolution models up to 38 $\%$, \citet{NIPS2017_a51fb975} prunes the network at runtime, hence retaining the flexibility of the full model. \citet{dong2017learning} prunes the network locally in a layer by layer manner.  \citet{sanh2020movement} prunes with deterministic first-order information, which is more adaptive to pretrained model weights. \citet{lagunas2021block} prunes transformers models with block sparsity pattern during fine-tuning, which leads to real hardware speed up while maintaining the accuracy. \citet{zhu2017prune} finds large pruned sparse network consistently outperform the small dense networks with the same compute and memory footprints. Although both our and all the pruning methods are aiming to produce sparse models, we differ in our emphasis on the overall efficiency, whereas pruning mostly focuses on inference efficiency and disregards the cost in finding the smaller model.\\

\subsection{Lottery Ticket Hypothesis} 
Models proposed in our work can be roughly seen as a class of manually constructed lottery tickets. Lottery tickets \citet{frankle2018lottery} are a set of small sub-networks derived from a larger dense network, which outperforms their parent networks in convergence speed and potentially in generalization. A huge number of studies are carried out to analyze these tickets both empirically and theoretically: \citet{morcos2019one} proposed to use one generalized lottery tickets for all vision benchmarks and got comparable results with the specialized lottery tickets; \citet{frankle2019stabilizing} improves the stability of the lottery tickets by iterative pruning; \citet{frankle2020linear} found that subnetworks reach full accuracy only if they are stable against SGD noise during training; \citet{orseau2020logarithmic} provides a logarithmic upper bound for the number of parameters it takes for the optimal sub-networks to exist; \citet{pensia2020optimal} suggests a way to construct the lottery ticket by solving the subset sum problem and it's a proof by construction for the strong lottery ticket hypothesis. Furthermore, follow-up works \citep{liu2020finding, wang2020picking, tanaka2020pruning} show that we can find tickets without any training labels.\\

\subsection{Neural Tangent Kernel} 

Our work rely heavily on neural tangent kernel in theoretical analysis. Neural Tangent Kernel \citet{jacot2018neural} is first proposed to analyse the training dynamic of infinitely wide and deep networks. The kernel is deterministic with respect to the initialization as the width and depth go to infinity, which provide an unique mathematical to analyze deep overparameterized networks. Couples of theoretical works are built based upon this: \cite{lee2019wide} extend on the previous idea and prove that finite learning rate is enough for the model to follow NTK dynamic. \citet{arora2019exact} points out that there is still a gap between NTK and the real finite NNs. \citet{cao2020generalization} sheds light on  the good generalization behavior of overparameterized deep neural networks. \citet{arora2019fine} is the first one to show generalization bound independent of the network size. Later, some works reveal the training dynamic of models of finite width, pointing out the importance of width in training: \citet{hayou2019training} analyzes stochastic gradient from the stochastic differential equations' point of view; Based on these results, we formulate and derive our theorems on sparse network training.\\

\subsection{Overparameterized Models} 
Our work mainly targets overparameterized models. In \citet{nakkiran2019deep},  the double descendent phenomenon was observed. Not long after that, \cite{d2020triple} discover the triple descendent phenomenon. It's conjectured in both works that the generalization error improves as the parameter count grows. On top of that, \citet{arora2018optimization} speculates that overparameterization helps model optimization,
and without "enough" width, training can be stuck at local optimum. Given these intuitions, it's not surprising that the practitioning community is racing to break the record of the largest parameter counts: The two large language models, GPT-2 and GPT-3 \citep{radford2019language, brown2020language}, are pushing the boundary on text generation and understanding; Their amazing zero-shot ability earn them the title of foundation models \citep{bommasani2021opportunities}. On the computer vision side, \citet{dosovitskiy2020image, tolstikhin2021mlp, zhai2021scaling} push the top-1 accuracy on various vision benchmarks to new highs after scaling up to 50 times the parameters; \citet{naumov2019deep} shows impressive results on recommendation with a 21 billion large embedding; \citet{jumper2021highly} from DeepMind solve a 50 year old grand challenge in protein research with a 46-layer Evoformer. In our work, we show that there is a more efficient way to scale up model training through sparsification and double descent only implies the behavior of the dense networks.

\end{document}